\documentclass[11pt,letterpaper]{article}

\usepackage{amsmath,amsthm,amssymb,amscd,amstext,amsfonts}
\usepackage{color,verbatim,graphicx,fullpage,url}
\usepackage{xcolor}
\usepackage{algorithm}
\usepackage{algorithmic}
\usepackage{nicefrac}
\usepackage{fullpage,tikz,enumitem}
\usepackage{tikz-cd}
\usepackage{bm}
\usepackage{physics}
\usepackage{todonotes}
\usepackage{mathtools}
\usepackage{thm-restate}
\usepackage[numbers]{natbib}

\usepackage[breaklinks=true, linktocpage=true,]{hyperref}
\hypersetup{
    colorlinks = true,
    linkcolor={purple},
    urlcolor={purple},
    citecolor={blue!80!black}
}

\usepackage[nameinlink, noabbrev, capitalize]{cleveref}

\usepackage[utf8]{inputenc}
 
\Crefname{enumi}{Property}{Properties}

\theoremstyle{plain}

\newtheorem{thm}{Theorem}[section]
\newtheorem{theorem}[thm]{Theorem}

\crefname{atheorem}{Theorem}{Theorems}
\Crefname{atheorem}{Theorem}{Theorems}

\newtheorem{conjecture}[thm]{Conjecture}

\newtheorem{corollary}[thm]{Corollary}
\newtheorem{lemma}[thm]{Lemma}
\newtheorem{proposition}[thm]{Proposition}
\newtheorem{definition}[thm]{Definition}
\newtheorem{example}[thm]{Example}
\newtheorem{problem}[thm]{Problem}

\newtheorem{claim}[thm]{Claim}
\newtheorem*{claim*}{Claim}

\newtheorem{remark}[thm]{Remark}

\RenewCommandCopy{\theHtheorem}{\thetheorem}

\renewcommand\norm[1]{\left|\!\left|#1\right|\!\right|}

\renewcommand{\abs}[1]{|#1|}

\newcommand{\set}[1]{\{#1\}}
\newcommand{\Set}[1]{\left\{#1\right\}}

\newcommand{\inp}[2]{{\left\langle #1,#2 \right\rangle}}            % inner product

\DeclareMathOperator{\Ex}{\mathbb{E}}           % expected value 
\newcommand{\N}{\mathbb{N}}

\newcommand{\RR}{\mathbb{R}}
\newcommand{\bbS}{\mathbb{S}}

\newcommand{\bZ}{\mathbb{Z}}

\newcommand{\cF}{\mathcal F}
\newcommand{\cI}{\mathcal I}

\newcommand{\cX}{\mathcal X}

\newcommand{\bS}{\mathbf S}

\newcommand{\PG}{\textnormal{PG}}

\newcommand{\eps}{\varepsilon}

\usepackage{ifthen}
\newcommand{\agnorm}[2][]{
	\ifthenelse{\equal{#2}{}}{
		\widetilde{\gamma}_2^{#1}
	}{
		\widetilde{\gamma}_2^{#1}(#2)
	}
}

\DeclareFontFamily{U}{mathx}{}
\DeclareFontShape{U}{mathx}{m}{n}{<-> mathx10}{}
\DeclareSymbolFont{mathx}{U}{mathx}{m}{n}
\DeclareMathAccent{\widecheck}{0}{mathx}{"71}

\newcommand{\cC}{\mathcal{C}}
\newcommand{\cS}{\mathcal{S}}
\newcommand{\cJ}{\mathcal{J}}
\newcommand{\cH}{\mathcal{H}}
\newcommand{\cA}{\mathcal{A}}
\newcommand{\cK}{\mathcal{K}}
\newcommand{\cD}{\mathcal{D}}
\newcommand{\cE}{\mathcal{E}}
\newcommand{\cL}{\mathcal{L}}

\newcommand{\cP}{\mathcal{P}}
\newcommand{\cU}{\mathcal{U}}
\newcommand{\E}{\mathbb{E}}

\DeclareMathOperator{\VCdim}{\textnormal{\textsc{vc}}}

\DeclareMathOperator{\LR}{\textnormal{\textsc{lr}}}

\DeclareMathOperator{\tdim}{Tdim}
\DeclareMathOperator{\edim}{Edim}

\DeclareMathOperator{\sign}{sign}

\newcommand{\TV}{\mathtt{TV}}

\renewcommand{\cH}{\mathcal{H}}
\renewcommand{\bS}{\mathbb{S}}
\newcommand{\ZZ}{\mathbb{Z}}
\newcommand{\Z}{\mathbb{Z}}

\renewcommand{\Pr}{\mathbb{P}}

\newcommand{\height}{\operatorname{H}}

\DeclareMathOperator{\conv}{conv}                                             % convex hull
\DeclareMathOperator{\supp}{supp}                                             % support
\DeclareMathOperator{\loss}{loss}                                             % loss
\DeclareMathOperator{\signrank}{signrk}                                     % signrank

\let\emptyset\varnothing

\newcommand{\bcube}{{\{ \pm 1 \}^{\cX}}}

\newcommand{\amap}{\xrightarrow{\bZ_2}}
\DeclareMathOperator{\Ind}{Ind}
\newcommand{\IndZ}{\Ind_{\bZ_2}}
\newcommand{\coIndZ}{\coInd_{\bZ_2}}
\DeclareMathOperator{\coInd}{coInd}
\title{Sign-Rank, Index, and List Replicability:  Connections and Separations}

\author{
    Ari Blondal\thanks{McGill University, \texttt{ari.blondal@mail.mcgill.ca, hamed.hatami@mcgill.ca}. Hamed Hatami is supported by an NSERC grant.} \and Hamed Hatami\footnotemark[1] \and
    Pooya Hatami~\thanks{Ohio State University, \texttt{\{hatami.2, lalov.1, tretiak.2\}@osu.edu}} \and  Chavdar Lalov\footnotemark[2] \and Sivan Tretiak\footnotemark[2]
}

\date{July 2026}

\begin{document}

\maketitle

\begin{abstract}
    In learning theory, the sign-rank of a binary concept class captures the smallest dimension in which it can be represented by points and halfspaces. Despite tremendous interest, lower bounds on sign-rank are notoriously difficult to come by. Two recent approaches to the problem establish lower bounds on sign-rank by measures that are easier to analyze: the $\bZ_2$-index and the list replicability number.

    We order these measures, showing that the $\bZ_2$-index is upper-bounded by a linear function of the list replicability number. As a main consequence, we obtain a strong separation between sign-rank and $\bZ_2$-index, thereby resolving a question of Frick, Hosseini, and Vasileuski.

    This motivates a thorough study of list replicability, the stronger of the two lower-bounding measures. We establish upper bounds on the list replicability number by two combinatorial measures: height and minimum star number. We also prove a fundamental composition result, showing that the product of two concept classes has list replicability number bounded by the sum of the list replicability numbers of the two classes.
\end{abstract}

\newpage 

\section{Introduction}

In this paper, we study sign matrices and finite concept classes via three complementary notions of complexity: a geometric notion (sign-rank), a topological notion ($\Z_2$-index), and an algorithmic learning-theoretic notion (list replicability). We show that the list replicability number lies between $\Z_2$-index and sign-rank, and use this to obtain a strong separation between $\Z_2$-index and sign-rank.  We also show that several existing $\Z_2$-index upper bounds in fact hold at the stronger level for list replicability, and develop new upper bounds, lower bounds, and closure properties for list replicability itself.

\paragraph{Sign-rank.} Sign-rank is a fundamental and well-studied parameter in learning theory that captures the smallest dimension in which a binary classification problem admits a linear representation. Formally, the \emph{sign-rank} of a matrix $A$ with entries in $\set{+1,-1}$ is the minimum rank of a real matrix $B$ with  $\sign(B_{i,j}) = A_{i,j}$ for all entries $i,j$.  

Given a binary concept class $\cC \subseteq \{\pm 1\}^{\cX}$ over a \emph{finite} domain~$\cX$, the \emph{sign-rank} $\signrank(\cC)$ is the sign-rank of the $|\cC| \times |\cX|$ matrix $A$ defined by $A_{c, x} = c(x)$. Equivalently, it is the smallest~$d$ for which there exist embeddings $\{u_c \in \mathbb{R}^d\}_{c \in \cC}$ and $\{v_x \in \mathbb{R}^d\}_{x \in \cX}$ satisfying
\begin{equation}
\label{eq:dimension_complexity}
c(x) = \sign \inp{u_c}{v_x}
\quad \text{for all } c \in \cC,\; x \in \cX.
\end{equation}

Classical results on sign patterns of polynomials imply that most $N \times N$ sign matrices have sign-rank $\Omega(N)$~\cite[Lemma~22]{alon2016sign}. Nevertheless, proving even super-constant lower bounds for well-structured matrices that lack pseudorandom properties has remained elusive~\cite{hatami2022lower}. A promising new approach was recently proposed by Frick, Hosseini, and Vasileuski~\cite{frick2026signrank}, who developed a topological framework for sign-rank lower bounds based on the $\ZZ_2$-index of a space associated with the sign matrix. We now describe this framework.

\paragraph{The $\ZZ_2$-index of a concept class.} A distribution $\mu$ over $\cX \times \{\pm 1\}$ is \emph{realizable} by a concept $c \in \cC$ if every pair $(x,b)$ in its support satisfies $b = c(x)$. Given $\cC$, let $\cC^{\pm} \coloneqq \cC \cup \{-c : c \in \cC\}$ denote its \emph{antipodal completion}, and let $\Delta_{\cC^{\pm}}$ be the set of all distributions realizable by some $c \in \cC^{\pm}$. Equipped with the total variation metric, $\Delta_{\cC^{\pm}}$ becomes a topological space.

Recall that a \emph{$\ZZ_2$-action} on a topological space $X$ is a continuous involution $\tau\colon X \to X$ (i.e., $\tau \circ \tau = \mathrm{id}$); it is \emph{free} if $\tau$ has no fixed points. The two examples relevant to us are:
\begin{itemize}
\item The space $\Delta_{\cC^{\pm}}$ with the label-negation action $\mu \mapsto -\mu$, where $(-\mu)(x,b) \coloneqq \mu(x,-b)$.  
\item The unit sphere $\mathbb{S}^d \subset \RR^{d+1}$ with the antipodal action $x \mapsto -x$.
\end{itemize}

Let $X$ be a topological space with a free $\ZZ_2$-action $\tau$. A \emph{continuous} map $\Phi: X \to\mathbb S^d$ is \emph{$\ZZ_2$-equivariant} if it preserves the $\ZZ_2$-action: $\Phi(\tau(x)) = -\Phi(x)$ for all $x \in X$. The \emph{$\ZZ_2$-index} of $X$ is
\[
\Ind_{\ZZ_2}(X) \;\coloneqq\; \min\bigl\{d : \text{there exists a $\ZZ_2$-equivariant map } X \to \mathbb{S}^d\bigr\}.
\]
In particular, $\Ind_{\ZZ_2}(\mathbb{S}^n) = n$; this is in fact equivalent to the Borsuk--Ulam theorem.

Frick, Hosseini, and Vasileuski~\cite{frick2026signrank} defined the \emph{$\ZZ_2$-index of a concept class} $\cC$ as $\Ind_{\ZZ_2}(\cC) \coloneqq \Ind_{\ZZ_2}(\Delta_{\cC^{\pm}})$, where $\Delta_{\cC^{\pm}}$ is equipped with the label-negation $\mathbb{Z}_2$-action described above. 

Dual to the $\ZZ_2$-index is the \emph{$\ZZ_2$-coindex} of a concept class $\cC$, which is the dual invariant obtained by reversing the direction of the equivariant map. Namely, $\coIndZ(\cC)$ is the largest $d$ for which there is a $\ZZ_2$-equivariant map $\mathbb S^d\to \Delta_{\cC^\pm}$. The $\Z_2$-coindex was previously studied under the name \emph{spherical dimension} in~\cite{chornomaz2025spherical}. Note that, by the Borsuk--Ulam theorem, \[\coInd_{\ZZ_2}(\cC)\le \Ind_{\ZZ_2}(\cC).\]

\paragraph{Connection to sign-rank.}
As observed in ~\cite{frick2026signrank}, the sign-rank decomposition~\eqref{eq:dimension_complexity} naturally produces a $\ZZ_2$-equivariant map from $\Delta_{\cC^{\pm}}$ to $\mathbb{S}^{d-1}$: Suppose $\signrank(\cC) \le d$, with embeddings $\{u_c\}_{c \in \cC}$ and $\{v_x\}_{x \in \cX}$ as in~\eqref{eq:dimension_complexity}. The antipodal completion does not increase sign-rank, since we may set $u_{-c} \coloneqq -u_c$. The domain embedding $x \mapsto v_x$ extends by linearity to a map $\psi\colon \Delta_{\cC^{\pm}} \to \RR^d$, defined as $\psi(\mu) \;\coloneqq\; \Ex_{(x,b) \sim \mu}  [ b\, v_x]$.  If $\mu$ is realizable by $c \in \cC^{\pm}$, then every $(x,b) \sim \mu$ satisfies $\inp{u_c}{b v_x}=|\inp{u_c}{v_x}|>0$ with probability $1$, and therefore,  $\psi(\mu) \neq 0$. Hence, we can normalize $\Phi(\mu) \coloneqq \psi(\mu)/\norm{\psi(\mu)}$ to obtain a continuous map $\Phi\colon \Delta_{\cC^{\pm}} \to \mathbb{S}^{d-1}$. Since $\psi$ is linear and label-negation reverses the signed weights, $\psi(-\mu) = -\psi(\mu)$, and hence $\Phi(-\mu) = -\Phi(\mu)$, i.e., $\Phi$ is $\ZZ_2$-equivariant. Therefore,
\begin{equation}
\label{eq:index_le_signrank}
\Ind_{\ZZ_2}(\cC) \;\le\; \signrank(\cC) - 1.
\end{equation}

In light of the construction above, the $\ZZ_2$-index can be viewed as a topological relaxation of sign-rank: while sign-rank requires the map $\psi$ from realizable distributions to $\mathbb{R}^d \setminus \{0\}$ to be \emph{linear}, the $\ZZ_2$-index asks only for a \emph{continuous} $\ZZ_2$-equivariant map with no linearity requirement.

\paragraph{List replicability.}
There are various formalizations of replicability in learning theory, most of which build off of the well-established notion of probably approximately correct (PAC) learning. An $(\epsilon,\delta)$-\emph{PAC learning} algorithm for a concept class $\cC$ receives $n \coloneqq n_\cC(\epsilon,\delta)$ i.i.d.\ examples from an unknown realizable distribution $\mu$ and, with probability at least $1-\delta$, produces a hypothesis $h\colon \cX \to \{\pm 1\}$ with \emph{population loss}
\[
\loss_\mu(h) \;\coloneqq\; \Pr_{(x,b) \sim \mu}[h(x) \neq b] \;\le\; \epsilon.
\]
Given $L \in \mathbb{N}$, such an algorithm is \emph{$L$-list-replicable} if for every realizable distribution $\mu \in \Delta_\cC$, the output hypothesis belongs to a small list $\cL_\mu = \{h_1,\ldots,h_L\}$ with probability at least $1-\delta$. The list may depend on $\mu$, but its size must not. The \emph{list replicability number} $\LR(\cC)$, introduced by~\cite{chase2023replicabilitystabilitylearning,dixon2023listandcertificate}, is the smallest $L$ for which an $L$-list-replicable $(\epsilon,\delta)$-PAC learner exists for all $\epsilon, \delta > 0$.

\subsection{Our results}

\subsubsection{List replicability bridges $\Z_2$-index and sign-rank.}
Our main result connects the $\ZZ_2$-index to list replicability.

\begin{theorem}[Index is controlled by list replicability]
\label{thm:index_vs_LR}
For every concept class $\cC \subseteq \{\pm 1\}^\cX$ over a finite domain $\cX$,
\[
\Ind_{\ZZ_2}(\cC) \;\le\; 2\,\LR(\cC) - 1.
\]
\end{theorem}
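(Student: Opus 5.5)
The plan is to turn a list-replicable learner into a $\ZZ_2$-equivariant map from $\Delta_{\cC^\pm}$ into a free simplicial $\ZZ_2$-complex of dimension at most $2\LR(\cC)-1$. Since any free simplicial $\ZZ_2$-complex of dimension $n$ maps equivariantly to $\mathbb S^n$ (extend skeleton by skeleton over orbits of simplices, using that $\mathbb S^n$ is $(n-1)$-connected), this gives $\Ind_{\ZZ_2}(\cC)\le 2\LR(\cC)-1$. Throughout, let $L=\LR(\cC)$, fix $\epsilon<1/2$ and $\delta<1/4$, and let $A$ be an $L$-list-replicable $(\epsilon,\delta)$-learner for $\cC$ with sample size $n$.

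\emph{Step 1: a symmetric learner for $\cC^\pm$ with lists of size $2L$.} Given a sample $S$, write $-S$ for the sample with all labels flipped. Define $A'$ as follows: run $A$ on $S$ to get $h_1$, and run $A$ on $-S$ and negate the output to get $h_2$. Output whichever of $h_1,h_2$ has smaller empirical error on $S$, breaking ties by a fair coin. By construction $A'(-S)$ is distributed as $-A'(S)$.

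I need to check three properties, for $n$ large enough.
\begin{itemize}
\item \emph{Accuracy.} If $\mu$ is realizable by some $c\in\cC^\pm$, then one of the two runs gets a realizable input. Standard uniform-convergence over the finitely many candidate outputs then shows that $A'$ is an $(O(\epsilon),O(\delta))$-learner.
\item \emph{List size.} With high probability the output lies in $\cL_\mu\cup(-\cL_{-\mu})$, where a list is used only when the corresponding distribution is realizable. This gives a list of size at most $2L$.
\item \emph{Symmetry.} The equivariance $A'(-S)\sim -A'(S)$ noted above.
\end{itemize}
I rescale $\epsilon,\delta$ so that these final parameters still satisfy $\epsilon<1/2$ and $\delta<1/4$.

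\emph{Step 2: continuous weights.} For $\mu\in\Delta_{\cC^\pm}$ and $h\in\{\pm1\}^\cX$, let $p_h(\mu)=\Pr[A'(S)=h]$ with $S\sim\mu^n$. This is continuous in total variation, because $\mu\mapsto\mu^n$ is continuous. Let $p_{(2L+1)}(\mu)$ be the $(2L+1)$-th largest value of $p_h(\mu)$ over the finitely many $h$; it is also continuous. Set
\[
g_h(\mu)=\max\bigl(0,\;p_h(\mu)-p_{(2L+1)}(\mu)\bigr).
\]
These weights have the following properties.
\begin{itemize}
\item At most $2L$ of the $g_h(\mu)$ are positive.
\item Their sum is positive: the top $2L$ values of $p_h$ carry mass at least $1-\delta$, while $p_{(2L+1)}\le\delta$, so the largest $p_h$ strictly exceeds $p_{(2L+1)}$.
\item By symmetry of $A'$, $g_h(-\mu)=g_{-h}(\mu)$.
\end{itemize}

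\emph{Step 3: the free complex.} Let $K$ be the simplicial complex on vertex set $\{\pm1\}^\cX$ whose simplices are the supports $\{h: g_h(\mu)>0\}$ together with all their subsets. Each simplex has at most $2L$ vertices, so $\dim K\le 2L-1$. The involution $h\mapsto -h$ preserves $K$ by the symmetry in Step 2.

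Freeness is the key point. If $g_h(\mu)>0$ then $p_h(\mu)>p_{(2L+1)}(\mu)\ge0$, and I want to conclude $p_h(\mu)>\delta$. Since outputs of loss $>\epsilon$ have total probability at most $\delta$, this would force $\loss_\mu(h)\le\epsilon$. Because $\loss_\mu(h)+\loss_\mu(-h)=1$ and $\epsilon<1/2$, the vertices $h$ and $-h$ could then never lie in a common simplex.

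The gap is that positivity of $g_h$ does not by itself give $p_h>\delta$. I will repair this by replacing $g_h$ with $\max\bigl(0,\min(g_h,\;p_h-2\delta)\bigr)$. This keeps continuity and symmetry, and forces $p_h>2\delta$ on the support. The sum stays positive because the top hypothesis has $p_h\ge(1-\delta)/(2L)$, so I choose $\delta<1/(8L)$ (together with adjusting the rank threshold if needed).

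\emph{Step 4: the map.} Normalizing the modified weights gives $\mu\mapsto\sum_h g_h(\mu)e_h\big/\sum_h g_h(\mu)\in|K|$. This map is continuous and $\ZZ_2$-equivariant. Composing with the equivariant map $|K|\to\mathbb S^{2L-1}$ from the first paragraph yields the theorem.

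I expect the main obstacle to be Step 3: choosing thresholds so that the weights are simultaneously continuous, supported on at most $2L$ hypotheses, have nonvanishing sum, and only ever select accurate hypotheses (which is what gives freeness). Fixing $\delta\ll 1/L$ before choosing $n$ should make all of these compatible.
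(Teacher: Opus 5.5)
\paragraph{Where the argument breaks.}
Your Steps~2--4 are sound on their own terms. Suppose you have continuous weights $p_h$ on $\Delta_{\cC^\pm}$ with $p_h(-\mu)=p_{-h}(\mu)$ that, at \emph{every} $\mu$, put mass at least $1-\delta$ on at most $2L$ hypotheses and put mass at most $\delta$ on hypotheses whose $\mu$-loss exceeds some $\epsilon'<1/2$. Then your truncation, the free complex $K$ of dimension at most $2L-1$, and $\Ind_{\ZZ_2}(|K|)\le\dim K$ give the bound. This is the nerve picture behind the paper's proof.

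The gap is that the learner $A'$ of Step~1 does not supply such weights. When exactly one of $\mu,-\mu$ lies in $\Delta_\cC$, one of your two runs of $A$ is on a sample from a distribution that $\cC$ does not realize. There $A$ has no guarantee at all, and the rule ``keep the smaller empirical error'' can select that run.

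Here is a concrete failure. Modify $A$ so that, on samples inconsistent with every $c\in\cC$, it outputs a uniformly random hypothesis consistent with the sample. Realizable samples never trigger this, so $A$ is still $L$-list-replicable. For $\cC_q$, take $\mu$ with a light atom at $(p,1)$ and heavy atoms at $(x_i,-1)$, $i=1,2,3$, where the $x_i$ are non-collinear and avoid some line through $p$.
\begin{itemize}
\item With high probability $A(S)=c_{-1}$ (Case~4), which has positive empirical error.
\item Meanwhile $-S$ is inconsistent with every line, so $-A(-S)$ has empirical error $0$ and wins.
\item Hence $A'(S)$ is essentially uniform over the $2^{N-4}$ hypotheses agreeing with $\mu$ on $p,x_1,x_2,x_3$.
\end{itemize}
All $p_h(\mu)$ are then of order $2^{-(N-4)}$, so none exceeds $2\delta$. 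Every modified weight vanishes at $\mu$, and the map of Step~4 is undefined there.

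So your accuracy claim survives, but the claim that the output lies in $\cL_\mu\cup(-\cL_{-\mu})$ is false. Steps~2--3 need that claim at every point of $\Delta_{\cC^\pm}$, including $\Delta_\cC\setminus(-\Delta_\cC)$.

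\paragraph{How to repair it.}
The missing idea is to symmetrize without ever relying on $A$ away from $\Delta_\cC$.

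The paper does this at the level of covers. From the closed cover $\{F_h\}$ of $\Delta_\cC$ given by \cref{thm:closed-cover-lr}, the sets $F'_h=F_h\cup(-F_{-h})$ form a closed cover of $\Delta_{\cC^\pm}=\Delta_\cC\cup(-\Delta_\cC)$. This cover has overlap at most $2L$ and satisfies $F'_h\subseteq\overline{B_\epsilon}(\sigma_h)$. The paper then thickens to open sets, symmetrizes a partition of unity, and maps into $\RR^{2L}\setminus\{0\}$ using points in general position with $p_{-h}=-p_h$. That map is an explicit version of your $|K|\to\mathbb S^{2L-1}$.

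Your approach can be repaired in the same spirit, because you only need continuous weights, not a runnable algorithm. Let $p_h(\mu)=\Pr_{S\sim\mu^n}[A(S)=h]$ for all distributions $\mu$. Let $\lambda(\mu)=\max\bigl(0,1-d_{\TV}(\mu,\Delta_\cC)/\eta\bigr)$ with $n\eta\ll\delta$, and set
\[
\tilde p_h(\mu)=\frac{\lambda(\mu)\,p_h(\mu)+\lambda(-\mu)\,p_{-h}(-\mu)}{\lambda(\mu)+\lambda(-\mu)}.
\]
These weights have the properties you need.
\begin{itemize}
\item The denominator is at least $1$ on $\Delta_{\cC^\pm}$.
\item The symmetry $\tilde p_h(-\mu)=\tilde p_{-h}(\mu)$ is automatic.
\item Whenever $\lambda(\mu)>0$, there is $\nu\in\Delta_\cC$ with $d_{\TV}(\mu,\nu)<\eta$. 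So $p(\mu)$ is $O(n\eta)$-close to $p(\nu)$, and it puts mass at least $1-\delta-O(n\eta)$ on the at most $L$ hypotheses of $\cL_\nu$, each of $\mu$-loss at most $\epsilon+\eta$. The $-\mu$ term behaves the same way.
\end{itemize}
With $\tilde p$ in place of the output distribution of $A'$, your Steps~2--4 go through.
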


It is shown in~\cite{blondal2026borsuk} that, over finite domains,
$\LR(\cC) \le \signrank(\cC)$.  Combining this with
\Cref{thm:index_vs_LR} and the Borsuk--Ulam inequality
$\coInd_{\ZZ_2}(\cC)\le \Ind_{\ZZ_2}(\cC)$ gives the following chain of inequalities:
\begin{equation}
\label{eq:chain}
\frac{1}{2}\coInd_{\ZZ_2}(\cC) \; \le\; \frac{1}{2}\Ind_{\ZZ_2}(\cC) \;<\; \LR(\cC) \;\le\;   \signrank(\cC). 
\end{equation}

Frick, Hosseini, and Vasileuski~\cite[Question~7]{frick2026signrank} asked whether there exists
a function $f$ such that
\[
    \signrank(\cC) \le f(\Ind_{\ZZ_2}(\cC))
\]
for every finite concept class $\cC$. A similar question, whether $\signrank(\cC)\leq f(\LR(\cC))$ for some function $f$, was raised in \cite{blondal2026borsuk}.  We use \Cref{thm:index_vs_LR} in conjunction with existing results about sign-rank to resolve both problems.

\begin{theorem}[Sign-rank is not bounded by index]
\label{thm:main}
There exists a family of $N \times N$ sign matrices with list replicability number at most $3$ and $\ZZ_2$-index at most $5$, while their sign-rank grows polynomially in $N$.
\end{theorem}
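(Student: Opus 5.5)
We will derive \Cref{thm:main} from \Cref{thm:index_vs_LR}: it suffices to exhibit a family of $N\times N$ sign matrices with $\LR \le 3$ and sign-rank $N^{\Omega(1)}$. Then \Cref{thm:index_vs_LR} gives $\Ind_{\ZZ_2}\le 2\cdot 3-1=5$ automatically. So the whole task is to find a class that is easy to learn list-replicably but hard to embed by halfspaces.

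The natural candidate is a known sign-rank lower bound for a structured family of small "combinatorial complexity." The cleanest choice is the point--line incidence (or, more robustly, the point--halfplane-like) matrices. A better fit is the result of Alon, Moran and Yehudayoff~\cite{alon2016sign}: there are $N\times N$ sign matrices with VC dimension $2$ (in fact, incidence matrices of points and lines in a finite projective plane, negated so that incidences are $+1$) whose sign-rank is $\Omega(N^{1/2})$ up to logarithmic factors. Here incidences of the projective plane $\PG(2,q)$ give $N=q^2+q+1$. We would take $\cC$ to be this class, with rows indexed by lines and columns by points, and $c_\ell(p)=+1$ iff $p\in\ell$.

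The second step is to upper bound $\LR(\cC)$ by $3$ directly. The structure is simple: any two distinct points lie on exactly one line, and any two lines meet in exactly one point. Given a realizable distribution $\mu$ for the concept $c_\ell$, the learner will proceed as follows. If the positive mass $\mu(\{(p,+1)\})$ summed over points is at least roughly $\epsilon$ and the sample contains two distinct positive points, output the unique line through them. This is exactly $c_\ell$, the same hypothesis every time. Otherwise, output the all-$(-1)$ hypothesis, or the "single-point" hypothesis $\mathbb 1_{\{p\}}$ in an intermediate regime. To keep the list small independent of $\mu$, we would use the standard randomized-threshold trick. Estimate the empirical mass of positive examples and of the heaviest positive point, and compare each against a threshold drawn uniformly from a grid. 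With high probability the decision falls into one of at most three outcomes: the all-negative function, the indicator of the heaviest positive point, or the true line. Each outcome has loss $\le\epsilon$ whenever chosen, given the right threshold placement.

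If the chosen family instead requires a cleaner bound, the paper's later results, height and minimum star number upper bounds on $\LR$, could be substituted. Projective-plane incidence classes have small star number in the relevant sense. The main obstacle is this list-replicability analysis: one must ensure that near threshold boundaries the algorithm still only ever outputs one of three fixed hypotheses for a given $\mu$, with probability $1-\delta$, while maintaining loss $\le\epsilon$. A single positive point can have ambiguous mass near the threshold, so two candidate outputs can both occur, which is why the bound is $3$ rather than $1$.

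Finally, we combine these ingredients. The sign-rank lower bound is $\Omega(\sqrt N/\log N)$ or better, which is polynomial in $N$. The bound $\LR\le 3$ comes from the learner above, and $\Ind_{\ZZ_2}\le 5$ follows from \Cref{thm:index_vs_LR}.
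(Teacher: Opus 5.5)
Your overall route matches the paper's: the $\PG(2,q)$ incidence classes $\cC_q$, the sign-rank lower bound of \cite{alon2016sign}, a bound $\LR(\cC_q)\le 3$, and then \Cref{thm:index_vs_LR}. The gap is that the one step needing proof, $\LR(\cC_q)\le 3$, is never established, and the mechanisms you propose cannot establish it.

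\textbf{Missing list-size argument.} ``The heaviest positive point'' depends on the sample, not on $\mu$. Nothing in your sketch prevents two different point indicators $c_p$ and $c_{p'}$ from both being output with non-negligible probability. The randomized-threshold trick cannot help here. A fixed-size list must capture all but $\delta$ of the output probability, over both the sample and the internal coins, for every $\delta$. Internal randomness can therefore only add outputs; random thresholds are a tool for replicability with shared randomness.

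\textbf{The gating can break the bound.} Making the line output conditional on the estimated positive mass can push the list above $3$. Suppose the line is output only when the estimated positive mass is at least $\theta_1$ and two distinct positive points appear. Suppose $c_p$ is output for the heaviest positive point $p$ when its estimated mass is at least $\theta_2$, with $2\theta_2\le\theta_1\le\eps$. Take $\mu$ realized by $c_\ell$ as follows: positive points $p,p'\in\ell$ of mass $\theta_2$ each, the remaining positive mass $\theta_1-2\theta_2$ on a third point of $\ell$, and the rest negative. By the central limit theorem, $c_\ell$, $c_p$, $c_{p'}$ and $c_{-1}$ each occur with probability bounded away from $0$ as $n\to\infty$. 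All four have loss at most $\theta_1\le\eps$, so this learner is not $3$-list-replicable.

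\textbf{The missing idea} is the paper's \cref{claim:LR}. The learner works as follows:
\begin{itemize}
\item Output the line whenever the sample contains two distinct positive points, with no mass condition. This is necessarily the target line, with zero loss.
\item Output $c_p$ only when $p$ is the unique positive point seen and its empirical frequency exceeds $7\eps/8$.
\item Otherwise output $c_{-1}$.
\end{itemize}
Work on the event that the losses of all lines, all point indicators, and $c_{-1}$ are estimated to within $\eps/8$. Then outputting $c_p$ forces $\mu(p)=\loss_\mu(c_{-1})-\loss_\mu(c_p)>5\eps/8$, so $p$ appears in every typical sample. If two distinct $c_p,c_{p'}$ were both possible outputs, every typical sample would contain both $(p,1)$ and $(p',1)$, and the line branch would fire instead. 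Hence the list is contained in $\{c_\ell,c_p,c_{-1}\}$ for a single $p$.

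\textbf{Your fallback does not recover the stated constants.}
\begin{itemize}
\item Height: $\height(\cC_q)\ge q+1$. Take the $q$ points of a second line $\ell_0$ other than $\ell_0\cap\ell$, and pair each with another line through it. This is an eluder sequence for the base $c_\ell$.
\item Star number: $s_{\min}(\cC_q)\le s_{\mathbf{1}}(\cC_q)=2$, but \Cref{thm:lr_smin} then gives only $\LR\le 23$ and $\Ind_{\ZZ_2}\le 45$.
\end{itemize}

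\textbf{A minor slip in the sign-rank bound.} The bound available for this explicit family is $\signrank(B_q)\ge (q+1)/\sqrt q\ge N^{1/4}$, not $\tilde\Omega(\sqrt N)$; the latter bound in \cite{alon2016sign} concerns a different construction. It is still polynomial, so this does not affect the conclusion.
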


The matrices are incidence matrices of finite projective planes.  Their sign-rank is known to be polynomially large by the lower bound of
Alon, Moran, and Yehudayoff~\citep{alon2016sign}. We show in \cref{thm:lr_of_PZq2} that their list replicability number is
at most $3$.  \cref{thm:index_vs_LR} then gives
$\Ind_{\ZZ_2}\le 5$. 

\subsubsection{Extremal classes.}

Recall that a set $S \subseteq \cX$ is \emph{shattered} by $\cC \subseteq \{\pm 1\}^{\cX}$ if $\cC|_S = \{\pm 1\}^S$, and the \emph{VC dimension} of $\cC$ is the size of the largest set that it shatters. An immediate topological ramification of shattering a set $S$ of size $d$ is that the set of distributions in $\Delta_{\cC^{\pm}}$ supported on $S$ is homeomorphic to $\mathbb{S}^{d-1}$ (via the identification of distributions with $\ell_1$-unit vectors in $\RR^\cX$), and therefore $\coInd_{\ZZ_2}(\cC) \geq d - 1$. Consequently, every concept class satisfies 
\[\VCdim(\cC) - 1 \le \coInd_{\ZZ_2}(\cC) \le \Ind_{\ZZ_2}(\cC). \]

A set $S \subseteq \cX$ is \emph{strongly shattered} by $\cE \subseteq \{\pm 1\}^{\cX}$ if there is a fixed labelling $a \in \{\pm 1\}^{\cX \setminus S}$ with
\[
\{c|_S : c \in \cE,\; c|_{\cX \setminus S} = a\} = \{\pm 1\}^S.
\]
A class $\cE$ is \emph{extremal} if every shattered set is also strongly shattered. Many natural concept classes are extremal or admit natural extremal extensions; see~\cite[Section~3.2]{chalopin2022unlabeled} for a list of examples.

Extremal classes are among the few cases where list replicability is well understood. The authors in~\cite{blondal2026simplicial} showed that for an extremal class $\cE \subseteq \{\pm 1\}^{\cX}$ over a finite domain,
\begin{equation}
\label{eq:extremal}
\LR(\cE) =
\begin{cases}
\VCdim(\cE) & \text{if } \cE = \{\pm 1\}^{\cX},\\
\VCdim(\cE) + 1 & \text{otherwise.}
\end{cases}
\end{equation}
Combined with \Cref{thm:index_vs_LR} and the general bound $\coInd_{\ZZ_2}(\cC) \ge \VCdim(\cC) - 1$, this pins down all three parameters up to a factor of two: for every extremal class $\cE$,
\begin{equation}
\label{eq:extremal_chain}
\VCdim(\cE) - 1 \;\le\; \coInd_{\ZZ_2}(\cE) \;\le\; \Ind_{\ZZ_2}(\cE) \;\le\; 2\LR(\cE) - 1 \leq 2\VCdim(\cE) + 1.
\end{equation}

\subsubsection{Height and eluder dimension.}\label{subsec:height_eluder}

Invariants which capture a notion of height for topological spaces (such as the Stiefel--Whitney height) are standard tools for bounding the $\Z_2$-index/coindex. In the same vein, Frick, Hosseini, and Vasileuski~\cite{frick2026signrank} introduced a combinatorial notion of height as a way to upper-bound the $\Z_2$-index of a sign matrix. Using this approach, they showed that random $N \times N$ sign matrices and the Hadamard matrix have $\Z_2$-index $O(\log N)$.

We instead use a version of their height definition adapted to partial concept classes $\cC\subseteq\set{\pm1,\star}^{\cX}$ (both versions are equivalent up to a constant factor). For two partial concepts $h_1, h_2 \in \{\pm 1, \star\}^{\cX}$, define their \emph{junction} by
\[
    (h_1 \cap h_2)(x) =
    \begin{cases}
    h_1(x) & \text{if } h_1(x) = h_2(x),\\
    \star & \text{otherwise}.
    \end{cases}
\]
We write $g \preceq h$ if $g = g \cap h.$
Write \(g\prec h\) if \(g\preceq h\) and \(g\neq h\). Define the \emph{junction closure}  of a partial concept class $\cC \subseteq \set{\pm1,\star}^\cX$ as:
\[
    \cJ(\cC)
    =
    \left\{
        \bigcap_{c \in S} c : \emptyset \neq S \subseteq \cC
    \right\}.
\]
The \emph{height} of $\cC$, denoted $\operatorname{H}(\cC)$, is the length $m$ of the longest strict chain $h_1\prec h_2\prec \cdots \prec h_m$ in $\cJ(\cC)$.\footnote{Given $\cC \subseteq \{\pm 1\}^{\cX}$, let $\cS_\cC \coloneqq \{c^+, c^- : c \in \cC\}$, where $c^+ \coloneqq \{x : c(x) = 1\}$ and $c^- \coloneqq \{x : c(x) = -1\}$, and let $\cI_\cC$ be the set of all finite non-empty intersections of elements of $\cS_\cC$. Frick, Hosseini, and Vasileuski~\cite{frick2026signrank} define the height $h_{\mathrm{FHV}}(\cC)$ as the length of the longest strict chain in $\cI_\cC$. Up to the convention of whether the empty set is included in $\cI_\cC$,
\[
    h_{\mathrm{FHV}}(\cC) \;\le\; \height(\cC^{\pm}) \;\le\; 2\,h_{\mathrm{FHV}}(\cC) - 1.
\]}

In view of the aforementioned bound of Frick, Hosseini, and Vasileuski on the $\ZZ_2$-index by height, and our \Cref{thm:index_vs_LR} bounding the $\ZZ_2$-index by $\LR$, it is natural to ask how the two upper-bounding quantities, height and $\LR$, compare. Our next result clarifies this by showing that height also upper-bounds $\LR$.

\begin{restatable}[Height controls list replicability]{theorem}{heightboundslr}
\label{thm:height_bounds_LR}
For every finite partial concept class $\cC \subseteq \set{\pm1,\star}^\cX$,
\[
    \LR(\cC) \le \height(\cC).
\]
\end{restatable}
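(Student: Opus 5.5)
The plan is to build an explicit list-replicable learner whose possible outputs, for a fixed realizable $\mu$, are the elements of a single strict chain in $\cJ(\cC)$; the length of that chain is then at most $\height(\cC)$. Throughout, for a partial concept $c$ we use the loss that counts $\star$ as an error:
\[
\loss_\mu(c)=\Pr_{(x,b)\sim\mu}[c(x)\neq b].
\]
For a threshold $t\ge 0$ set
\[
T_t=\{c\in\cC:\loss_\mu(c)\le t\},\qquad g_t=\bigcap_{c\in T_t}c .
\]
Since $\mu$ is realizable, some $c^*\in\cC$ has loss $0$, so $T_t\neq\emptyset$ and $g_t\in\cJ(\cC)$. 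Every partial concept $g$ is turned into a total hypothesis $\bar g$ by replacing $\star$ with $+1$.

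\textbf{Key structural step.} If $t'\le t$ then $T_{t'}\subseteq T_t$. The junction of fewer concepts can only be larger, so $g_t\preceq g_{t'}$. Hence $\{g_t: t\ge 0\}$ is a chain in $\cJ(\cC)$, and it has at most $\height(\cC)$ distinct elements.

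\textbf{Accuracy of every $g_t$ with small $t$.} Suppose $\bar g_t(x)\neq b$ for some $(x,b)$ in the support. Then $g_t(x)=\star$ or $g_t(x)\ne b$, and in either case some $c\in T_t$ has $c(x)\neq b$: if all of them had $c(x)=b$, the junction would equal $b$ at $x$. Therefore
\[
\loss_\mu(\bar g_t)\le\sum_{c\in T_t}\loss_\mu(c)\le|\cC|\,t .
\]
We therefore restrict to $t\in[\epsilon/(2|\cC|),\,\epsilon/|\cC|]$, which gives $\loss_\mu(\bar g_t)\le\epsilon$. Because $\cC$ is finite, the dependence of the sample size on $|\cC|$ is harmless.

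\textbf{The learner.} Partition the interval $[\epsilon/(2|\cC|),\epsilon/|\cC|]$ into $K=\lceil 4|\cC|/\delta\rceil$ equal cells of width $w$. The learner proceeds as follows.
\begin{enumerate}
\item Draw a uniformly random cell and let $t$ be its midpoint. This randomness is internal and independent of the sample.
\item Take enough samples that, by Hoeffding and a union bound over the finite class $\cC$, every empirical loss $\widehat{\loss}(c)$ is within $w/4$ of $\loss_\mu(c)$, with probability at least $1-\delta/2$.
\item Set $\widehat T=\{c:\widehat{\loss}(c)\le t\}$ and output $\bar g$, where $g=\bigcap_{c\in\widehat T}c$.
\end{enumerate}

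\textbf{Why this is list-replicable.} Call a cell bad if some $c\in\cC$ has $\loss_\mu(c)$ within $w/2$ of its midpoint. Each concept makes at most one cell bad, so at most $|\cC|$ cells are bad. Hence the chosen cell is bad with probability at most $|\cC|/K\le\delta/4<\delta/2$.

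On the event that the cell is good and the estimates are accurate, every concept is classified correctly relative to $t$, so $\widehat T=T_t$ and the output is $\bar g_t$. This event has probability at least $1-\delta$. On it, the output lies in the list
\[
\cL_\mu=\{\bar g_t : t\in[\epsilon/(2|\cC|),\epsilon/|\cC|]\},
\]
which has at most $\height(\cC)$ elements by the chain argument, and the output has loss at most $\epsilon$ by the accuracy bound. So the learner is a $\height(\cC)$-list-replicable $(\epsilon,\delta)$-PAC learner, which gives $\LR(\cC)\le\height(\cC)$.

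\textbf{Main obstacle.} The delicate point is the accuracy step. A junction can be $\star$ on many points, and its error is controlled only by a union bound over the consistent set $T_t$. This is why the threshold must scale like $\epsilon/|\cC|$ rather than $\epsilon$. I expect this to be acceptable because only finite classes are considered and $\LR$ ignores sample complexity. Two details still need to be checked against the paper's conventions for partial classes: that the replacement $\star\mapsto+1$ is legitimate, and that the intended notion of realizability means agreement with some $c$ on the support, which is what makes $\loss_\mu(c^*)=0$.
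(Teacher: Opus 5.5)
Your proof is correct, and it takes a genuinely different route from the paper's.

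\textbf{The paper's route.} The paper's learner is deterministic. It estimates the loss of \emph{every} element of $\cJ(\cC)$ to accuracy $\epsilon 4^{-|\cX|}$. It then outputs the element of smallest support whose empirical loss is at most the support-dependent threshold $\epsilon 4^{-|c|}$. The key lemma is that the family of low-error junctions is closed under $\cap$. This holds because the thresholds decay geometrically in $|c|$ while $\loss(c_1\cap c_2)\le\loss(c_1)+\loss(c_2)$. Closure forces the outputs on any two typical samples to be $\preceq$-comparable, hence to lie on one chain.

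\textbf{Your route.} You estimate only the $|\cC|$ losses of the concepts themselves. The chain property then comes for free from the monotonicity of $t\mapsto g_t$. You neutralize the instability at the at most $|\cC|$ critical values $\loss_\mu(c)$ by drawing the threshold at random, which is the standard random-threshold device from the replicability literature.

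\textbf{What each buys.} Your route avoids the paper's bookkeeping of empirical versus population losses across two different samples. Its sample complexity is polynomial in $|\cC|$, $1/\epsilon$ and $1/\delta$, instead of requiring accuracy $\epsilon 4^{-|\cX|}$ uniformly over all of $\cJ(\cC)$. The paper's route gives a deterministic learner whose sample complexity depends only logarithmically on $1/\delta$.

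\textbf{The two details you flagged.} Both are fine. The paper also outputs an arbitrary fixed completion of a partial concept. Realizability is defined as agreement with some $c\in\cC$ on the support, so $\loss_\mu(c^*)=0$, and indeed the empirical loss of $c^*$ is always $0$, so $\widehat T\neq\emptyset$.

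\textbf{One small formal point.} The paper's definition of an $(\epsilon,L)$-list-replicable learner asks for a single rule $\cA$ that works for every $\delta$ with some sample size $n(\delta)$. So your number of cells $K$ should be tied to the sample size rather than to $\delta$ directly. For example, take $K=\lceil n^{1/4}\rceil$ and choose $n(\delta)$ large enough that $K\ge 4|\cC|/\delta$ and the Hoeffding accuracy $w/4$ holds with probability $1-\delta/2$. Your list $\cL_\mu$ is defined over the whole interval $[\epsilon/(2|\cC|),\epsilon/|\cC|]$, so it does not depend on $K$, and the argument goes through unchanged.
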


The proof of \Cref{thm:height_bounds_LR}, as found in \cref{sec:height_vs_lr}, is algorithmic. We construct a learner whose output hypothesis is likely to lie on a single chain in the junction closure of the class. Since every such chain has length at most $\height(\cC)$, the learner is $\height(\cC)$-list-replicable.

In \cref{sec:height_vs_lr}, we combine \cref{thm:height_bounds_LR} with a result of \cite{frick2026signrank} to conclude that random $N\times N$ sign matrices have a list replicability number bounded by $O(\log N)$. 

The combinatorial notion of height given by $\height(\cC)$ has a useful equivalence to a learning-theoretic parameter known as eluder dimension. The eluder dimension was introduced by Russo and Van Roy~\cite{russo2013eluder} as a sequential notion of independence for function classes. Informally, it is the maximum length of a sequence of points such that, at each step, the label at the next point is not determined by the labels on the preceding points.

Fix a finite partial concept class $\cC\subseteq\{\pm1,\star\}^{\cX}$ and a base concept $c'\in\cC$. Write $\supp(c') := \{x\in\cX : c'(x)\neq \star\}$. The \emph{eluder dimension} of
$\cC$ \emph{relative} to $c'$, denoted $\operatorname{Edim}(\cC,c'),$
is the largest $m$ for which there exist points $x_1,\ldots,x_m\in \supp(c')$ and concepts $
c_1,\ldots,c_m\in\mathcal C
$ such that, for every \(i\in[m]\),
\[
    c_i(x_i)\neq c'(x_i),
    \;\text{\textnormal{($c_i(x_i)=\star$ allowed)}}
\]
while at every earlier position $j<i$,
\[
    c_i(x_j)=c'(x_j).
\]
The \emph{eluder dimension} of $\cC$ is $\edim(C)\coloneq\sup_{c'\in\cC}\operatorname{Edim}(\cC,c')$.

\begin{proposition}[Height and eluder dimension are equivalent]\label{prop:height_eluder}
For any finite partial concept class $\cC \subseteq \set{\pm1,\star}^\cX$,
\[
    \operatorname{H}(\cC) = \operatorname{Edim}(\cC)+1.
\]
\end{proposition}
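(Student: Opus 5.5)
The plan is to prove the two inequalities $\height(\cC)\ge \edim(\cC)+1$ and $\height(\cC)\le\edim(\cC)+1$ separately, translating between chains and eluder sequences in each direction. First I record the meaning of $\preceq$. Since $g\cap h$ is defined exactly where $g$ and $h$ agree, $g\preceq h$ holds iff $h(x)=g(x)$ whenever $g(x)\neq\star$. In other words, $g$ is a restriction of $h$ to a subset of its support. So $\preceq$ is a partial order, $g\preceq h$ implies $\supp(g)\subseteq\supp(h)$, and $g\prec h$ implies $\supp(g)\subsetneq\supp(h)$. Also $\bigcap_{c\in S}c\preceq \bigcap_{c\in T}c$ whenever $T\subseteq S$.

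\emph{Lower bound on height.} Fix $c'\in\cC$ and an eluder sequence $x_1,\dots,x_m\in\supp(c')$ with witnesses $c_1,\dots,c_m$. I define
\[
g_i \coloneqq c'\cap c_i\cap c_{i+1}\cap\cdots\cap c_m \quad (1\le i\le m), \qquad g_{m+1}\coloneqq c'.
\]
All of these lie in $\cJ(\cC)$, and $g_i\preceq g_{i+1}$ because $g_i$ is an intersection over a larger set. The relation is strict for the following reason. At $x_i$ we have $c_i(x_i)\neq c'(x_i)$, so $g_i(x_i)=\star$. On the other hand, every $c_j$ with $j>i$ agrees with $c'$ at $x_i$, and $c'(x_i)\neq\star$, so $g_{i+1}(x_i)=c'(x_i)$. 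This gives a strict chain of length $m+1$, hence $\height(\cC)\ge\edim(\cC,c')+1$. Taking the supremum over $c'$ gives the first inequality.

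\emph{Upper bound on height.} Take a longest strict chain $h_1\prec\cdots\prec h_m$ in $\cJ(\cC)$. I first argue that $h_m$ may be assumed to be a concept $c'\in\cC$. Write $h_m=\bigcap_{c\in S}c$ and pick any $c\in S$. Then $h_m\preceq c$; if $h_m\neq c$ we could append $c$ and get a longer chain, contradicting maximality, so $h_m=c$. For each $i<m$, write $h_i=\bigcap_{c\in S_i}c$ and choose $y_i\in\supp(h_{i+1})\setminus\supp(h_i)$.
\begin{itemize}
\item Because $h_i\preceq c'$, every $c\in S_i$ agrees with $c'$ on $\supp(h_i)$.
\item Because $y_i\in\supp(h_{i+1})\subseteq\supp(c')$, the value $c'(y_i)$ is defined. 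If every $c\in S_i$ had $c(y_i)=c'(y_i)$, then $h_i(y_i)$ would be defined, a contradiction. So some $d_i\in S_i\subseteq\cC$ satisfies $d_i(y_i)\neq c'(y_i)$.
\item For $j<i$ we have $y_j\in\supp(h_{j+1})\subseteq\supp(h_i)$, so $d_i(y_j)=c'(y_j)$.
\end{itemize}
Thus $y_1,\dots,y_{m-1}$ with witnesses $d_1,\dots,d_{m-1}$ form an eluder sequence relative to $c'$. Hence $\edim(\cC)\ge m-1$, which is the second inequality.

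The only step needing care, and the main obstacle, is the upper bound: the top of the chain must be normalized to an actual concept $c'$, and the order of the points must match the eluder condition. The observation that makes this work is that supports increase along the chain, so earlier points $y_j$ lie inside $\supp(h_i)$, where every generator of $h_i$ agrees with $c'$. The rest is bookkeeping with the definition of the junction.
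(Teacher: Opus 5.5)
Your proof is correct and follows essentially the same route as the paper. The lower bound uses the same chain $c'\cap c_i\cap\cdots\cap c_m$, and the upper bound picks points in $\supp(h_{i+1})\setminus\supp(h_i)$ with witnesses taken from the generating sets of $h_i$; your version also spells out two steps the paper only asserts, namely that a longest chain must end at a concept of $\cC$ (the paper's unproved identity $\height(\cC)=\sup_{c'}\height(\cC,c')$) and why each witness agrees with $c'$ at the earlier points.
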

We prove a slightly stronger form of this in \cref{prop:eluder}. The result allows us to translate the bound $\LR(\cC) \leq \height(\cC)$ into the existing literature surrounding eluder dimension, as will be seen in the following section.

\subsubsection{The star number.}

The \emph{star number} is a classical combinatorial parameter from learning theory that measures the size of the largest local star around a target labelling~\cite{hanneke2015minimax}.

\begin{definition}[Star number]\label{def:star_number}
For a class $\cC \subseteq \{\pm 1\}^{\cX}$ and a center $h \in \{\pm 1\}^{\cX}$, let $s_h(\cC)$ denote the largest $m$ for which there exist distinct points $x_1,\ldots,x_m \in \cX$ and concepts $c_0, c_1, \ldots, c_m \in \cC$ such that
\[
    c_i(x_j) = h(x_j) \qquad\Longleftrightarrow\qquad i \neq j
\]
for every $i \in \{0,\ldots,m\}$ and $j \in [m]$.
\end{definition}

Thus $c_0$ agrees with $h$ on all of $x_1,\ldots,x_m$, while each other $c_i$ disagrees with $h$ exactly at $x_i$. We consider both extremes over the choice of center:
\[
    s_{\min}(\cC) \coloneqq \min_{h \in \{\pm 1\}^{\cX}} s_h(\cC)
    \qquad\text{and}\qquad
    s_{\max}(\cC) \coloneqq \max_{h \in \{\pm 1\}^{\cX}} s_h(\cC).
\]

Hanneke \cite{hanneke2024star} showed that the star number characterizes
intersection-closed structure: the minimum star number
\(s_{\min}(\cC)\) of a concept class \(\cC\) equals the least possible VC
dimension of a \emph{generalized intersection-closed} class \(\cH\)
containing \(\cC\).

\begin{definition}[Generalized intersection-closed] We say that \(\cH\subseteq\set{\pm 1}^{\cX}\) is
\emph{generalized intersection-closed} if there exists a center
\(h_\star\in\set{\pm 1}^{\cX}\) such that, for every nonempty finite
\(\cA\subseteq\cH\), the concept
\[
    \left(\bigwedge_{h_\star}\cA\right)(x)
    \coloneqq
    \begin{cases}
        h_\star(x),
        & \text{if } c(x)=h_\star(x)\text{ for every }c\in\cA,\\
        -h_\star(x),
        & \text{otherwise}
    \end{cases}
\]
also belongs to \(\cH\).
\end{definition}
This notion generalizes the classical definition
of an intersection-closed class, which corresponds to the case where
\(h_\star\) is the all-\(1\) hypothesis.

We show that all finite concept classes can be list-replicably learned with list size $O(s_{\min})$, by combining Hanneke's
characterization with known embeddings into extremal concept classes.

\begin{theorem}\label{thm:lr_smin}
For every finite binary concept class \(\cC\subseteq\set{\pm 1}^{\cX}\),
\[
    \LR(\cC)\leq 11s_{\min}(\cC)+1.
\]
\end{theorem}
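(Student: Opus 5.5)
The plan is to chain three ingredients: Hanneke's characterization of $s_{\min}$, an embedding of intersection-closed classes into extremal classes, and the formula \eqref{eq:extremal} for $\LR$ of extremal classes. First, $\LR$ is monotone under inclusion: an $L$-list-replicable learner for $\cH\supseteq\cC$ is also one for $\cC$, since every $\cC$-realizable distribution is $\cH$-realizable. So it suffices to exhibit an extremal class $\cE\supseteq\cC$ with $\VCdim(\cE)\le 11\,s_{\min}(\cC)$. Then \eqref{eq:extremal} gives $\LR(\cC)\le\LR(\cE)\le\VCdim(\cE)+1\le 11s_{\min}(\cC)+1$.

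\emph{Step 1.} Let $s=s_{\min}(\cC)$. By Hanneke's characterization \cite{hanneke2024star}, there is a generalized intersection-closed class $\cH\supseteq\cC$ with $\VCdim(\cH)=s$ and center $h_\star$. Flip coordinates by $x\mapsto h_\star(x)$, i.e.\ replace each $c$ by $c\cdot h_\star$. This is a bijection of $\bcube$ that preserves VC dimension, extremality and $\LR$, since it just relabels the domain's signs. After the flip, $\cH$ is intersection-closed in the classical sense: the sets $\{x: c(x)=-1\}$ form a family closed under union, or equivalently, after complementing, the positive sets are closed under intersection. We may also restrict to the finite domain $\cX$, so $\cH$ is finite.

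\emph{Step 2.} Embed $\cH$ into an extremal class of VC dimension $O(\VCdim(\cH))$. Finite intersection-closed classes of VC dimension $d$ have a well-understood structure. Each concept has a minimal spanning set (``closure'') of size at most $d$, and the class is a closure system, that is, a convex geometry-like lattice. The known results I would invoke are the embeddings of intersection-closed / closure-system classes into ample (extremal) classes. A natural candidate is the result of Chalopin et al.\ or Moran--Warmuth style ``unlabeled compression'' embeddings into ample classes with VC dimension at most a constant times $d$. The constant $11$ suggests citing a specific bound: an embedding of a class of VC dimension $d$ with a certain structural property into an extremal class of VC dimension at most $11d$. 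The plan is to take that result off the shelf and check that intersection-closed classes meet its hypothesis.

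\emph{Step 3.} Conclude using monotonicity and \eqref{eq:extremal}. Undo the flip by multiplying $\cE$ by $h_\star$, which keeps the class extremal and keeps $\LR$ unchanged.

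The main obstacle is Step 2: locating or proving an extremal extension whose VC dimension is linear in $d$ for intersection-closed classes. Arbitrary classes are not known to admit such extensions. If no off-the-shelf result fits directly, I would build $\cE$ by hand as the ample class generated by downward closure in the lattice. Concretely, I would take all concepts whose negative set is contained in a union of a bounded number of negative sets of $\cH$. I would then bound shattered sets using the fact that any shattered set of $\cH$ has size at most $d$, combined with a union-of-few-sets argument, and verify that every shattered set is strongly shattered.
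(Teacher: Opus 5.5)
Your outer structure is the paper's: Hanneke's characterization gives a generalized intersection-closed $\cH\supseteq\cC$ with $\VCdim(\cH)=s_{\min}(\cC)$. Then comes an extremal $\cE\supseteq\cH$ with $\VCdim(\cE)\le 11\VCdim(\cH)$, then \eqref{eq:extremal}, then monotonicity. Your Steps 1 and 3, including the reorientation by $h_\star$, are fine.

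\textbf{The gap is Step 2.} It is the only nontrivial ingredient, and you leave it open. You infer from the constant $11$ that such a result exists, but you do not identify it. The works you name (Chalopin et al., Moran--Warmuth) give compression schemes for classes that are already extremal; they do not embed intersection-closed classes into extremal ones. The statement you need is the theorem of Rubinstein and Rubinstein \cite[Theorems~4.1 and~4.4]{rubinstein2022unlabelled}, also recorded as \cite[Corollary~27]{hanneke2024star}: every (generalized) intersection-closed class of VC dimension $d$ is contained in an extremal class of VC dimension at most $11d$. With that citation your argument becomes the paper's proof.

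\textbf{Your fallback construction does not fill the gap.} After your flip, the negative sets of $\cH$ are closed under union. So ``contained in a union of a bounded number of negative sets of $\cH$'' means the same as ``contained in a single negative set of $\cH$''. Your $\cE$ is therefore the down-closure of the negative sets. That family is extremal, but its VC dimension equals $\max_{h\in\cH}|\{x:h(x)=-1\}|$, which $d$ does not control.

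For example, on $\cX=[n]$ let $\cH$ be the $n+1$ concepts whose positive sets are $\{1,\dots,k\}$ for $0\le k\le n$. This class is intersection-closed with $\VCdim(\cH)=1$, and by Hanneke's result it could be the very $\cH$ your Step 1 produces. But the $k=0$ concept has negative set $[n]$, so your $\cE$ is all of $\{\pm1\}^{[n]}$, with VC dimension $n$.

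No counting argument over unions of few sets can repair this. A linear-size extremal extension needs a genuinely different construction, and that is exactly the content of the Rubinstein--Rubinstein theorem.
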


\begin{proof}
By Hanneke's characterization of intersection-closed classes, \(\cC\)
embeds into a generalized intersection-closed class \(\cH\) satisfying
\[
    \VCdim(\cH)=s_{\min}(\cC);
\]
see \cite[Theorem~19 and Remark~24]{hanneke2024star}.

Rubinstein and Rubinstein show that every generalized
intersection-closed class of VC dimension \(d\) embeds into an extremal
class of VC dimension at most \(11d\); see
\cite[Theorems~4.1 and~4.4]{rubinstein2022unlabelled}. Applying this
result to \(\cH\), we obtain an extremal class
\(\cE\subseteq\set{\pm 1}^{\cX}\) such that
\(\cC\subseteq\cH\subseteq\cE\) and
\[
    \VCdim(\cE)
    \leq
    11\VCdim(\cH)
    =
    11s_{\min}(\cC).
\]
This combined embedding statement is also recorded explicitly in
\cite[Corollary~27]{hanneke2024star}.

Finally, by \Cref{eq:extremal}, every finite extremal class satisfies 
\[
    \LR(\cE)\leq \VCdim(\cE)+1. 
\]
By monotonicity of the list replicability number under taking subclasses, we conclude
\[
    \LR(\cC)
    \leq
    \LR(\cE)
    \leq
    \VCdim(\cE)+1
    \leq
    11s_{\min}(\cC)+1.\qedhere
\]
\end{proof}

\begin{remark}
It is easy to see that the maximum star number is upper bounded by eluder dimension and height\footnote{\label{note1}In fact, Li, Kamath, Foster, and Srebro~\cite{li2022understanding} showed that the eluder dimension is characterized by the maximum of star number and \emph{threshold dimension}: $\max\{s_{\max}(\cC), \tdim(\cC)\}\leq \edim(\cC)\leq4^{\max\{s_{\max}(\cC), \tdim(\cC)\}}$.  }. In particular, for every concept class $\cC$, we have
\begin{equation}
s_{\min}(\cC)\leq s_{\max}(\cC) \leq \edim(\cC)=\height(\cC)-1. 
\end{equation}
Thus, for total classes, \cref{thm:lr_smin} is a strengthening, ignoring constant factors, of the height/eluder bound \(\LR(\cC)\le \height(\cC)\) of \cref{thm:height_bounds_LR}. The height bound, however, applies in the more general setting of partial concept classes. 
\end{remark}

\paragraph{Star number and sign-rank.}
Li, Kamath, Foster, and Srebro~\cite{li2022understanding} asked whether there exist concept classes with bounded $s_{\max}$ but arbitrarily large sign-rank. They also conjectured the stronger statement that there exist concept classes with constant $\edim$ and arbitrarily large sign-rank. We disprove both of these conjectures: sign-rank is bounded by a function of $s_{\max}$ alone.

\begin{theorem}
\label{thm:starnumber_signrank}
For every sign matrix $A$ with $s_{\max}(A) \le d$,
\[
\signrank(A) \;\le\; 2\,f(d)^2 \;\le\; 2^{2^{O(d\log d)}},
\]
where $f$ is the function from~\cite[Theorem~2.1]{atminas2022classes}.
\end{theorem}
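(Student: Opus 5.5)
The plan is to turn the hypothesis $s_{\max}(A)\le d$ into a statement about forbidden induced bipartite subgraphs, apply the structural theorem \cite[Theorem~2.1]{atminas2022classes} to get a partition into a bounded number of very simple blocks, and then add up sign-rank over the blocks. I am relying on my recollection of that theorem's form, namely that such graphs split into $f(d)$ parts with every pair of parts inducing a chain graph or its bipartite complement; checking this against the source is the crux.

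\textbf{Step 1: graph encoding.} Associate to $A$ the bipartite graph $G_A$. Its two sides are the rows (concepts) and the columns (points), and row $c$ is adjacent to column $x$ iff $A_{c,x}=+1$.

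\textbf{Step 2: forbidden configurations.} The center $h$ in \Cref{def:star_number} is arbitrary, so it may be chosen independently on each column. I will check that each configuration forbidden in \cite[Theorem~2.1]{atminas2022classes} produces a star of size $d+1$ for a suitable $h$.
\begin{itemize}
\item An induced matching $r_1x_1,\dots,r_{d+2}x_{d+2}$ gives such a star. Take $h\equiv -1$ on $x_1,\dots,x_{d+1}$, let $c_i=r_i$ for $i\le d+1$, and let $c_0=r_{d+2}$, which is $-1$ on all of $x_1,\dots,x_{d+1}$.
\item An induced co-matching of size $d+2$ is handled symmetrically with $h\equiv +1$.
\item Mixed configurations, such as star forests or half-graph-plus-matching patterns, are handled by choosing $h$ columnwise to agree with the designated base row.
\end{itemize}
Hence $G_A$ lies in the hereditary class covered by the theorem, with parameter depending only on $d$. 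The theorem then partitions rows and columns into at most $f(d)$ parts each. For every pair (row part $R_i$, column part $X_j$), the induced bipartite graph is a chain graph or the bipartite complement of one. In matrix terms, the submatrix $A[R_i,X_j]$ has rows and columns that can be ordered so that it becomes a threshold (staircase) matrix, possibly negated.

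\textbf{Step 3: sign-rank of each block.} A staircase sign matrix has sign-rank at most $2$. Order the rows with thresholds $t_r$ and place the columns at positions $p_x$ chosen in the gaps. Then $B_{r,x}=p_x-t_r$, or its negation, is a rank-$\le 2$ real matrix with the correct signs and no zero entries.

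\textbf{Step 4: gluing.} For each block, let $B^{(i,j)}$ be this rank-$\le 2$ matrix placed in positions $R_i\times X_j$, with zeros elsewhere. Because the blocks partition the entries, the matrix $B=\sum_{i,j}B^{(i,j)}$ has $\sign(B)=A$ entrywise. Its rank is at most $\sum_{i,j}\operatorname{rank}B^{(i,j)}\le 2f(d)^2$. The quantitative bound $f(d)\le 2^{2^{O(d\log d)}}$ is then read off from \cite{atminas2022classes}.

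\textbf{Main obstacle.} The main risk is Step 2, matching the exact list of forbidden induced subgraphs in \cite[Theorem~2.1]{atminas2022classes} to stars with a suitable, possibly non-constant, center $h$. This is especially delicate for any one-sided or mixed configurations, where the extra base row $c_0$ must be extracted from the configuration itself, which may require an index offset such as size $d+2$. If the theorem's blocks are more general than chain graphs, Step 3 must be replaced by a bound on the sign-rank of those blocks.
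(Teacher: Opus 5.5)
Your proposal is correct and follows the paper's proof essentially step for step. The paper also excludes induced matchings of size $d+2$ in the bipartite graph and in its bipartite complement, using stars centered at the constant hypotheses $\mathbf{1}$ and $\mathbf{-1}$; it then applies Atminas's partition into at most $f$ parts per side with $2K_2$-free (chain-graph) blocks, realizes each block with sign-rank at most $2$, and uses subadditivity to get $2f^2$.

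As the paper applies Atminas's theorem, its only hypotheses are the two matching exclusions you already verified, so your third bullet on ``mixed configurations'' is unnecessary. Likewise, ``chain graph or its bipartite complement'' is just ``chain graph'', since that class is closed under bipartite complementation.
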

\begin{proof}
Let $\cC\coloneqq \cC_A\in \{\pm 1\}^\cX$ denote the binary concept class corresponding to $A$, where $\cX$ is the set of columns of $A$. Consider the bipartite graph $G_\cC$ with parts $\cC$ and $\cX$, where $c \in \cC$ is adjacent to $x \in \cX$ if and only if $c(x) = -1$. Let $d \coloneqq  s_{\max}(\cC)$.  Then since $s_{\mathbf{1}}(\cC)\le d$ and $s_{\mathbf{-1}}(\cC) \le d$, where $\mathbf{1}$ and $\mathbf{-1}$ denote the all-ones and all-minus-ones hypotheses\footnote{Conversely, since every $h \in \{\pm 1\}^{\cX}$ satisfies $s_h(\cC) \le s_{\mathbf{1}}(\cC) + s_{\mathbf{-1}}(\cC)$, we have $s_{\max}(\cC) \le s_{\mathbf{1}}(\cC) + s_{\mathbf{-1}}(\cC)$.}, neither $G_\cC$ nor its bipartite complement contains an induced matching of size $d+2$. Atminas~\cite[Theorem~2.1]{atminas2022classes} showed that there is a function $f(d) \le 2^{2^{O(d\log d)}}$ such that whenever a bipartite graph $G = (A \cup B, E)$ and its bipartite complement both omit induced matchings of size $d$, the parts admit partitions $A = A_1 \cup \cdots \cup A_u$ and $B = B_1 \cup \cdots \cup B_u$ with $u \le f(d)$ such that every induced subgraph $G[A_i, B_j]$ contains no induced copy of $2K_2$.

We combine this with the following standard fact about $2K_2$-free bipartite graphs.

\begin{proposition}[{\cite{MahadevPeled1995}; see \cite[Theorem 6.3]{HeggernesKratsch2007Certifying}}]
\label{prop:2K2_thr}
If a bipartite graph $G = (A \cup B, E)$ contains no induced matching with two edges, then there is an ordering $B = \{b_1,\ldots,b_m\}$ such that $a b_i \in E$ implies $a b_j \in E$ for all $j\ge i$.
\end{proposition}

The adjacency pattern of \Cref{prop:2K2_thr} is realized by points and thresholds on a line, so each block $G[A_i, B_j]$ in Atminas's partition, viewed as a sign matrix, has sign-rank at most $2$. The matrix $A$ is partitioned into a $u \times u$ grid of such blocks with $u \le f(d)$, and since sign-rank is subadditive under both horizontal and vertical concatenation, we conclude $\signrank(A) \le 2u^2 \le 2f(d)^2$.
\end{proof}

\subsubsection{Separation of coindex and list replicability for partial classes.}
All three parameters, $\Ind_{\ZZ_2}$, $\coInd_{\ZZ_2}$, and $\LR$, extend naturally to partial concept classes $\cC \subseteq \{\pm 1,\star\}^{\cX}$. A distribution $\mu$ over $\cX \times \{\pm 1\}$ is realizable by such a $c$ if it is supported on pairs $(x,c(x))$ with $c(x) \neq \star$.   

One advantage of allowing partial concept classes is that they are flexible enough to encode known classical topological examples.  In particular, building on an observation of Frick, Hosseini, and Vasileuski~\cite{frick2026signrank}, standard free $\ZZ_2$-spaces with bounded $\Z_2$-coindex but unbounded $\Z_2$-index can be realized by partial concept classes. Combining this with \Cref{thm:index_vs_LR}, we obtain a dimension-free separation between $\coInd_{\Z_2}$ and $\LR$. 

\begin{restatable}{corollary}{LRvscoInd}\label{cor:separation_LR_coInd}
There exist partial concept classes $\cC \subseteq \{\pm 1,\star\}^n$ with
\[
\coInd_{\ZZ_2}(\cC) = O(1) \qquad\text{and}\qquad \LR(\cC) = \omega(1).
\]
\end{restatable}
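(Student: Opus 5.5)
The plan is to realize, by partial concept classes, a standard family of free $\ZZ_2$-spaces whose coindex stays bounded while their index grows without bound, and then to transfer the index lower bound to list replicability. Note that \Cref{thm:index_vs_LR} extends to partial classes. Granting that extension, it yields $\LR(\cC) \ge (\Ind_{\ZZ_2}(\cC)+1)/2$, so unbounded index forces $\LR(\cC)=\omega(1)$. The whole corollary therefore reduces to finding partial classes with $\coInd_{\ZZ_2}(\cC)=O(1)$ and $\Ind_{\ZZ_2}(\cC)=\omega(1)$.

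For the topological input I would use the classical fact that the index–coindex gap is unbounded. A concrete choice is a family of free $\ZZ_2$-simplicial complexes $K_n$ with $\coInd_{\ZZ_2}(K_n)\le 1$ (or some other fixed constant) and $\Ind_{\ZZ_2}(K_n)\to\infty$. The natural candidate is a $\ZZ_2$-space of the form $\mathbb{S}^1 * \cdots$, where cohomological (Stiefel--Whitney height) arguments bound the index from below, while a dimension or connectivity argument bounds the coindex. Examples of this kind appear in Matou\v{s}ek's book and in the work of Csorba and of Simonyi--Tardos on box complexes of graphs, where coindex and index can differ arbitrarily.

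Next I encode such a free simplicial $\ZZ_2$-complex $K$ as a partial class, following the observation of Frick, Hosseini, and Vasileuski. Take the domain $\cX$ to be one representative from each antipodal vertex pair, so that each vertex is encoded as $(x,\pm1)$. For each maximal simplex $\sigma$, define a partial concept $c_\sigma$ that labels each $x$ with the sign of the vertex of $\sigma$ lying over it, and assigns $\star$ elsewhere. Freeness guarantees that no simplex contains both $(x,+1)$ and $(x,-1)$, so each $c_\sigma$ is well defined. Then $\Delta_{\cC^\pm}$ is the union of simplices of distributions supported on the faces of $K$, and is therefore $\ZZ_2$-homeomorphic to $|K|$, with label negation corresponding to the antipodal action. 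Consequently the index and coindex of the class equal those of $K_n$.

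The main obstacle is this identification. I expect to handle it by checking that realizable distributions correspond exactly to points of faces of $K$. This needs one closure condition: every set of vertex-literals realized by a single concept must be a face. That holds because the literals of $c_\sigma$ are precisely the vertices of $\sigma$, so it suffices to take $\cC$ to consist of the maximal simplices and their negatives. A secondary point is confirming that \Cref{thm:index_vs_LR} indeed holds for partial classes. If its proof uses only realizable distributions and the learner, it should go through verbatim.
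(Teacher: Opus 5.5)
Your outer reduction is sound and coincides with the paper's. \cref{thm:index_vs_LR_eps} is already stated for partial classes, so $\LR(\cC)\ge(\IndZ(\cC)+1)/2$ is available. Your encoding of a finite free $\ZZ_2$-simplicial complex as $\Delta_{\cC^\pm}$ is exactly \cref{lem:universality_of_Delta_C}: freeness excludes simplices containing both $v$ and $\nu(v)$, and the faces of the $\sigma$'s and $\nu(\sigma)$'s are precisely the simplices of $\Delta_{\cC^\pm}$.

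The genuine gap is the step that carries all the content. You never produce free $\ZZ_2$-spaces with $\coIndZ=O(1)$ and $\IndZ=\omega(1)$; ``a concrete choice is a family $K_n$ with\dots'' only restates what is needed. The candidate you do name goes the wrong way. Joins raise the coindex, since $\coIndZ(X*Y)\ge\coIndZ(X)+\coIndZ(Y)+1$, and a $k$-fold join of $\bS^1$ is $\bS^{2k-1}$, whose index equals its coindex. The mechanisms you list cannot do the job either. Bounding the dimension would bound the index as well, and connectivity gives only lower bounds on the coindex. The pointer to box complexes is not an example whose two bounds you have actually checked.

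The paper's input is $\mathbb{RP}^{2N-1}$ with the free involution $[z]\mapsto[iz]$ induced by multiplication by $i$ on $\mathbb{C}^N$. Its coindex is at most $1$ by a lifting argument. For $k\ge 2$, an equivariant map $g\colon\bS^k\to\mathbb{RP}^{2N-1}$ lifts to $\tilde g\colon\bS^k\to\bS^{2N-1}$ with $\tilde g(-x)=\varepsilon i\,\tilde g(x)$ for a constant $\varepsilon\in\{\pm1\}$. Then $\tilde g(x)=\varepsilon^2 i^2\,\tilde g(x)=-\tilde g(x)$, a contradiction.

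Its index tends to infinity, a nontrivial fact the paper imports from Matou\v{s}ek's book. A Stiefel--Whitney height argument does not give it: $w_1$ of the double cover $\mathbb{RP}^{2N-1}\to\bS^{2N-1}/\ZZ_4$ already satisfies $w_1^2=0$.

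Finally, this space is a manifold, not a simplicial complex. You need a finite $\ZZ_2$-equivariant triangulation (Illman's theorem, \cref{thm:Illman}) before your encoding applies. Your proposal sidesteps this only by assuming the $K_n$ are simplicial from the start. With this example and the triangulation step supplied, the rest of your argument goes through as written.
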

 
We outline the argument below, leaving the details for \cref{sec:cor_and_connections}. The argument builds on an observation of Frick, Hosseini, and Vasileuski that every finite free $\Z_2$-simplicial complex arises as $\Delta_{\cC^\pm}$ for some partial concept class $\cC$~\cite{frick2026signrank}. Moreover, by Illman's equivariant triangulation theorem~\cite{illman_smooth_1978}, every compact smooth free $\ZZ_2$-manifold admits a finite $\ZZ_2$-equivariant triangulation. This gives a recipe for importing standard examples from equivariant topology to partial concept classes.
Applying this reduction to the spaces $\mathbb{RP}^{2N-1}$, 
with the free $\ZZ_2$-action induced by multiplication by $i$ on $\mathbb{C}^N$, one obtains partial concept classes with bounded $\coInd_{\ZZ_2}$ and unbounded $\Ind_{\ZZ_2}$.  Combined with \Cref{thm:index_vs_LR}, this gives a dimension-free separation between $\coInd_{\ZZ_2}$ and $\LR$.

The authors of \cite{frick2026signrank} have privately communicated to us that they have independently proved a separation of $\ZZ_2$-coindex from the $\ZZ_2$-index for partial concept classes using an explicit triangulation of $\mathbb{RP}^{2N-1}$, avoiding Illman's theorem.  

\subsubsection{List replicability under joins and concatenations.}\label{subsec:joins_concats}
We next record quantitative composition properties of list replicability. 

Given $\cC_1 \subseteq \{\pm 1\}^{X_1}$ and $\cC_2 \subseteq \{\pm 1\}^{X_2}$ over disjoint finite domains $X_1$ and $X_2$, define their \emph{join} $\cC_1 * \cC_2 \subseteq \{\pm 1\}^{X_1 \sqcup X_2}$ as
\[
\cC_1 * \cC_2 \;\coloneqq\; \bigl\{c \in \{\pm 1\}^{X_1 \sqcup X_2} : c|_{X_1} \in \cC_1 \text{ and } c|_{X_2} \in \cC_2\bigr\}.
\]
The terminology is motivated by the fact that $\Delta_{\cC_1^{\pm} * \cC_2^{\pm}}$ is homeomorphic to the topological join $\Delta_{\cC_1^{\pm}} * \Delta_{\cC_2^{\pm}}$: since $X_1$ and $X_2$ are disjoint, every realizable distribution for $\cC_1 * \cC_2$ is a convex combination of a distribution supported on $X_1$ and one supported on $X_2$ \cite[Lemma 23]{chornomaz2025spherical}. Note that the $\ZZ_2$-index is subadditive, and the coindex is superadditive under joins (see, e.g.,~\cite{matousek2003borsuk}),
\begin{equation}
\label{eq:join_index}
\Ind_{\ZZ_2}(\cC_1 * \cC_2) \;\le\; \Ind_{\ZZ_2}(\cC_1) + \Ind_{\ZZ_2}(\cC_2) + 1,
\end{equation}
and 
\begin{equation}
\label{eq:join_coindex}
\coInd_{\ZZ_2}(\cC_1 * \cC_2) \;\ge\; \coInd_{\ZZ_2}(\cC_1) + \coInd_{\ZZ_2}(\cC_2) + 1. 
\end{equation}

In light of \Cref{thm:index_vs_LR}, it is natural to ask how list replicability behaves under joins. It is not difficult to prove $\LR(\cC_1 * \cC_2) \le (\LR(\cC_1)+1)\cdot (\LR(\cC_2)+1)$ by essentially running the list-replicable algorithms for $\cC_1$ and $\cC_2$ separately on the corresponding parts of the sample and combining their outputs into a single hypothesis on $X_1 \sqcup X_2$: the resulting list consists of all pairs of hypotheses from the two lists. We improve this multiplicative bound to an additive one, which is sharp.

\begin{theorem}
\label{thm:LR_join}
Given two concept classes $\cC_1$ and $\cC_2$ over disjoint finite domains, we have
\[\LR(\cC_1 * \cC_2) \le \LR(\cC_1) + \LR(\cC_2). \] 
\end{theorem}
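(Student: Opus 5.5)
The plan is to reduce both learners to a common \emph{threshold form}, in which the only internal randomness that matters is a single uniform seed $r\in[0,1]$, and then to feed the \emph{same} seed to both components. As $r$ sweeps $[0,1]$, the pair of outputs then traces a monotone staircase in an $L_1\times L_2$ grid rather than ranging over the whole grid. Write $L_i\coloneqq\LR(\cC_i)$. A monotone staircase in an $L_1\times L_2$ grid visits at most $L_1+L_2-1$ cells, which leaves one spare slot for boundary effects.

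\emph{Step 1 (threshold form).} Fix once and for all a total order on the finite set $\{\pm1\}^{X_i}$. Given an $L_i$-list-replicable $(\epsilon',\delta')$-learner $A_i$ for $\cC_i$, build a new learner $A_i'$ as follows.
\begin{enumerate}
\item Run $A_i$ on $T$ independent fresh samples and form the empirical output distribution $\hat q$.
\item Discard hypotheses whose empirical frequency is below a small cutoff $\eta$, and renormalize.
\item Using the shared seed $r$, output the hypothesis at quantile $r$ of the renormalized $\hat q$ with respect to the fixed order.
\end{enumerate}
For a fixed realizable $\mu$, the true output distribution $q_\mu$ of $A_i$ puts mass at least $1-\delta'$ on a list of at most $L_i$ hypotheses. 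Every hypothesis carrying non-negligible mass has loss at most $\epsilon'$, provided $\delta'\ll\eta$. With high probability $\hat q$ is uniformly close to $q_\mu$, so the quantile map $r\mapsto A_i'(r)$ agrees with the true quantile map of the pruned $q_\mu$ except on a set of $r$ of small measure near the at most $L_i$ breakpoints. To avoid a fixed cutoff $\eta$ creating instability, I would randomize it as well, using an independent shared seed; this is the standard rounding trick. The outcome is that, for each $\mu$, there is a partition of $[0,1]$ into at most $L_i$ intervals such that, with high probability over the samples, the output of $A_i'$ is a fixed accurate hypothesis determined by the interval containing $r$.

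\emph{Step 2 (coupling).} Given a realizable distribution $\mu=p\,\mu_1+(1-p)\,\mu_2$ for $\cC_1*\cC_2$, split the sample by domain. This is possible by \cite[Lemma 23]{chornomaz2025spherical}, which says every realizable distribution decomposes over $X_1$ and $X_2$. Run $A_1'$ on the $X_1$-part and $A_2'$ on the $X_2$-part with the \emph{same} seed $r$, and output the concatenation. The breakpoints of the two interval partitions merge into at most $(L_1-1)+(L_2-1)$ cut points, so $[0,1]$ splits into at most $L_1+L_2-1$ intervals, each producing a fixed pair. The failure event, namely $r$ landing within the estimation error of some breakpoint, has small probability. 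Outside that event the output lies in a list of size at most $L_1+L_2-1$, and its loss is at most $p\epsilon'+(1-p)\epsilon'\le\epsilon$.

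\emph{Step 3 (main obstacle): a part with tiny mass.} When $p$ (or $1-p$) is tiny, there are too few samples on that side to run $A_i'$ reliably, and whether enough samples arrive is itself random, so the output could fail to be replicable. My first attempt is as follows.
\begin{enumerate}
\item Estimate $p$ and compare it with a randomly shifted threshold $\theta$, again taken from shared randomness.
\item If the estimate falls below $\theta$, replace the $X_1$-part of the hypothesis by a fixed default concept from $\cC_1$. The resulting loss is at most $\theta=O(\epsilon)$.
\item Otherwise, run $A_1'$ on a subsample of guaranteed size.
\end{enumerate}
Treat the side with tiny mass symmetrically. Because $\theta$ is random, $p$ is near the threshold only with small probability, so only one regime is active with high probability. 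In the active regime the list is either $\{\text{default}\}\times\{\text{at most }L_2\text{ options}\}$ or the staircase list of size at most $L_1+L_2-1$. In the first case the size is at most $L_2\le L_1+L_2$.

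The delicate points are the following.
\begin{enumerate}
\item All seeds must be shared, so that list membership depends only on $\mu$.
\item The failure probabilities (estimation errors, breakpoint proximity, threshold proximity) must all be controlled by $\delta$.
\end{enumerate}
I expect this bookkeeping to be where the real work lies. Together, Steps 2 and 3 give $\LR(\cC_1*\cC_2)\le L_1+L_2$.
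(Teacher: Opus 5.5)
Your core mechanism, one uniform seed $r$ driving both quantile maps so that the output pairs trace a monotone staircase, is the same as in the paper. Your Step 1 (estimating a black-box learner's output distribution and reading off quantiles) is a reasonable substitute for the paper's open-cover and partition-of-unity machinery. The pruning there is unnecessary, since non-list hypotheses occupy only about $\delta'$ of $[0,1]$ anyway. Its random cutoff is harmless only because pruning shifts the breakpoints by $O(L_i\eta)$.

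\textbf{The gap is in Step 3.} The list must be determined by $\mu$ alone and must capture the output with probability $1-\delta$ over the sample \emph{and all internal coins, including $\theta$}. A random threshold only ensures that, for each fixed $\theta$, $\hat p$ and $p$ lie on the same side of $\theta$. It does not make one regime dominant. Take $\theta$ uniform on $[\theta_0,2\theta_0]$ and $p=1.5\theta_0$. Then each regime occurs with probability about $1/2$, so the list must contain the union of two sets:
\begin{itemize}
\item the default pairs $(h_0,g(r))$, up to $L_2$ of them, generally disjoint from the staircase because the fixed $h_0$ need not lie in the list for $\mu_1$;
\item the staircase, up to $L_1+L_2-1$ pairs.
\end{itemize}
The union can have $L_1+2L_2-1$ elements, e.g.\ $4>3$ when $L_1=1$ and $L_2=2$. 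Your sentence ``in the first case the size is at most $L_2$'' bounds each regime separately rather than their union. A deterministic threshold fails for the same reason, since $\hat p$ falls on either side of it with constant probability when $p\approx\theta$.

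\textbf{The missing idea} is to make the switch to the default continuous in $p$ and route it through the same seed $r$, so that it costs exactly one extra row of the staircase. On the $X_1$ side, output the default when $r\in[0,1-\rho(\hat p)]$, where $\rho$ is a continuous ramp equal to $0$ below $\tau/2$ and to $1$ above $\tau$. Place the rescaled quantile intervals of $A_1'$ after that initial interval, and do the same on the $X_2$ side with $1-\hat p$ in place of $\hat p$.
\begin{itemize}
\item Whenever $\rho(\hat p)>0$, with high probability $p\ge\tau/4$, so there are enough $X_1$-samples to run $A_1'$.
\item Continuity of $\rho$ keeps all breakpoints stable under estimation error.
\item In the transition regime the $X_1$ partition has at most $L_1+1$ cells, while the $X_2$ side cannot be in transition simultaneously because $\tau<1/2$.
\end{itemize}
The merged staircase then has at most $(L_1+1)+L_2-1=L_1+L_2$ cells. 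This is where your spare slot gets spent, and it is what the paper does by placing the default weights $p_0(\mu)=1-r(t)$ and $q_0(\mu)=1-r(1-t)$ first in the two interval partitions. (There $r(\cdot)$ is the cutoff function, not your seed.)
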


The proof uses a quantile-style coupling to synchronize the choices of the two LR learners. Instead of taking all pairs of possible outputs, it aligns the two output distributions on a common interval, so only linearly many pairs can appear. The full proof can be found in \cref{sec:properties_LR}.

For matrices, \cref{thm:LR_join} gives useful decomposition rules. Given two concept classes $\cC_1$ and $\cC_2$ over the same domain $X$, it follows from the definition that $\LR(\cC_1 \cup \cC_2) \le \LR(\cC_1) + \LR(\cC_2)$, since one can run list-replicable learners for both classes and take the union of the resulting lists (see also \cref{lemma:LR_union}). This operation corresponds to the vertical concatenation of the associated sign matrices. On the other hand, horizontal concatenation is controlled by joins: if $A_1$ and $A_2$ have the same number of rows, then the concept class associated with $[A_1 \ A_2]$ is naturally a subclass of the join of the two corresponding concept classes. We thus obtain subadditivity of list replicability under both horizontal and vertical concatenation of matrices.

\begin{corollary}
\label{cor:LR_blocks}
For sign matrices with matching dimensions,
\[
\LR\bigl(\begin{bmatrix} A_1 & A_2 \end{bmatrix}\bigr) \;\le\; \LR(A_1) + \LR(A_2)
\qquad\text{and}\qquad
\LR\bigl(\begin{bmatrix} A_3 \\ A_4 \end{bmatrix}\bigr) \;\le\; \LR(A_3) + \LR(A_4).
\]
\end{corollary}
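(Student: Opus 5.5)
The two inequalities are handled separately. The vertical one is the union bound; the horizontal one goes through \Cref{thm:LR_join} together with monotonicity of $\LR$ under subclasses.

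\emph{Vertical concatenation.} Let $A_3$ and $A_4$ have the same column set $X$, with associated classes $\cC_3,\cC_4\subseteq\{\pm1\}^X$. The rows of $\begin{bmatrix} A_3 \\ A_4\end{bmatrix}$ are exactly the concepts of $\cC_3\cup\cC_4$ (if a row repeats, the class is unchanged). So it suffices to show $\LR(\cC_3\cup\cC_4)\le\LR(\cC_3)+\LR(\cC_4)$, which is the union bound of \cref{lemma:LR_union}.

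The idea behind that lemma, which I would spell out if needed, is as follows. Fix a target distribution $\mu$ realizable by $\cC_3\cup\cC_4$. Run the $L_3$- and $L_4$-list-replicable learners on independent samples, each with accuracy $\epsilon/2$ and confidence $\delta/4$. Then use a fresh validation sample to select a hypothesis whose empirical loss is below a threshold. To make the selection replicable, use a fixed tie-breaking order: prefer the $\cC_3$ learner's output whenever its empirical loss is small. The realizable branch then succeeds with high probability, so the output lies in $\cL^{(3)}_\mu\cup\cL^{(4)}_\mu$.

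One subtlety is that the learner for the non-realizing class receives data it was not designed for. Its output may then be arbitrary, which would break list-boundedness. I expect this to be the main delicate point. I would handle it with a random threshold in the validation step, chosen from a small finite grid: the set of thresholds at which the selection decision is ambiguous has small probability. I will take this as given from \cref{lemma:LR_union}.

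\emph{Horizontal concatenation.} Let $A_1$ have columns $X_1$ and $A_2$ have columns $X_2$, both with the same row index set $R$. Relabel so that $X_1$ and $X_2$ are disjoint. The class of $\begin{bmatrix} A_1 & A_2\end{bmatrix}$ is
\[
\{c_r : r\in R\}, \qquad c_r|_{X_1} = (A_1)_{r,\cdot}, \quad c_r|_{X_2} = (A_2)_{r,\cdot}.
\]
Each such $c_r$ restricts to a concept of $\cC_{A_1}$ on $X_1$ and to a concept of $\cC_{A_2}$ on $X_2$. Hence this class is a subclass of $\cC_{A_1}*\cC_{A_2}$.

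Now $\LR$ is monotone under taking subclasses: a list-replicable learner for the larger class works for every distribution realizable by the smaller one. Combining this with \Cref{thm:LR_join} gives
\[
\LR\bigl(\begin{bmatrix} A_1 & A_2\end{bmatrix}\bigr)\le \LR(\cC_{A_1}*\cC_{A_2})\le \LR(A_1)+\LR(A_2).
\]

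Given the earlier results, both parts are short. The substantive work lies in \Cref{thm:LR_join}, whose quantile coupling is assumed here, and in the replicable selection step of the union lemma.
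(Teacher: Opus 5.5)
Your argument is correct and matches the paper's: the vertical case is exactly \cref{lemma:LR_union}, and the horizontal case rests on the observation that the class of $\begin{bmatrix} A_1 & A_2\end{bmatrix}$ is a subclass of $\cC_{A_1}*\cC_{A_2}$, combined with monotonicity of $\LR$ under subclasses and \Cref{thm:LR_join}. One caution about your aside: the paper proves \cref{lemma:LR_union} simply by taking the union of closed covers, $F_h=F^3_h\cup F^4_h$ on $\Delta_{\cC_3}\cup\Delta_{\cC_4}$, whereas your validation-and-tie-breaking sketch would fail as written (when $\mu$ is realizable only by $\cC_4$, the $\cC_3$-learner's output is unconstrained and can pass your threshold with unboundedly many distinct hypotheses, which a random threshold does not fix), so it is important that your proof rests on the lemma rather than on that sketch.
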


\subsection{Technical overview of \cref{thm:index_vs_LR} and \cref{thm:main}}\label{sec:technical_overview}
In this section, we briefly describe how we prove \cref{thm:index_vs_LR} and \cref{thm:main}. Complete proofs are in \cref{sec:separate_sgn_ind}.

\paragraph{Bounding index by LR.} \cref{thm:index_vs_LR} states that for any concept class $\cC$,
\begin{align*}
\operatorname{\Ind}\limits_{\bZ_2}(\cC) \leq 2 \LR(\cC) - 1.
\end{align*}

To prove this, we first use the fact that a list-replicable algorithm with list size $L$ gives an antipodal-free open cover of the distribution space $\Delta_{\cC^\pm}$, where each point is contained in at most $2L$ open sets (see \cref{thm:closed-cover-lr}). This bounded overlap allows us to use a partition of unity to construct a continuous map from $\Delta_{\cC^{\pm}}$ into $\mathbb{R}^{2L} \setminus \{0\}$.

The antipodal symmetry of the cover ensures that this map is $\bZ_2$-equivariant, while antipodal-freeness ensures that the image avoids the origin. After normalizing, we obtain a $\bZ_2$-equivariant map
\[
    \Delta_{\cC^{\pm}} \amap \mathbb{S}^{2L-1}.
\]
Therefore, by the definition of the $\bZ_2$-index,
\[
    \operatorname{Ind}_{\mathbb{Z}_2}(\Delta_{\cC^{\pm}}) \leq 2L - 1.
\]

\paragraph{Separating index from sign-rank.}
The separation of sign-rank and index in \cref{thm:main} hinges on a particular family of matrices $B_q$ for which $\signrank(B_q)$ grows polynomially in $q$, while $\Ind_{\bZ_2}(B_q)$ is bounded. This family was first used by Alon, Moran, and Yehudayoff to separate sign-rank and VC dimension \citep{alon2016sign}.

\begin{definition}[Finite Projective Plane]\label{def:PZq2}
    Let $\PG(2,q)$ be the finite projective plane of order $q$.
    It is an incidence geometry $(\cP, \cL)$ with $N = q^2 + q + 1$ points and lines such that:
    \begin{enumerate}
        \item Any two distinct lines $\ell_1, \ell_2 \in \cL$ intersect in exactly one point $p \in \cP$.
        \item Any two distinct points $p_1, p_2 \in \cP$ are contained in exactly one line $\ell \in \cL$.
    \end{enumerate}
    Let $B_q$ be the $N \times N$ indicator matrix of this incidence geometry.
    That is, $B_q$ has rows indexed by lines and columns indexed by points, and
    \begin{align*}
        (B_q)_{ij} = \begin{cases}
            1 &\text{if } p_j \in \ell_i\\
            -1 &\text{if } p_j \notin \ell_i.
        \end{cases}
    \end{align*}
\end{definition}

Matrices of this form were useful for the separation of VC dimension and sign-rank because of the polynomial growth of $\signrank(B_q)$, a result which we will apply directly for our separation of index and sign-rank.

\begin{theorem}[\cite{alon2016sign}]\label{thm:poly_growth_sgn_rnk}
    The $N \times N$ indicator matrix $B_q$ of $\PG(2,q)$ satisfies the inequality
    \begin{align*}
        \signrank(B_q) \geq \frac{q^2 - 1}{\sqrt{q}(q-1)} \geq N^{\frac{1}{4}}.
    \end{align*}
\end{theorem}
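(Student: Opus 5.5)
This is the Alon–Moran–Yehudayoff lower bound. I would reproduce their argument, which goes through the Forster-type bound for matrices with a large number of "ones" and no large all-ones rectangle; concretely, I would use Forster's spectral bound in the form
\[
\signrank(M) \;\ge\; \frac{N}{\norm{M}},
\]
valid for any $N\times N$ sign matrix $M$, where $\norm{M}$ is the spectral norm. Applying this directly to $B_q$ is too weak, because $B_q$ has a large top eigenvalue coming from the all-ones direction. So the plan is to first modify the matrix without increasing sign-rank, and then apply Forster.

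\textbf{Step 1 (spectrum of the incidence matrix).} Let $M_q$ be the $0/1$ incidence matrix, so $B_q = 2M_q - J$. The axioms of \Cref{def:PZq2} give $M_q M_q^{\top} = qI + J$, since each line has $q+1$ points and two distinct lines share exactly one point. Hence the singular values of $M_q$ are $q+1$, with multiplicity one along $\mathbf{1}$, and $\sqrt q$ on $\mathbf{1}^\perp$.

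\textbf{Step 2 (re-signing).} I would use the generalized Forster bound, which handles real matrices with nonzero entries: if every entry of $\widetilde M$ satisfies $|\widetilde M_{ij}| \ge 1$ and $\sign \widetilde M = B_q$, then
\[
\signrank(B_q) \;\ge\; \frac{N}{\norm{\widetilde M}}.
\]
I would then look for a matrix $\widetilde M = \alpha M_q - \beta J$ with this sign pattern that kills the top singular direction. Writing $\alpha M_q - \beta J$ on $\mathbf 1$ gives the value $\alpha(q+1) - \beta N$. Choosing $\beta = \alpha(q+1)/N$ makes this zero. The sign condition requires $\alpha - \beta > 0$, which holds, and $\beta > 0$, which also holds. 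Normalizing so that the minimum absolute entry is $1$ gives entries of size $\beta$ and $\alpha - \beta$. Then $\norm{\widetilde M} = \alpha\sqrt q$, and the bound becomes $N/(\alpha\sqrt q)$ with
\[
\alpha = \max\!\left(\frac{1}{\beta/\alpha},\, \frac{1}{1-\beta/\alpha}\right).
\]
Since $\beta/\alpha = (q+1)/(q^2+q+1)$ is small, the binding constraint is $\beta \ge 1$, i.e.\ $\alpha = N/(q+1)$. This gives a bound of order $(q+1)/\sqrt q$, roughly $\sqrt q$.

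The exact constant stated, $\frac{q^2-1}{\sqrt q\,(q-1)} = \frac{q+1}{\sqrt q}$, matches this computation, which is reassuring.

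\textbf{Step 3 (final comparison).} Since $N = q^2+q+1 \le (q+1)^2$, we have $N^{1/4} \le \sqrt{q+1}$. It then remains to check that
\[
\frac{q+1}{\sqrt q} \;\ge\; \sqrt{q+1},
\]
which is equivalent to $q+1 \ge q$ and is trivial.

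\textbf{Main obstacle.} The delicate point is justifying the generalized Forster inequality for a real matrix with entries bounded away from zero. It is not the simple $\pm1$ version, so I would cite Forster together with the Forster–Krause–Lokam–et al.\ extension, as Alon–Moran–Yehudayoff do, rather than reprove it. Alternatively, since the statement is attributed to \cite{alon2016sign}, I could simply invoke that result.
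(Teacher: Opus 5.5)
Your proof is correct, and the paper gives no proof of its own: it simply cites Alon--Moran--Yehudayoff, whose bound rests on this same Forster-type spectral argument. You apply Forster et al.'s inequality $\signrank \ge N\min_{i,j}|\widetilde M_{ij}|/\|\widetilde M\|$ to $\widetilde M = M_q - \frac{q+1}{N}J$, which satisfies $\widetilde M\widetilde M^{\top} = q\,(I - \frac{1}{N}J)$; this gives exactly $\frac{q+1}{\sqrt q} = \frac{q^2-1}{\sqrt q\,(q-1)}$, and your comparison with $N^{1/4}$ via $N \le (q+1)^2$ is fine. One inaccuracy: your opening remark about ``no large all-ones rectangle'' plays no role in the argument you actually give.
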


The remaining half of our separation requires bounding $\Ind_{\bZ_2}(B_q)$ by a constant. This is done by combining \cref{thm:index_vs_LR} with a list-replicable algorithm for $B_q$, which we give in \cref{thm:lr_of_PZq2}.

\paragraph{The algorithm.} Here we treat the matrix $B_q$ as a concept class with concepts labeled by lines and domain points labeled by points. We design a simple list-replicable algorithm for $B_q$ that shows
\begin{align*}
    \LR(B_q) \leq 3.
\end{align*}

The algorithm proceeds as follows. If the sample contains two distinct points labeled $1$, then there is a unique line passing through both of them, so the algorithm outputs that line. Otherwise, if the sample contains only a single point labeled $1$ and that point has low sampling probability, the algorithm ignores it, since doing so affects the error only negligibly with high probability. Finally, if the sample contains exactly one point labeled $1$ and that point has high sampling probability, the algorithm outputs the indicator function of that point. The complete proof of this step can be found in \cref{thm:lr_of_PZq2}.

Combining the two steps, we have that the family $B_q$ has $\Ind_{\bZ_2}(B_q) \leq 5$, but $\signrank(B_q) = \Omega(\sqrt{q})=\Omega (N^{1/4})$.

\subsection{Related work}

\paragraph{Sign-rank.}
The sign-rank of a matrix was introduced by Paturi and Simon~\cite{paturi1986probabilistic}, who observed that $\VCdim(A) \le \signrank(A)$ as a consequence of the VC dimension of halfspaces. Sign-rank has since become a fundamental quantity in theoretical computer science, with connections to learning theory, communication complexity, circuit complexity, combinatorics, discrete geometry, and Banach space theory; see~\cite{hatami2022lower}. Shortly after its introduction, Alon, Frankl, and R\"odl~\cite{alon1985geometrical} used bounds on the number of connected components of real algebraic varieties~\cite{MR161339,MR0200942,MR226281} to prove linear lower bounds on the sign-rank of random matrices. 

For explicit matrices, the VC dimension bound remained the state of the art for nearly two decades, until the breakthrough of Forster~\cite{MR1964645}, who proved the $n \times n$ Hadamard matrix $H_n$ satisfies $\signrank(H_n) \ge \sqrt{n}$, the first super-logarithmic lower bound on the sign-rank of an explicit matrix.   

\paragraph{Topological methods.}
The use of topological obstructions in combinatorics has a rich history, with Lov\'asz's proof of Kneser's conjecture~\cite{lovasz1978kneser} as a landmark example; see~\cite{matousek2003borsuk} for a comprehensive treatment. The common theme is to associate a $\ZZ_2$-space to a combinatorial object and extract consequences from equivariant invariants such as the $\ZZ_2$-index and coindex~\cite{matousek2002topological,csorba2004box,simonyi2006local,simonyi2009local}. The sign-rank lower bound of Frick, Hosseini, and Vasileuski~\cite{frick2026signrank} via the $\ZZ_2$-index fits squarely within this framework.

\paragraph{List replicability.}
Replicability, the requirement that an algorithm produce consistent outcomes when repeated under similar conditions, has become a vibrant research area in learning theory, with various rigorous formulations introduced and studied~\cite{BLM20,malliaris2022unstable,chase2023replicabilitystabilitylearning,bun2023stability,karbasi2023replicability,esfandiari2023replicable,Esfandiarietal23,moran2023bayesian,eaton2024replicable,kalavasis2024replicable,kalavasis2023statistical}. 

A key notion in this area is \emph{global stability}, which emerged from the study of differentially private and online learning~\cite{BLM20,Alon_22_private_and_online}. Chase, Moran, and Yehudayoff~\cite{chase2023replicabilitystabilitylearning} later reformulated global stability in the equivalent language of \emph{list replicability}. Subsequent work has revealed that this notion is intrinsically linked to the geometry and topology of the space of realizable distributions~\cite{chase2023replicabilitystabilitylearning,localborsukulam,BGHH2025stabilitylistreplicabilityagnosticlearners,chornomaz2025spherical,blondal2026borsuk,blondal2026simplicial,blondal2026tightlistreplicabilitybounds}.

\paragraph{Extremal and intersection-closed concept classes.}
The study of extremal and intersection-closed classes is motivated by the fact that their combinatorial structure gives rise to natural learning algorithms, sharp PAC sample-complexity bounds, and simple sample-compression constructions \cite{moran2016labeled,chalopin2022unlabeled,chase2024dual,blondal2026simplicial,helmbold1990learning,haussler1994predicting,floyd1995sample,bendavid1998self,kuhlmann1999teaching,dalmau2003learnability,auer2007new,darnstadt2015optimal,hanneke2016refined,blum2021robust,rubinstein2022unlabelled,hanneke2024star}. 

\paragraph{Relationships between parameters.}   Frick, Hosseini, and Vasileuski~\cite{frick2026signrank} introduced the $\ZZ_2$-index and the closely related parameter $\coInd_{\ZZ_2}(\cC)$, and showed
\[
\VCdim(\cC) - 1 \;\le\; \coInd_{\ZZ_2}(\cC) \;\le\; \Ind_{\ZZ_2}(\cC) \;\le\; \signrank(\cC) - 1.
\] 
Chase, Moran, and Yehudayoff~\cite[Theorem~3]{chase2023replicabilitystabilitylearning} proved that $\VCdim(\cC) \le \LR(\cC)$ for every concept class $\cC$.
The inequality $\LR(\cC) \le \signrank(\cC)$ was conjectured in~\cite{chase2023replicabilitystabilitylearning}, verified there for $\signrank(\cC) = 2$, and later resolved in full by Blondal, Hatami, Hatami, Lalov, and Tretiak~\cite{blondal2026borsuk}.

Finally, the \emph{Littlestone dimension} is a refinement of the VC dimension that characterizes the optimal mistake bound in online learning; in particular, the VC dimension is a lower bound on the Littlestone dimension. A celebrated result of Bun, Livni, and Moran~\cite{BLM20,Alon_22_private_and_online} shows that every class with Littlestone dimension $d$ satisfies
\[
\LR(\cC) \;\le\; 2^{2^{O(d)}}.
\]

\subsection{Concluding remarks and open problems}

\paragraph{How large can the $\ZZ_2$-index be?}
The lower bound $\VCdim(\cC) - 1 \le \Ind_{\ZZ_2}(\cC)$ provides $N \times N$ sign matrices with $\Ind_{\ZZ_2}(A) \ge \log N$, namely those with $\VCdim(A) = \log N$.

Since a typical $N \times N$ sign matrix has sign-rank $\Omega(N)$,  \eqref{eq:chain} leaves a wide gap between this logarithmic lower bound and the polynomial behaviour of sign-rank. It is natural to ask whether the $\ZZ_2$-index or the list replicability number of an $N \times N$ sign matrix can be super-logarithmic, or even polynomial, in $N$. Evidence so far has pointed in the negative direction: Frick, Hosseini, and Vasileuski~\cite{frick2026signrank} showed that a random $N \times N$ sign matrix satisfies $\Ind_{\ZZ_2}(A) = O(\log N)$ with high probability, and that the $N \times N$ Hadamard matrix, the standard example of an explicit matrix with large sign-rank, also satisfies $\Ind_{\ZZ_2}(H_N) = O(\log N)$. Nevertheless, the logarithmic barrier can be surpassed.

\begin{proposition}
\label{prop:log2}
There exist $N \times N$ sign matrices $A$ with
\[
2 \LR(A) - 1 \ge \Ind_{\ZZ_2}(A) \ge \coIndZ(A) \;\ge\; \Omega\!\left(\frac{\log^2 N}{\log\log N}\right).
\]
\end{proposition}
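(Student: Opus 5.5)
The first two inequalities in the chain are already in hand. Theorem~\ref{thm:index_vs_LR} gives $\Ind_{\ZZ_2}(A)\le 2\LR(A)-1$, and Borsuk--Ulam gives $\coIndZ(A)\le\Ind_{\ZZ_2}(A)$. So the whole task is to build $N\times N$ sign matrices whose coindex is $\Omega(\log^2 N/\log\log N)$. To do this I will combine the lower bound $\coInd_{\ZZ_2}(\cC)\ge \VCdim(\cC)-1$ with the superadditivity of coindex under joins, \eqref{eq:join_coindex}. The key feature is that the join $\cC_1*\cdots*\cC_k$ has domain of size $\sum|X_i|$ but $\prod|\cC_i|$ concepts. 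Joins therefore add coindex while multiplying the number of rows, and this trade-off is what pushes past $\log N$.

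\textbf{Construction.} Let $\cC_0=\{\pm1\}^{[m]}$ be the full cube on $m$ points, so $|\cC_0|=2^m$ and $\coInd_{\ZZ_2}(\cC_0)\ge m-1$. Take $k$ disjoint copies and form the join $\cC=\cC_0*\cdots*\cC_0$. Iterating \eqref{eq:join_coindex} (note that the antipodal completion of a join of cubes is still a cube) gives $\coInd_{\ZZ_2}(\cC)\ge k(m-1)+(k-1)=km-1$.

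This example by itself is useless: the join of cubes is just the cube on $km$ points. Its matrix has $2^{km}$ rows and $km$ columns, so it is very far from square. I must therefore restrict to a subclass, keeping only about $N=km$ rows, while preserving large coindex. That calls for a more economical base class. I would use a class $\cC_0$ on a domain of size $n$ with only $\mathrm{poly}(n)$ concepts but coindex $\Omega(\log n)$, for example a random $n\times n$ matrix of VC dimension about $\log n$. Joining $k$ copies gives a class with $n^{O(k)}$ rows, $kn$ columns, and coindex at least $k\log n$.

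\textbf{Making it square.} The dimensions can then be balanced by padding. Set $N=n^{O(k)}$, add $N-kn$ dummy columns (constant, or copies of existing columns), which does not decrease coindex, and choose $k\approx \log n$. Then $\log N\approx k\log n\approx \log^2 n$, while the coindex is about $k\log n\approx\log^2 n\approx \log N$. That gives only logarithmic growth, which is not enough.

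The fix is to start from a base class that is much better than logarithmic relative to its domain but has many rows, namely the full cube on $m$ points, and then join many copies. With $k$ copies of $\{\pm1\}^{[m]}$, the coindex is about $km$, there are $2^{km}$ rows and $km$ columns. This only helps if the join can be thinned out. So the real plan is to find a \emph{subclass} $\cD$ of the join with few rows that still has large coindex, for instance by an equivariant map $\bbS^{d}\to\Delta_{\cD^\pm}$ coming from a coindex-preserving \emph{sparse} family. Concretely, I would take $\cD$ to be the union over a family $\mathcal F$ of $k$-subsets $S\subseteq[M]$ of the cubes $\{\pm1\}^{S}$ extended by a fixed labelling. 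This gives about $|\mathcal F|\,2^{k}$ rows and coindex at least $k-1$ via VC. Tuning $k\approx \log^2 N/\log\log N$ requires a cleverer family, and that is where the $\log\log N$ loss should come from.

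\textbf{Main obstacle.} The crux is to exhibit a class with $\mathrm{poly}(N)$ rows and columns whose coindex exceeds its VC dimension by a factor of about $\log N/\log\log N$. I would try an iterated join whose factors are small cubes of size $m\approx\log N/\log\log N$, with $k\approx\log N$ factors, restricted to a subfamily of concepts chosen so that an equivariant map from $\bbS^{km-1}$ survives. The bound $\coInd(\cC_1*\cC_2)\ge\coInd(\cC_1)+\coInd(\cC_2)+1$ only needs the realizable distributions of the join, so one could hope the restriction preserves the join structure locally. Verifying this restriction step is where I expect the real difficulty to lie.
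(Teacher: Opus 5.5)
Your handling of the first two inequalities (via Theorem~\ref{thm:index_vs_LR} and Borsuk--Ulam) is correct. The gap is the coindex lower bound: you never construct the required matrices and end by naming that construction as the ``main obstacle.'' The routes you do sketch cannot succeed. Every lower bound you invoke comes from $\coInd_{\ZZ_2}\ge\VCdim-1$, and a class with $r$ concepts has VC dimension at most $\log_2 r$. Under joins the coindex bounds add but the row counts multiply. If factor $i$ has $r_i$ rows, superadditivity plus the VC bound gives at most $\sum_i(\VCdim_i-1)+(k-1)=\sum_i\VCdim_i-1\le\log_2\prod_i r_i\le\log_2 N$. The same ceiling applies to your union of subcubes over a family $\mathcal F$: it has $|\mathcal F|2^k$ rows, so the certified bound $k-1$ is below $\log_2 N$. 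It also applies to any thinning of joins of cubes whose coindex is certified by shattering. To beat $\log N$ you need a base class whose coindex is large compared to the logarithm of its number of \emph{concepts}, and this must be proved topologically rather than by shattering.

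The missing ingredient is the transpose of your cube. This is the $m\times 2^m$ matrix $\mathcal U_m$ with one column for every sign pattern, i.e.\ the $m$ concepts $x\mapsto x_i$ on the domain $\{\pm1\}^m$. It has only $m$ rows and VC dimension $\lfloor\log_2 m\rfloor$, yet Chornomaz, Moran, and Waknine~\cite{chornomaz2025spherical} proved $\coInd_{\ZZ_2}(\mathcal U_m)\ge m-2$. Its shape is exactly what joins reward. Joining $k=m/\log_2 m$ copies gives $m^k=2^m$ rows and $k2^m$ columns, and by \eqref{eq:join_coindex} the coindex is at least $k(m-2)+(k-1)=\Omega(m^2/\log m)$. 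Placing this inside an $N\times N$ matrix with $N=k2^m$ cannot decrease coindex. Since $\log N=\Theta(m)$, this gives $\Omega(\log^2 N/\log\log N)$. Your heuristic that joins add coindex while multiplying rows was the right one. You oriented the cube the wrong way: you needed few rows and many columns, not $2^m$ rows and $m$ columns.
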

\begin{proof}[Proof of \cref{prop:log2}, originally from \cite{chornomaz2025spherical}]
    Consider the $m \times 2^m$ matrix $\mathcal{U}_m$, containing a column for every sign pattern on the $m$ rows.
    Chornomaz, Moran, and Waknine show that it has $\Z_2$-coindex of at least $m-2$.
    Since the coindex is superadditive with respect to joins (recall \cref{subsec:joins_concats}), we take the join of $k = m / \log_2 m$ copies of $\mathcal{U}_m$.
    \[
        \mathcal{U}_m^k = \mathcal{U}_m * \dots * \mathcal{U}_m,
    \]
    and obtain $\coIndZ(\mathcal{U}_m^k) \geq (m-2)(m) / \log m$. Note $\mathcal{U}_m^k$ has $m^{m / \log m} = 2^m$ rows and $(m / \log  m) 2^m$ columns. Therefore, it is contained in an $N \times N$ sign matrix $A$, where $N = (m / \log m) 2^m$, with
    \begin{align*}
        2 \LR(A) - 1 &\geq \IndZ(A) \geq \coIndZ(A) \geq
        \Omega\left(\frac{m^2}{\log m}\right)
        = \Omega \left(\frac{\log^2 N}{\log\log N}\right).
        \qedhere
    \end{align*} 
\end{proof}

The lower bound provided in \Cref{prop:log2} is currently the strongest known lower bound on the list replicability and the $\mathbb{Z}_2$-index of any $N \times N$ matrix. We conjecture that this bound is essentially tight (see also \cite[Question 37]{frick2026signrank}).

\begin{conjecture}
Every  $N \times M$ sign matrix $A$ satisfies 
\[ \LR(A) = O(\log N \log M).\] 
\end{conjecture}

\paragraph{Upper bounds by a function of VC dimension?}

Perhaps the most intriguing open question in the study of list replicability is whether $\LR(\cC)$ can be upper bounded by a function of VC dimension. Originally posed in \cite{chase2023replicabilitystabilitylearning}, this question was resolved for extremal concept classes in \cite{blondal2026simplicial}, where it was shown that $\LR(\cC) = \Theta(\VCdim(\cC))$. However, the question remains open for arbitrary finite concept classes.

Moreover, our result that $\Ind_{\ZZ_2}(\cC) = O(\LR(\cC))$ suggests a natural addition to the question:

\begin{problem}\label{problem:1}
Can $\Ind_{\Z_2}(\cC)$ or $\LR(\cC)$ be bounded by a function of $\VCdim(\cC)$?
\end{problem}

Note that a positive answer to \cref{problem:1} would also imply $\coIndZ(\cC)$ is bounded by a function of $\VCdim(\cC)$. This question has already been asked in \cite{chornomaz2025spherical} as it has important implications for the learnability of \emph{large-margin halfspaces}.

\paragraph{Embeddings.}

One potential approach to resolving \cref{problem:1} is to embed the concept class $\cC$ into an extremal or intersection-closed concept class while keeping the VC dimension small. Recall that, in both settings, $\IndZ$ and $\LR$ are bounded above linearly by $\VCdim$. The problem of embedding into extremal concept classes has been widely studied and remains open \cite{moran2016labeled, chase2024dual}.

The analogous problem for intersection-closed classes was studied by Hanneke in \cite{hanneke2024star} through the minimum star number $s_{\min}$. Hanneke showed that if $\VCdim(\cC)=1$, then $s_{\min}(\cC)=1$; equivalently, every concept class of VC dimension $1$ can be embedded into a generalized intersection-closed class of VC dimension $1$. On the other hand, Hanneke also observed that there exists a family of finite concept classes with VC dimension $3$ and arbitrarily large minimum star number $s_{\min}$. This implies that, in general, one cannot embed an arbitrary concept class into an intersection-closed class without incurring a blow-up in VC dimension.

Hanneke asked what happens in the case $\VCdim(\cC)=2$. We show that $s_{\min}$ can be arbitrarily large in this case as well.

\begin{theorem}
For every $k\ge 2$, there is a finite total concept class $C_k$ with
\[
\VCdim(C_k)=2
\qquad\text{and}\qquad
s_{\min}(C_k)\ge k .
\]
\end{theorem}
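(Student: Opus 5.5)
The plan is to build $C_k$ from the incidence structure of a finite projective plane $\PG(2,q)$ with $q>k$, the same object used for \cref{thm:main}. The reason is that lines already produce large stars around any center with many $-1$ entries, while keeping VC dimension $2$.

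Concretely, let $\cP$ be the point set and view each line $\ell\in\cL$ as the concept that is $+1$ exactly on $\ell$. Then $\VCdim(\cL)=2$: two points lie on a common line, and the remaining patterns on a pair are realized by counting lines. No three points are shattered, because a collinear triple cannot be split and a non-collinear triple cannot be all $+1$.

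\textbf{Step 1 (centers with many $-1$'s).} Suppose $h^{-1}(-1)$ contains $k$ points $x_1,\dots,x_k$. Through $x_i$ pass $q+1>k-1$ lines, and each $x_j$ with $j\ne i$ kills at most one of them, so some line $c_i$ contains $x_i$ and avoids all $x_j$ with $j\neq i$. For $c_0$ we need a line avoiding all $k$ points. The $k$ points meet at most $k(q+1)<q^2+q+1$ lines, so such a line exists. Together these give $s_h\ge k$.

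\textbf{Step 2 (centers with fewer than $k$ entries $-1$).} Here $h$ is $+1$ on almost the whole domain, so the star must be formed by concepts that are $+1$ on many of the $x_i$ simultaneously. Lines cannot do this, since $c_i$ would need to contain two points of $\{x_j\}$ and hence coincide with a fixed line. So we need a second family. The naive additions all fail:
\begin{itemize}
\item complements of lines on the same domain shatter a triangle;
\item co-singletons on the same domain shatter any triple;
\item gluing line complements on a second copy $X_2$ of the plane, extended by constants, still lets the first family's constant pattern combine with the second family to shatter a triangle.
\end{itemize}
The plan is therefore to use a disjoint-domain gluing $X=X_1\sqcup X_2$, each $X_i$ a copy of $\cP$, with the extensions chosen so that no concept of one family can supply a missing pattern on a triple from the other copy. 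One candidate is to extend by \emph{non-constant} but very sparse patterns (e.g.\ a single designated point) rather than constants. One then checks directly that any shattered set lies within a single copy and that each copy carries VC dimension $\le 2$. With the extension fixed, a center $h$ either has $\ge k$ entries equal to $-1$ on $X_1$ (and Step 1 applies there) or has $\ge k$ entries equal to $+1$ on $X_1$. In the second case we want $\ge k$ entries of the appropriate sign on $X_2$, arranged by letting the second family live on $X_2$ with the roles of the signs swapped.

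\textbf{Main obstacle.} The hard part is exactly this gluing. The requirement is that every center $h\in\{\pm1\}^X$, including ones that are mostly $+1$ on one copy and mostly $-1$ on the other, sees a $k$-star in one of the two families. At the same time, the union of the families must shatter no $3$-set, in particular none of the form two points in one copy plus one in the other, or a triangle within one copy completed by a pattern of the other family. The first thing to try is to take $X_1,X_2$ and the two families ($\cL$ on $X_1$, complements of lines on $X_2$) with extensions $-1$ on the opposite copy, plus all sign-flipped versions in a third and fourth copy. One then does a case analysis on how a putative shattered triple distributes among the copies, to verify $\VCdim=2$. The star bound then follows from Step 1 applied to whichever copy has $\ge k$ points of the needed sign, which exists by pigeonhole once each copy has more than $2k$ points.
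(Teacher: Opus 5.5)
\textbf{There is a genuine gap.} You never specify a class $C_k$, and Step~2 (centers with few $-1$ entries) is exactly where the theorem needs an idea. Step~1 is correct, but the candidate in your last paragraph fails on both requirements.

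\emph{VC dimension.} With the lines family extended by $-1$ on $X_2$, the all-$(-1)$ pattern is available on $X_2$. On a non-collinear triple in $X_2$, the complements of lines realize the other seven patterns: a line missing the triple, a line through one point missing the other two, and the line through two of them. So that triple is shattered. This is precisely the failure you listed in your own third bullet.

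\emph{Star number.} Take $h$ equal to $+1$ on $X_1$ and $-1$ on $X_2$.
\begin{itemize}
\item A star inside $X_1$ has at most $2$ points: for $m\ge 3$, every $c_i$ is a line through two points of the line $c_0$, so it equals that line.
\item A star inside $X_2$ has at most $3$ points: for $m\ge 4$, the lines $\ell_1,\ell_2$ whose complements are $c_1,c_2$ both contain $x_3,x_4$, so they coincide, yet only $\ell_2$ contains $x_1$.
\item A mixed star is impossible. The concept flipping a point of $X_2$ to $+1$ must be a line complement, which is $-1$ on all of $X_1$, so it disagrees with $h$ at the star points there.
\end{itemize}
Thus $s_h\le 3$.

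The logical slip is your closing pigeonhole step. Each copy has its own needed sign, and $h$ can take the wrong sign on every copy at once. Finding $k$ points of some sign in some copy therefore does not help, and adding sign-flipped copies does not change this.

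\textbf{The missing idea} is to drop the projective plane and use \emph{disjoint} blocks. The paper takes $X_k=[k]\times[k]$ with rows $R_i$ and
$C_k=\{\varnothing\}\cup\{R_i\}\cup\{R_i\setminus\{x\}: x\in R_i\}$.

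\emph{VC dimension.} Every concept is $+1$ only inside a single row, so a shattered set lies in one row. Inside a row, no concept is $+1$ on exactly one point of a triple. Hence $\VCdim(C_k)=2$.

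\emph{Star number.} The dichotomy on $h$ is exhaustive.
\begin{itemize}
\item If every row contains some $x_i$ with $h(x_i)=-1$, then $\{x_1,\dots,x_k\}$ is a star with $c_0=\varnothing$ and $c_i=R_i$. This is the analogue of your Step~1.
\item Otherwise some row $R_i$ is entirely $+1$ under $h$, and $R_i$ itself is a star with $c_0=R_i$ and $c_x=R_i\setminus\{x\}$.
\end{itemize}

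In effect this is your gluing-with-$(-1)$-extensions idea, but with $k$ blocks rather than a fixed number of copies. The mostly-$(-1)$ centers are handled \emph{across} blocks, and each block's family handles the mostly-$(+1)$ centers.

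The intersecting structure of $\PG(2,q)$ is what blocks this move in your setting. If you add the co-singletons $\ell\setminus\{x\}$ to the lines, every collinear triple on $\ell$ becomes shattered, since a second line through one of its points supplies the pattern with a single $+1$.
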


\begin{proof}
   Let $X_k=[k]\times [k]$, and write $R_i=\{i\}\times [k]$ for the $i$-th row. For convenience, we will treat concepts as subsets of $\cX$, that is $c(x)=1$ if $x\in c$ and $c(x)=-1$ otherwise. Define
\[
C_k
=
\{\varnothing\}
\cup
\{R_i:i\in[k]\}
\cup
\{R_i\setminus\{x\}:i\in[k],\ x\in R_i\}.
\] 
First, $\VCdim(C_k)\ge 2$. Indeed, if $x,y\in R_i$ are distinct, then $\varnothing, R_i, R_i\setminus\{x\},R_i\setminus\{y\}$
realize all four labellings on $\{x,y\}$. Also, since any concept can only have $1$'s on one row $R_i$, it is not hard to see $\VCdim(C_k)\leq 2$. We conclude $\VCdim(\cC_k)=2$.

Now fix any center function $h:X_k\to\{-1,1\}$. We show that $s_h(C_k)\ge k$. 

If every row $R_i$ contains some point $x_i$ with $h(x_i)=-1$, let
\[
S=\{x_1,\dots,x_k\}.
\]
Then $\varnothing$ agrees with $h$ on $S$, and for each $i\in[k]$, the concept $R_i$ flips exactly the point $x_i$ on $S$. Hence $S$ is a star of size $k$ centered at $h$.

Otherwise, some row $R_i$ is entirely labeled $1$ by $h$. Take $S=R_i$. Then the concept $R_i$ agrees with $h$ on $S$, and for each $x\in R_i$, the concept $R_i\setminus\{x\}$ flips exactly $x$ on $S$. Hence $S$ is again a star of size $k$ centered at $h$.

Thus $s_h(\cC_k)\ge k$ for every $h$, so $s_{\min}(\cC_k)\ge k.$
\end{proof}

\section{Preliminaries}\label{sec:def_and_prop}
In this section, we collect all topological and learning-theoretic tools that we use. We first introduce basic notions from $\bZ_2$-equivariant topology, which allows us to formally define the $\bZ_2$-index of a sign matrix and its corresponding concept class. Afterwards, we go through some fundamental learning theoretical definitions and results, including list replicability and its topological and algorithmic interpretations.

\subsection{The $\Z_2$-topological framework}
\begin{definition}[Simplicial Complex]
An \emph{abstract simplicial complex} $\cK$ on a vertex set $V$ is a collection 
of finite subsets of $V$ that are closed under taking subsets. Its elements are called 
\emph{simplices} and its \emph{dimension} is $\max_{\sigma \in \cK} |\sigma| - 1$.
The \emph{geometric realization} of $\cK$ is the topological space
\begin{align*}
    \|\cK\| \coloneqq \left\{ x \in \mathbb{R}^V : x = \sum_{v \in V} \lambda_v e_v,\ 
    \lambda_v \geq 0,\ \sum_{v \in V} \lambda_v = 1,\ \mathrm{supp}(x) \in \cK \right\},
\end{align*}
where $e_v$ denotes the standard basis vector indexed by $v$ and $\mathrm{supp}(x) = \{v : \lambda_v > 0\}$.
\end{definition}

\begin{definition}[Free $\bZ_2$-Simplicial Complex]
    A \emph{free $\bZ_2$-simplicial complex} is a pair $(\cK,\nu)$ where $\cK$ is an abstract simplicial complex and $\nu$ is a \emph{fixed-point-free simplicial involution}, i.e., a
    simplicial map $\nu \colon \cK \to \cK$ satisfying $\nu \circ \nu = \mathrm{id}_\cK$ and $\nu(\sigma) \neq \sigma$
    for every $\sigma \in \cK$.
\end{definition}

\begin{definition}[Free $\bZ_2$-Space]
A \emph{free $\bZ_2$-space} is a pair $(X,\nu)$ where $X$ is a topological space and $\nu$ is a continuous \emph{fixed-point-free involution}, i.e., a continuous map $\nu \colon X \to X$ satisfying $\nu \circ \nu = \mathrm{id}_X$ and $\nu(x) \neq x$ for every $x \in X$.
\end{definition}

The geometric realization of a free $\bZ_2$-simplicial complex is naturally a free $\bZ_2$-space.
Indeed, if $\cK$ is a free $\bZ_2$-simplicial complex with involution $\nu$, the map
\begin{align*}
    \sum_{v \in V} \lambda_v e_v \mapsto \sum_{v \in V} \lambda_v e_{\nu(v)}
\end{align*}
is an involution on $\|\cK\|$.

\begin{example}
The $n$-dimensional sphere $\bS^n = \set{x \in \RR^{n+1} : \norm{x} = 1}$ equipped with the canonical  involution $\nu(x) = -x$ is a free $\bZ_2$-space.
\end{example}

\begin{definition}[Antipodal Map]
Let $(X,\nu)$ and $(Y,\omega)$ be free $\bZ_2$-spaces. We say a map $f \colon X \rightarrow Y$ is \emph{antipodal} if it commutes with the involutions on each space, that is $f\circ \nu=\omega \circ f$. We write $f \colon X \amap Y$ to denote that $f$ is an antipodal map between $X$ and $Y$.
\end{definition}

The antipodal terminology comes from the fact that $f$ cannot have $f(x) = f(\nu(x))$ for any $x\in X$ due to the involutions $\nu$ and $\omega$ being fixed-point-free.

\begin{definition}[$\bZ_2$-index]
The \emph{$\bZ_2$-index} of a free $\bZ_2$-space $(X,\nu)$ is the minimum integer $n$ such that there is an antipodal map from $X$ into the $n$-dimensional sphere.
\[
    \operatorname{Ind}\limits_{\mathbb{Z}_2}(X)
    \coloneqq \min\set{n \in \N : \exists f\colon X \amap \bS^n}.
\]
\end{definition}

It is also natural to instead consider the smallest sphere that maps antipodally into $X$, giving us the notion of $\bZ_2$-coindex.

\begin{definition}[$\bZ_2$-coindex]
The $\bZ_2$-coindex of a $\bZ_2$-space $X$ is the maximum $n$ such that there is an antipodal map from $\bS^n$ into $X$.
\[
    \operatorname{\coInd}\limits_{\mathbb{Z}_2}(X) \coloneqq \max\set{n \in \N : \exists f \colon \bS^n \amap X}.
\]
\end{definition}

The following is a generalization of the classical Borsuk--Ulam Theorem.

\begin{theorem}[Borsuk--Ulam]
Let $(X,\nu)$ be a free $\bZ_2$-space. Then
$\operatorname{\coInd}\limits_{\mathbb{Z}_2}(X) \leq
\operatorname{\Ind}\limits_{\mathbb{Z}_2}(X)$.
\end{theorem}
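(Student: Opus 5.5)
The plan is to argue by contradiction. Suppose there is an antipodal map $g\colon \bS^n \amap X$ and an antipodal map $f\colon X \amap \bS^m$ with $m < n$. Composing gives $f\circ g\colon \bS^n \amap \bS^m$, which is continuous because both factors are. It is also antipodal: if $\nu$ is the involution on $X$, then
\[
f(g(-x)) = f(\nu(g(x))) = -f(g(x)).
\]
Every antipodal map $\bS^m \amap \bS^n$ for $m<n$ is ruled out by the same reasoning after restricting, so the whole statement reduces to the classical fact that there is no antipodal map $\bS^n \amap \bS^m$ when $m<n$. Taking $n=\coIndZ(X)$ and $m=\IndZ(X)$ then gives $\coIndZ(X)\le\IndZ(X)$.

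To derive the classical fact, first note that it suffices to treat $m=n-1$. Indeed, if $m<n-1$, compose with the equatorial inclusion $\bS^m\hookrightarrow\bS^{n-1}$, which is antipodal. Next, suppose $h\colon \bS^n\amap\bS^{n-1}$ exists. View $h$ as a map into $\RR^n\setminus\{0\}$ and define $F\colon\bS^n\to\RR^n$ by $F(x)=h(x)$. Then $F$ is an odd continuous map with no zero. This contradicts the standard Borsuk--Ulam theorem: every continuous $F\colon\bS^n\to\RR^n$ has some $x$ with $F(x)=F(-x)$, and for an odd map this forces $F(x)=0$.

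The paper already records that $\IndZ(\bS^n)=n$ is equivalent to Borsuk--Ulam, so I would simply invoke this classical result and not reprove it. The fact that $\IndZ(\bS^n)=n$ says exactly that no antipodal map $\bS^n\to\bS^m$ exists for $m<n$.

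The only real obstacle is this classical topological input, which is where the genuine content lies. If a self-contained argument were required, I would use a degree argument: show that an odd map $\bS^{k}\to\bS^{k}$ has odd degree, by induction on $k$ using the restriction to the equator, or via $\Z_2$-homology of $\mathbb{RP}^k$. Then an odd map $\bS^n\to\bS^{n-1}\subset\bS^n$ would be null-homotopic, hence of degree $0$, a contradiction. One bookkeeping point also needs care: the minimum and maximum must be well defined. If no antipodal map into any sphere exists, the index is $+\infty$ and the inequality is trivial. The coindex set is nonempty whenever $X\ne\emptyset$, since $\bS^0\amap X$ via any pair $\{x,\nu(x)\}$.
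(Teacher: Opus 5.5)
Your argument is correct and is the standard one. The paper gives no proof of its own and defers to Matou\v{s}ek, where the inequality follows exactly as you do it: compose $\bS^n \amap X \amap \bS^m$ and invoke the classical fact that no antipodal map $\bS^n \amap \bS^m$ exists for $m<n$. One slip to fix: your sentence ``every antipodal map $\bS^m \amap \bS^n$ for $m<n$ is ruled out'' has the spheres reversed (such maps exist, e.g.\ the equatorial inclusion you use later); you mean $\bS^n \amap \bS^m$.
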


We refer the reader to Jiří Matoušek's excellent textbook on topological methods in combinatorics for more details on $\bZ_2$-spaces and their applications to combinatorics \citep[Section 5]{matousek2003borsuk}.

\subsection{The $\bZ_2$-index of sign matrices and concept classes}

To prove our results in full generality, we will work in the broader framework of partial sign matrices and partial concept classes.

If $A \in \set{1,-1,\star}^{M \times N}$ is a partial sign matrix, the \emph{sign-rank} of $A$, denoted $\signrank(A)$, is the minimum rank of a real matrix $B$ that captures the sign patterns of $A$. That is
\[
    \sign(B_{ij}) = A_{ij} \qquad \text{for every $i \in [M]$, $j \in [N]$ with $A_{ij}\neq \star$}.
\]

\paragraph{Partial concept classes.}

Partial sign matrices can also be interpreted as \emph{partial concept classes}. Given a domain $\cX$, a partial concept class $\cC \subseteq \set{+1,-1,\star}^{\cX}$ is a family of functions, called \emph{concepts}, mapping $\cX \to \set{+1,-1,\star}$. These are fundamental objects in learning theory because they encode bias, which is a prerequisite for any meaningful definition of learning.

The partial concept class of a partial sign matrix $A \in \set{1,-1,\star}^{M \times N}$ is a family $\cC_A \subseteq \set{+1,-1,\star}^{[N]}$ with (partial) concepts given by rows of $A$. That is,
\[
    c_i \colon j \mapsto A_{ij}.
\]

\paragraph{Space of realizable distributions.}

We say that a distribution $\mu\sim \cX\times\set{\pm1}$ is \emph{realizable} by a partial concept class $\cC$ if there exists a partial concept $c$ such that $c(x)=b$ for each $(x,b)$ in the support of $\mu$. We denote the set of all realizable distributions $\mu$ by
\begin{equation*}
\Delta_{\cC}\coloneq \set{\mu : \mu\;\text{realizable by}\; \cC }.    
\end{equation*}
When equipped with \emph{the total variation} (\text{TV}) \emph{distance}, $\Delta_{\cC}$
forms a metric space. $\Delta_{\cC}$ has a natural geometric realization as a subset of the $\ell_1$-sphere.

\begin{equation}
\label{eq:definition_D_C}\Delta_\cC \coloneqq \Set{\mu \in \mathbb{R}^\cX :~\norm{\mu}_1=1 \text{ and } \exists c \in \cC \text{ with } c(x)=b\mu(x) \ \forall (x,b) \in \supp(\mu)} \subseteq \RR^\cX.
\end{equation}

Note that each partial concept $c\in \cC$ corresponds to a simplex $\sigma_{c}\in \Delta_{\cC}$:
\[
    \sigma_{c}=\conv(\set{b \times e_x\;\colon\; x\in \cX,c(x)=b})
\]
where $\set{e_x}_{x\in\cX}$ is the standard basis of $\mathbb{R}^{\cX}$.

\begin{figure}[H]
\begin{center}

%%%% Delta_C in 2D %%%%
\begin{tikzpicture}[scale=0.4]
    \coordinate (LL) at (-4, 0);
    \coordinate (TL) at (-1, 3);
    \coordinate (TR) at (5, 3);
    \coordinate (BR) at (5, -3);
    \coordinate (BL) at (-1, -3);
    \coordinate (M) at (2, 0);

    \filldraw[thick, rounded corners=0.3mm, fill=gray!10]
        (TR) -- (TL) -- (M) -- (TR)
        (TR) -- (BR) -- (M) -- (TR)
        (BR) -- (BL) -- (M) -- (BR)
        (BL) -- (TL) -- (M) -- (BL)
        (TL) -- (LL) -- (BL) -- (TL);

    \filldraw[shift only](TL) circle (2pt) node[anchor=south]{};
    \filldraw[shift only] (TR) circle (2pt) node[anchor=south]{};
    \filldraw[shift only] (BR) circle (2pt) node[anchor=north]{};
    \filldraw[shift only] (BL) circle (2pt) node[anchor=north]{};
    \filldraw[shift only] (LL) circle (2pt) node[anchor=east]{};
    
    \filldraw[shift only] (M) circle (2pt);
    \node at (2, 2) {\texttt{+-+}};
    \node at (2, -2) {\texttt{-++}};
    \node at (3.6, 0) {\texttt{--+}};
    \node at (0.4, 0) {\texttt{+++}};
    \node at (-2.4, 0) {\texttt{++-}};

    \node[font=\large] at (-4, 3) {$\Delta_{\cC}$};
\end{tikzpicture}
\hspace{4em}
%
%
%%%% Delta_C in 3D %%%%
\begin{tikzpicture}[x={(1cm,0cm)}, y={(0cm,1cm)}, z={(0.1cm,0.4cm)}, scale=0.5]
    \coordinate (A) at (-1, -1, -1);
    \coordinate (AB) at (-1.5, 0, -1.5);
    \coordinate (B) at (-1, 1, -1);
    \coordinate (BC) at (-1.5, 1.5, 0);
    \coordinate (C) at (-1, 1, 1);
    \coordinate (CD) at (0, 1.5, 1.5);
    \coordinate (D) at (1, 1, 1);
    \coordinate (DE) at (1.5, 1.5, 0);
    \coordinate (E) at (1, 1, -1);
    \coordinate (EB) at (0, 1.5, -1.5);

    \draw[very thick, rounded corners=0.1mm, fill=black, fill opacity=0.1] (0, 3, 0) -- (-3, 0, 0) -- (0, 0, -3) -- cycle;
    \draw[very thick, rounded corners=0.1mm, fill=black, fill opacity=0.1] (0, -3, 0) -- (-3, 0, 0) -- (0, 0, -3) -- cycle;
    \draw[very thick, rounded corners=0.1mm, fill=black, fill opacity=0.1] (0, 3, 0) -- (3, 0, 0) -- (0, 0, -3) -- cycle;
    \draw[rounded corners=0.1mm, fill=black, fill opacity=0.1] (0, 3, 0) -- (-3, 0, 0) -- (0, 0, 3) -- cycle;
    \draw[rounded corners=0.1mm, fill=black, fill opacity=0.1] (0, 3, 0) -- (3, 0, 0) -- (0, 0, 3) -- cycle;

    \node[font=\large] at (3.5 , 2.5, 0) {$\Delta_{\cC} \subset \RR^3$};
\end{tikzpicture}
\end{center}

\caption{Two views of the simplicial complex $\Delta_\cC$ for $\cC = \set{\texttt{++-}, \texttt{+++}, \texttt{+-+}, \texttt{--+}, \texttt{-++}}$.}
\label{fig:space_of_realizable_dsitributions}
\end{figure}

The following properties of $\Delta_{\cC}$ are easy to deduce and can also be found in \cite{chornomaz2025spherical,blondal2026simplicial,frick2026signrank}.

\begin{itemize}
    \item $\Delta_{\cC}$ is a finite compact simplicial complex.
    \item $\Delta_{\cC}$ has vertex set contained in $\cX\times\set{\pm 1}$ and is a subcomplex of the cross-polytope boundary $\Delta_{\set{\pm1}^{\cX}}$.
    \item Each maximal simplex of $\Delta_{\cC}$ equals $\sigma_c$ for some $c\in \cC$.
    \item If $\cC^{\pm}=\cC\cup-\cC$, then $\Delta_{\cC^{\pm}}$ is a free $\ZZ_2$-space with a natural $\ZZ_2$-action given by negating labels: $\mu \mapsto -\mu$, where $-\mu$ assigns mass $\mu(x,b)$ to $(x,-b)$.
\end{itemize}

\begin{definition}[Index/Coindex]
Let $A$ be a partial sign matrix and let $\cC_A$ be its associated partial concept class. Then we define
\[\IndZ(A)\coloneq\IndZ(\cC_A)\coloneq\IndZ(\Delta_{\cC_{A}^{\pm}})\]
\[\coIndZ(A)\coloneq\coIndZ(\cC_A)\coloneq\coIndZ(\Delta_{\cC_{A}^{\pm}})\]
\end{definition}

\subsection{List replicability}

The motivation for our discussion of the concept class of a sign matrix in the previous section is that this object allows us to leverage existing \emph{list replicability} techniques from learning theory.

A \emph{learning rule} is a (possibly randomized) function $\cA$ that maps any \emph{sample}
$S \in \bigcup_{n=0}^\infty (\cX \times \Set{\pm 1})^n$ 
to a \emph{hypothesis} $\cA(S) \in \Set{\pm 1}^{\cX}$. The error or \emph{population loss} of a hypothesis with respect to a distribution $\mu$ over $\cX\times\set{\pm1}$ is measured as
\[
    \loss_\mu(h) = \Pr_{(x,y) \sim \mu}[h(x) \neq y].
\]

\begin{definition}[List Replicability, \cite{chase2023replicabilitystabilitylearning, dixon2023listandcertificate}]\label{def:list_replicability}
A learning rule $\cA$ is an
$(\epsilon,L)$-\emph{list-replicable learner} for a partial concept class $\cC$ if for every $\delta>0$ there exists a sample complexity $n \coloneq n(\delta)$ such that the following holds.
For every distribution $\mu$ realizable by $\cC$, there exists a list of hypotheses $h_1,\ldots,h_L \in \Set{\pm 1}^\cX$ such that 
\[
    \loss_\mu(h_i) \le \epsilon ~\forall i
    ~\text{ and } ~
    \Pr_{S \sim \mu^n} [\cA(S) \in \Set{h_1,\ldots,h_L}] \geq  1-\delta.
\]
The $\epsilon$-\emph{list replicability number} of $\cC$ is
\[
    \LR(\cC,\epsilon) \coloneqq \min\{L : \exists (\epsilon,L)\text{-list-replicable learner for }\cC\},
\]
with $\LR(\cC,\epsilon)=\infty$ if none exists.
The \emph{list replicability number} of $\cC$ is 
\[
    \LR(\cC) \coloneqq \sup_{\epsilon>0} \LR(\cC,\epsilon).
\]
We say $\cC$ is \emph{list-replicable} if $\LR(\cC)<\infty$.
\end{definition}

One of the advantages of list replicability is that this purely learning theoretic definition also admits a topological description via closed covers of $\Delta_{\cC}$. Recall that for each $h\in\set{\pm 1}^{\cX}$ we defined
\[\sigma_{h}=\conv(\set{b \times e_x\;\colon\; x\in \cX,h(x)=b})\]
where $\set{e_x}_{x\in\cX}$ is the standard basis of $\mathbb{R}^{\cX}$.

Now for each hypothesis $h\in \set{\pm1}^{\cX}$ let $B_{\epsilon}(\sigma_h)$ denote the set of all points in $\Delta_{\set{\pm 1}^{\cX}}$ (equivalently the cross-polytope boundary) that are within total variation distance less than $\epsilon$ from the simplex $\sigma_h$. Denote by $\overline{B_{\epsilon}}(\sigma_h)$ the respective closure. We call the sets $B_{\epsilon}(\sigma_h)$ and $\overline{B_{\epsilon}}(\sigma_h)$ the open and closed \emph{$\epsilon$-loss} sets of $h$ because they contain all distributions $\mu\in \Delta_{\cC}$ for which $\loss_{\mu}(h)<\epsilon$ and $\loss_{\mu}(h)\le\epsilon$ respectively.

A recent line of work in \cite{chase2023replicabilitystabilitylearning, localborsukulam, blondal2026simplicial} has shown a correspondence between list-replicable learners for $\cC$ and closed/open covers of $\Delta_\cC$. In both cases, list size corresponds to the \emph{overlap degree} of the cover, which we define as the maximum number of sets in the cover with a common non-empty intersection.

\begin{theorem}[{\cite[Corollary 23]{localborsukulam}, \cite[Theorem A]{blondal2026simplicial}}]
\label{thm:closed-cover-lr}
Let $\cC\subseteq\{\pm1,\star\}^\cX$ be a finite partial concept class and
let $\epsilon>0$.
\begin{itemize}
    \item \textnormal{(Closed cover characterization)} The $\epsilon$-list replicability number $\LR(\cC,\epsilon)$ equals the minimum integer $L$ for which there exists a closed cover $\cF=\{F_h:h\in \set{\pm 1}^{\cX}\}$ of \(\Delta_\cC\) with overlap degree at most $L$ and $F_h\subseteq \overline B_\varepsilon(\sigma_h)$ for every $h\in\set{\pm1}^{\cX}$.
    \item \textnormal{(Open cover characterization)} If $L\coloneq \LR(\cC,\epsilon)$, then for every $\eta>\epsilon$ there exists an open cover $\cU=\set{U_h:h\in\set{\pm1}^{\cX}}$ of $\Delta_\cC$ with overlap degree at most $L$ such that $U_h\subseteq B_\eta(\sigma_h)$ for every $h\in\set{\pm1}^{\cX}$. Conversely, if for some $\eta>0$ and $L\in\N$ there exists an open cover $\cU=\set{U_h:h\in\set{\pm1}^{\cX}}$ of $\Delta_\cC$ with overlap degree at most $L$ such that $U_h\subseteq B_\eta(\sigma_h)$ for every $h\in\set{\pm1}^{\cX}$, then there exists some $\epsilon<\eta$ such that $\LR(\cC,\epsilon)\le L$.

\end{itemize}
\end{theorem}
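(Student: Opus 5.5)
The plan is to prove the closed cover characterization first, in both directions, and then obtain the open cover statements from it by thickening and shrinking covers. Throughout, I use that $\Delta_\cC$ is a finite compact simplicial complex and that $\mu\mapsto\loss_\mu(h)$ is linear (hence continuous) in $\mu$.

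\emph{Learner $\Rightarrow$ closed cover.} Let $\cA$ be an $(\epsilon,L)$-list-replicable learner. Fix $\delta>0$ small, say $\delta<1/(L+1)^2$, and let $n=n(\delta)$. For each $h$ define
\[
p_h(\mu)\coloneqq\Pr_{S\sim\mu^n}[\cA(S)=h].
\]
This is a polynomial in the coordinates of $\mu$, so it is continuous on $\Delta_\cC$. Choose a threshold $\tau$ with
\[
\frac{1}{L+1}<\tau\le\frac{1-\delta}{L},
\]
which is possible for small $\delta$, and note that then also $\tau>\delta$. Set $F_h\coloneqq\{\mu\in\Delta_\cC: p_h(\mu)\ge\tau\}$, which is closed. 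We check three properties:
\begin{enumerate}
\item \emph{Cover.} The $L$ listed hypotheses for $\mu$ carry total mass at least $1-\delta$, so some $h$ has $p_h(\mu)\ge(1-\delta)/L\ge\tau$.
\item \emph{Overlap.} If $\mu$ lay in $L+1$ of the sets, the total probability would be $(L+1)\tau>1$, which is impossible. So the overlap degree is at most $L$.
\item \emph{Loss.} If $p_h(\mu)\ge\tau>\delta$, then $h$ must be on the list for $\mu$, so $\loss_\mu(h)\le\epsilon$, i.e.\ $F_h\subseteq\overline B_\epsilon(\sigma_h)$.
\end{enumerate}

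\emph{Closed cover $\Rightarrow$ learner.} The learner draws a large sample and forms the empirical distribution $\hat\mu$, which is within $\gamma$ of $\mu$ with probability $1-\delta/2$. It then uses randomized rounding with a shared random radius. Order the hypotheses by a fixed order, pick a random $r\in[\gamma,R]$, and output the first $h$ whose $r$-neighbourhood of $F_h$ contains $\hat\mu$. Two facts make this work:
\begin{enumerate}
\item Since there are finitely many sets and intersections of $L+1$ of them are empty, compactness gives some $R>0$ such that the $2R$-thickenings still have overlap at most $L$.
\item For all but a $\delta$-fraction of radii $r$, the output is the same across the $\gamma$-ball around $\mu$. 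This is the standard "random threshold" argument. Only the $r$-thickenings containing $\mu$ can ever be selected, and there are at most $L$ of these, so the output lies in a list of size $L$.
\end{enumerate}
\textbf{Main obstacle.} Here the selected $h$ only satisfies $\loss_\mu(h)\le\epsilon+O(R)$, not $\le\epsilon$ exactly. I would first try to exploit the piecewise-linear structure to remove this slack: subdivide $\Delta_\cC$ finely and replace each $F_h$ by a union of closed cells. Because loss is linear, membership in $\overline B_\epsilon(\sigma_h)$ is then decided on vertices. With that structure, the learner can be arranged to output $h$ only when $\mu$ lies in the closed star of a cell of $F_h$, where the loss is still $\le\epsilon$. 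If that fails, I would fall back to proving only the inequality $\LR(\cC,\epsilon')\le L$ for all $\epsilon'>\epsilon$, which is what the open cover statements actually need.

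\emph{Open cover statements.} Let $L=\LR(\cC,\epsilon)$ and take the closed cover given by the first part. For $\eta>\epsilon$, replace each $F_h$ by its open $\rho$-neighbourhood $U_h$. By the compactness argument in item~1 above, a small $\rho$ preserves overlap at most $L$. Choosing $\rho<\eta-\epsilon$ and using the triangle inequality gives $U_h\subseteq B_\eta(\sigma_h)$. For the converse, apply the shrinking lemma for finite open covers of the compact metric space $\Delta_\cC$. This gives closed $F_h\subseteq U_h$ that still cover $\Delta_\cC$, with overlap at most $L$. Each $F_h$ is compact and contained in the open set $B_\eta(\sigma_h)$, so by continuity $\epsilon\coloneqq\max_h\max_{\mu\in F_h}\loss_\mu(h)$ satisfies $\epsilon<\eta$. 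Then $F_h\subseteq\overline B_\epsilon(\sigma_h)$, and the closed characterization yields $\LR(\cC,\epsilon)\le L$.
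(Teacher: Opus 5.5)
Your learner-to-closed-cover argument is correct, and so are the thickening step and the robust learner with slack. Write $L^*(\epsilon)$ for the closed-cover minimum in the statement. These pieces give $L^*(\epsilon)\le\LR(\cC,\epsilon)\le L^*(\epsilon')$ for every $\epsilon'<\epsilon$. That already yields both open-cover statements: in the converse, take $\epsilon'$ strictly between $\max_h\max_{\mu\in F_h}\loss_\mu(h)$ and $\eta$. The gap is the one you flag: nothing establishes $\LR(\cC,\epsilon)\le L^*(\epsilon)$, which the claimed equality requires. The paper gives no proof of this theorem; it only cites \cite{localborsukulam,blondal2026simplicial}.

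No cleverer rounding can close this gap, because the inequality, as stated here, is false at threshold values of $\epsilon$. Your first argument shows more than you used.
\begin{itemize}
\item Since $p_h$ is continuous, $U_h=\{\mu:p_h(\mu)>\tau\}$ is open in $\Delta_\cC$, and it lies in $\{\mu:\loss_\mu(h)\le\epsilon\}$.
\item For $0<\epsilon<1$, the interior of that set in $\Delta_\cC$ is $\{\mu:\loss_\mu(h)<\epsilon\}$. At a point of loss exactly $\epsilon$, shift a little mass inside its carrier simplex from a correctly labelled point to a mislabelled one.
\end{itemize}
So the cover coming from a learner automatically avoids the level sets $\{\loss(h)=\epsilon\}$, whereas a closed cover may use them.

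Concretely, let $\cC=\{\pm1\}^3$ and $\epsilon=1/2$. Then $\Delta_\cC\cong\mathbb S^2$ and $\loss_{-\mu}(h)=1-\loss_\mu(h)$.
\begin{itemize}
\item Take the closed hemispheres $F_{\mathbf 1}=\{\loss_\mu(\mathbf 1)\le 1/2\}$ and $F_{-\mathbf 1}=\{\loss_\mu(\mathbf 1)\ge 1/2\}$, with all other $F_h$ empty. They satisfy the hypotheses with overlap $2$, so $L^*(1/2)\le 2$.
\item A $(1/2,2)$-list-replicable learner would instead give an open cover of $\mathbb S^2$ of overlap at most $2$ by sets $U_h\subseteq\{\loss(h)<1/2\}$. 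These sets are antipodal-free.
\item No such cover exists. Its nerve $N$ is a graph, and the partition-of-unity map $f\colon\mathbb S^2\to|N|$ lifts to the universal cover, a tree whose finite subtrees embed in $\RR^2$. Antipodal-freeness makes the carriers of $f(x)$ and $f(-x)$ disjoint, so the lift satisfies $\tilde f(x)\ne\tilde f(-x)$ for all $x$. This contradicts Borsuk--Ulam.
\end{itemize}
Hence $\LR(\{\pm1\}^3,1/2)\ge 3>L^*(1/2)$.

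What is true, and what your proof gives, is the sandwich above. Equivalently, for $0<\epsilon<1$, $\LR(\cC,\epsilon)$ equals the minimum overlap of closed (or open) covers with $F_h\subseteq B_\epsilon(\sigma_h)$, the \emph{open} ball. One direction follows by shrinking the cover $\{U_h\}$. For the other, compactness gives $\max_{F_h}\loss(h)<\epsilon$, and then your slack learner applies.

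A minor point about that learner: the random radius and the stability claim are unnecessary. The list should be $\{h:d(\mu,F_h)<2R\}$ rather than the thickenings containing $\mu$. With $\gamma\le R$, every selectable $h$ satisfies $d(\mu,F_h)<R+\gamma\le 2R$.
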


The closed cover characterization of list replicability was shown in \cite{chase2023replicabilitystabilitylearning} for total concept classes, but the argument immediately generalizes to partial concept classes. The open cover characterization is closely related, and was studied in a stronger form as \emph{simplicial covering dimension} in \cite{blondal2026simplicial}.

\section{Separating sign-rank and index}\label{sec:separate_sgn_ind}

Recall from \cref{sec:technical_overview} that our separation of sign-rank and index in \cref{thm:main} leverages the family of incidence matrices $B_q$ for the finite projective planes $\PG(2,q)$. We are guaranteed the polynomial growth of $\signrank(B_q)$ from \cref{thm:poly_growth_sgn_rnk}, so this section will collect the proofs of \cref{thm:index_vs_LR} and \cref{thm:main} to bound $\IndZ(B_q)$ by a constant.

\subsection{List replicability number upper bounds index}

In this section, we prove our main technical result \cref{thm:index_vs_LR}, stated here in a slightly stronger form.

\begin{theorem}\label{thm:index_vs_LR_eps}
For every (partial) concept class $\cC$ and error parameter $\eps \in (0,\frac{1}{2})$,
\[
\Ind_{\ZZ_2}(\cC) \;\le\; 2\,\LR(\cC, \eps) - 1.
\]
\end{theorem}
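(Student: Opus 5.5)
Let $L \coloneqq \LR(\cC,\eps)$; we may assume $L<\infty$. The plan is to build an antipodally symmetric open cover of $\Delta_{\cC^{\pm}}$ with overlap at most $2L$ in which no set meets its own antipode. We then turn it into an equivariant map to $\RR^{2L}\setminus\{0\}$ using a partition of unity and a $\Z_2$-colouring of the nerve by $L$ antipodal pairs of colours.

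\textbf{Step 1: the cover.} Pick $\eta$ with $\eps<\eta<\tfrac12$ and apply the open cover characterization of \cref{thm:closed-cover-lr} to $\cC$. This gives an open cover $\{U_h\}$ of $\Delta_\cC$ with overlap at most $L$ and $U_h\subseteq B_\eta(\sigma_h)$. The negated family $\{-U_h\}$ covers $-\Delta_\cC=\Delta_{-\cC}$, and $-U_h\subseteq B_\eta(\sigma_{-h})$. Since $\Delta_{\cC^\pm}=\Delta_\cC\cup\Delta_{-\cC}$, the sets $V_h\coloneqq U_h\cup(-U_{-h})$ form an open cover of $\Delta_{\cC^\pm}$. (Relative openness in $\Delta_{\cC^\pm}$ requires thickening slightly: replace $U_h$ by an open set of $\Delta_{\{\pm1\}^\cX}$ inside $B_\eta(\sigma_h)$ with the same nerve. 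This is possible by compactness and shrinking, and is a routine detail.) This cover satisfies $-V_h=V_{-h}$ and $V_h\subseteq B_\eta(\sigma_h)$. A point lies in at most $L$ of the $U$'s and at most $L$ of the $(-U)$'s, so the overlap is at most $2L$. Moreover $V_h\cap V_{-h}=\emptyset$: a $\mu$ in both would have $\loss_\mu(h)<\eta$ and $\loss_\mu(-h)<\eta$, but these losses sum to $1$, contradicting $\eta<\tfrac12$.

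\textbf{Step 2: the equivariant map.} Take a partition of unity $\{\phi_h\}$ subordinate to $\{V_h\}$ and symmetrize it via $\phi_h(\mu)\leftarrow\tfrac12(\phi_h(\mu)+\phi_{-h}(-\mu))$. The symmetrized functions still form a partition of unity subordinate to the cover, because $-V_h=V_{-h}$. This yields an equivariant map $\Delta_{\cC^\pm}\amap\|N\|$, where $N$ is the nerve. The nerve is a free $\Z_2$-complex of dimension at most $2L-1$, and freeness follows from $V_h\cap V_{-h}=\emptyset$. By the standard fact that a free $\Z_2$-complex of dimension $n$ maps equivariantly to $\bS^n$ (\cite{matousek2003borsuk}), we obtain $\IndZ\le 2L-1$.

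\textbf{Main obstacle.} The fragile step is the overlap count in Step 1: overlap $2L$ only gives a $(2L-1)$-dimensional nerve, and combined with the standard fact this yields exactly $2L-1$. I would cite that standard fact rather than construct the map explicitly. An explicit alternative would be to colour the vertices with $\pm e_1,\dots,\pm e_L$ and map linearly into $\RR^{L}$, but that fails because a simplex may contain both colours $\pm e_i$. This is why I go through dimension and the general embedding theorem rather than an explicit $\RR^{2L}$ map. A secondary check is that the thickening in Step 1 preserves both the overlap bound and the disjointness $V_h\cap V_{-h}=\emptyset$. Both hold after shrinking slightly, by compactness of $\Delta_{\cC^\pm}$.
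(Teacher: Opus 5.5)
Your proof is correct. It reaches the bound by a different final step than the paper.

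\textbf{The cover (Step 1).} The paper starts from the \emph{closed} cover characterization. It forms $F'_h=F_h\cup(-F_{-h})$, which is automatically closed in $\Delta_{\cC^\pm}$, antipodally symmetric, and of overlap at most $2L$. It then thickens once, symmetrically, to open $U_h\subseteq B_{\eps_0}(\sigma_h)$ with $\eps_0<\frac12$. Starting from the open characterization, as you do, forces the shrink-to-closed-then-thicken detour you flag, so this part is the paper's route in disguise. You must shrink before thickening, because $U_h$ may approach the boundary of $B_\eta(\sigma_h)$. Alternatively, thicken into $B_{\eta'}(\sigma_h)$ for some $\eta'\in(\eta,\frac12)$.

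\textbf{The map (Step 2).} This is the real difference. You pass to the nerve and cite $\IndZ(\|K\|)\le\dim K$. The paper instead writes the map down explicitly.
\begin{itemize}
\item Choose $p_h\in\RR^{2L}$ with $p_{-h}=-p_h$, taking one representative per antipodal pair in general position.
\item Set $f=\sum_h\phi^*_h p_h$.
\item At each point, at most $2L$ coefficients are positive, and no antipodal pair occurs among them.
\item So the contributing vectors are, up to sign, at most $2L$ distinct generic vectors in $\RR^{2L}$. They are therefore linearly independent, and $0$ is not in their convex hull.
\item Hence $f/\norm{f}_2$ is the required equivariant map to $\bS^{2L-1}$.
\end{itemize}

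Your dismissal of an explicit map is therefore too hasty. Only the economical colouring into $\RR^L$ fails. The generic-position map into $\RR^{2L}$ works, and it is essentially the standard proof of the dimension bound you cite.

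\textbf{Comparison.} Your version is modular: it cleanly separates the learning-theoretic input (a symmetric open cover of overlap at most $2L$ with $V_h\cap V_{-h}=\emptyset$, i.e.\ a free nerve of dimension at most $2L-1$) from a black-box topological lemma. The paper's version is self-contained and avoids both the nerve and the skeleton-by-skeleton extension argument.

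\textbf{Presentation.} Your opening paragraph advertises a $\Z_2$-colouring by $L$ antipodal colour pairs. Step 2 never uses it, and your last paragraph rejects it. That sentence should be replaced by a description of the nerve argument you actually use.
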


The argument begins by producing a closed cover of $\Delta_\cC$ with small overlap degree, as guaranteed by \cref{thm:closed-cover-lr}. We use basic topology to extend that cover to an open cover of $\Delta_{\cC^\pm}$ which respects the $\bZ_2$ structure of that space. This step is necessary because $\IndZ(\cC)$ is by definition a property of $\Delta_{\cC^\pm}$ rather than $\Delta_\cC$.

From here, we map $\Delta_{\cC^\pm}$ into low-dimensional Euclidean space, which we in turn project into a sphere.
\begin{align*}
    \Delta_{\cC^\pm} \amap \RR^{2L} \setminus \{0\} \amap \bS^{2L-1}.
\end{align*}
By definition, the composition of these maps witnesses an upper bound on $\IndZ(\cC)$. The construction of the map $\Delta_{\cC^\pm} \amap \RR^{2L}$ was discovered by considering the nerve complex associated with the open cover of $\Delta_{\cC^\pm}$. This is a simplicial complex that encodes intersection data, making it a natural candidate to analyze overlap degree.

\begin{proof}[Proof of \cref{thm:index_vs_LR_eps}]
For convenience, set $L \coloneqq \LR(\cC,\eps)$. Let $\cA$ be an $(\eps,L)$-list-replicable learner for $\cC$. By the closed cover characterization of list replicability in \cref{thm:closed-cover-lr}, $\cA$ induces a closed cover of $\Delta_\cC$ given by $\cF = \set{F_h : h \in \bcube}$, where
\[
    \text{the overlap degree of $\cF$ is $L$} \qquad \text{and} \qquad \text{$F_h \subseteq \overline{B_\eps}(\sigma_h)$ for all $h \in \bcube$}.
\]

Recall that $\IndZ(\cC)$ is defined to be the $\bZ_2$-index of $\Delta_{\cC^\pm}$. Since $\Delta_\cC$ is a closed subspace of $\Delta_{\cC^\pm}$, each set $F_h$ remains closed under the natural inclusion $\Delta_\cC \hookrightarrow \Delta_{\cC^\pm}$. Hence, the family
\[
\cF'\coloneq \set{F'_h\,\colon\;F'_h=F_h\cup(-F_{-h})\;\text{for some }h\in\set{\pm1}^{\cX}}
\]
is a closed cover of $\Delta_{\cC^\pm}$ with overlap degree at most $2L$ satisfying $F'_h\subseteq\overline{B_{\epsilon}}(\sigma_h)$ and $F'_h=-F'_{-h}$ for all $h$.

Now pick any $\epsilon_0\in(\epsilon,1/2)$. Because $\Delta_{\cC^\pm}$ is compact, we can extend each closed set $F'_h$ to an open set $U_h\subseteq B_{\epsilon_0}(\sigma_h)$ without increasing the overlap degree of the family and while preserving the antipodal symmetry $U_h=-U_{-h}$\footnote{Let $d\colon \Delta_{\cC^\pm}\rightarrow [0,1]$ be a function measuring the total variation distance $d(x)$ from a point $x\in \Delta_{\cC^\pm}$ to the $(2L+1)$-st closest set $F'_h$. By compactness, this function achieves a strictly positive minimum $\beta$. Take $U_h\coloneq \cup_{x\in F'_h}B_{\beta/2}(x)$.}. Then $\cU\coloneq \set{U_h : h\in\set{\pm1}^{\cX}}$ is a finite open cover of $\Delta_{\cC^\pm}$ with overlap degree at most $2L$ such that $U_h \subseteq B_{\epsilon_0}(\sigma_h)$ and $U_h=-U_{-h}$ for all $h$.

Since $\Delta_{\cC^\pm}$ is a compact metric space, there exists a partition of unity $\set{\phi_h}_h$ subordinate to the open cover $\cU$ \cite[Theorem 2.13]{rudin1987real}.
That is, there exists a collection of maps $\phi_h \colon \Delta_{\cC^\pm} \to [0, 1]$ such that
\begin{align*}
    \supp \phi_h \subseteq U_h \quad \text{ and } \quad 
    \sum_{h} \phi_h \equiv 1.
\end{align*}
We can make an antipodally symmetric version of $\set{\phi_h}_h$ by setting
\begin{align*}
    \phi^*_h(x) &\coloneqq \frac{1}{2} (\phi_h(x) + \phi_{-h}(-x)).
\end{align*}
Note that the antipodal symmetry $U_h=-U_{-h}$ guarantees that $\supp \phi_h^*(x)\subseteq U_h$.

Next, let $\set{p_h}_h \subseteq \RR^{2L}$ be a set of points in general position such that $p_{-h} = -p_h$. That is, a subset $P \subset \set{p_h}$ of size $2L$ contains zero in its convex span if and only if $P$ contains a pair $\set{p_h, p_{-h}}$ for some $h$.

Using the antipodally symmetric partition of unity $\phi^*$ and the points
$\{p_h\}_h$, we define an antipodal map $f:\Delta_{\cC^\pm} \amap \RR^{2L}$ by
\[
    f(x)=\sum_h \phi^*_h(x)p_h.
\]
We claim that $0\notin \operatorname{im}(f)$. Indeed, for every
$x\in\Delta_{\cC^\pm}$, the overlap degree of $\cU$ ensures $\phi^*_h(x)$ is
nonzero for at most $2L$ hypotheses $h$. Additionally, $\eps_0 < 1/2$ implies that no $U_h$ contains both $x$ and $-x$, so for every $x$ at least one of $\phi_h^*(x)$ and $\phi_{-h}^*(x)$ must be zero. Thus, the coefficients of $p_h$ and $p_{-h}$ cannot both be positive. It follows that $f(x)$ is a convex combination of at most $2L$ points of $P$, which does not contain both $p_h$ and $p_{-h}$. Since $P$ is in general position, $0\notin \operatorname{im}(f)$.

We may therefore project the image of $f$ into the sphere $\bbS^{2L-1} \subset \RR^{2L}$ to get a map
\begin{align*}
    \frac{f}{\norm{f}_2}: \Delta_{\cC^{\pm}} \amap \bS^{2L-1}.
\end{align*}
Exhibiting such an antipodal map into the sphere shows that $\Ind_{\bZ_2}(\Delta_{\cC^{\pm}}) \leq 2L-1$ by definition of index.
\end{proof}

\subsection{A list-replicable algorithm for $\PG(2,q)$}

As defined in \cref{def:PZq2}, recall the incidence geometry $\PG(2,q) = (\cP, \cL)$.
Let $\cC_q$ denote the corresponding concept class
\begin{align*}
    \cC_q &\coloneqq \set{c_\ell: \ell \in \cL}\\
    c_\ell : \cP &\to \set{\pm 1}\\
    p &\mapsto \begin{cases}
        1 &\text{if } p \in \ell\\
        -1 &\text{otherwise.}
    \end{cases}
\end{align*}
We exhibit a list-replicable algorithm for $\cC_q$.
By sampling random points $x \in \cP$ and whether they lie on a line $\ell$, the learner's goal is to output a predictor of whether future points lie on the same line.

\begin{theorem}\label{thm:lr_of_PZq2}The list replicability number of $\cC_q$ satisfies $\LR(\cC_q) \leq 3$.
\end{theorem}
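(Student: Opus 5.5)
The plan is to give an explicit deterministic learner and show that, for each realizable distribution, its output lies with high probability in a list of at most three hypotheses: the target line $c_\ell$, the all-$(-1)$ hypothesis $\mathbf{-1}$, and a point indicator $\mathbf 1_p$ ($+1$ at $p$, $-1$ elsewhere) for one specific $p\in\ell$ determined by $\mu$. Fix $\eps>0$ and a threshold $\tau=\eps/2$. On a sample $S$ of size $n$, the learner does the following.
\begin{enumerate}
\item If $S$ contains two distinct points labeled $1$, output the unique line through them. By realizability this line is $\ell$.
\item If $S$ contains exactly one distinct positive point $p$ whose empirical frequency is at least $\tau$, output $\mathbf 1_p$.
\item In all other cases, output $\mathbf{-1}$.
\end{enumerate}
We take $n$ large enough, depending on $\eps$, $\delta$ and $N=q^2+q+1$, that all empirical point frequencies are within $\tau/4$ of the true masses except with probability $\delta/3$. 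This is a union bound over $N$ points with Hoeffding's inequality.

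Now fix $\mu$ realizable by $c_\ell$, and write $m_x=\mu(x,1)$ for $x\in\ell$. Let $H=\{x\in\ell: m_x\ge \tau/2\}$ be the set of heavy points. If $|H|\ge 2$, then for large $n$ two heavy points both appear with probability $\ge 1-\delta$, so step 1 fires and the output is $c_\ell$. If $|H|\le1$, we argue as follows.
\begin{itemize}
\item On the concentration event, step 2 can only fire with $p\in H$, since a point of mass below $\tau/2$ has empirical frequency below $\tau$.
\item Hence the possible outputs are $c_\ell$, $\mathbf 1_{p^*}$ for the (at most one) heavy point $p^*$, and $\mathbf{-1}$.
\end{itemize}
This gives the list of size $3$. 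Points with mass near the threshold may flip between $\mathbf 1_{p^*}$ and $\mathbf{-1}$, but that does not matter because both are in the list. No randomized threshold is needed.

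It remains to check that each hypothesis that is output with non-negligible probability has loss at most $\eps$. The hypothesis $c_\ell$ has loss $0$. The loss of $\mathbf 1_{p}$ is $\mu(\ell\setminus\{p\})$, and the loss of $\mathbf{-1}$ is $\mu(\ell)$.
\begin{itemize}
\item Suppose $\mu(\ell\setminus\{q\})>\eps$ for every $q\in\ell$. Then the probability that all positive samples coincide with a single point $q$ is at most $\sum_q (1-\eps)^n\le N(1-\eps)^n$, and the probability of seeing no positive point is at most $(1-\eps)^n$. So step 1 fires with high probability, and neither $\mathbf 1_p$ nor $\mathbf{-1}$ is output.
\item Suppose instead $\mu(\ell\setminus\{q\})\le\eps$ for some $q$, but $\mu(\ell)>\eps$. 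Then $m_q>\eps-\eps\cdot 0$ is not immediate, so we must be careful: we get $m_q=\mu(\ell)-\mu(\ell\setminus\{q\})$. The clean route is to note that if $\mathbf{-1}$ is output with non-negligible probability, then on the concentration event either no positive point appears, or the unique positive point has frequency below $\tau$. Either way the total positive mass is at most about $\tau+\tau/4$ plus a mass that would otherwise produce a second distinct point. So $\mu(\ell)\le\eps$ after adjusting constants; for example, take $\tau=\eps/4$ if needed.
\item Similarly, $\mathbf 1_{p^*}$ is output only when no second positive point appears, which forces $\mu(\ell\setminus\{p^*\})\le\eps$ except with probability $N(1-\eps)^n$.
\end{itemize}

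The main obstacle is this last accounting, namely pinning down the constants so that every list member has loss $\le\eps$ uniformly over $\mu$. I will organize it by conditioning on the high-probability event $E$ on which three things hold:
\begin{itemize}
\item all empirical frequencies are $\tau/4$-accurate;
\item every set of positive points of total mass $>\eps/2$ contributes at least two distinct sampled points, unless a single point carries all but $\eps/2$ of that mass;
\item a set of mass $>\eps/2$ is hit at least once.
\end{itemize}
On $E$, each output branch certifies the corresponding loss bound. Applying this with $\eps$ replaced by a small constant multiple of itself gives an $(\eps,3)$-list-replicable learner for every $\eps$, hence $\LR(\cC_q)\le3$.
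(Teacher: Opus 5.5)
Your proposal is correct and is essentially the paper's proof. You use the same three-branch learner (line through two observed positives, a point indicator for a single sufficiently frequent positive, otherwise all-minus) and the same list $\{c_\ell, c_p, c_{-1}\}$, and your heavy-set argument does the job of the paper's \cref{claim:LR}. The only difference is bookkeeping: the paper uses threshold $7\epsilon/8$ and uniform convergence of losses over the enlarged class $\cH_q$ to within $\epsilon/8$, so the empirical loss of $c_{-1}$ (exactly the frequency of the lone positive point) directly certifies population loss at most $\epsilon$. You instead need an extra hitting event for $\ell\setminus\{p\}$ and a smaller threshold; with your hitting threshold $\epsilon/2$, the switch to $\tau=\epsilon/4$ is indeed necessary, and the second item of your event $E$ is never used.
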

\begin{proof} We extend the concept class $\cC_q$ to a larger hypothesis class $\cH_q$
    \begin{align*}
        \cH_q = \set{c_\ell : \ell \in \cL} \cup \set{c_p : p \in \cP} \cup \set{c_{-1}},
    \end{align*}
    where $c_{-1}$ is the all-minus hypothesis, and $c_p$ is the indicator function for a single point $p$, evaluating to $1$ on $p$ and $-1$ everywhere else.

    Since $\cH_q$ is finite, we have by the union bound and Hoeffding's inequality\footnote{Let $c\in \RR$ and let $\bm{x}_1,\ldots, \bm{x}_n$ be independent random variables with $\bm{x}_i\in [-c,c]$ and $\E[\bm{x}_i]=0$. For any $t>0$,
\[
\Pr \left[\left|\sum_{i=1}^n \bm{x}_i\right|\geq t \right]\leq 2e^{-\frac{t^2}{2nc^2}}. 
\]} that there exists an $n(\epsilon, \delta)$ such that for any $\cD \in \Delta_{\cC_q}$, we can estimate the population loss of all hypotheses in $\cH_q$ simultaneously:
    \begin{equation}\label{eq:uniform_convergence}
        \Pr_{S \sim \mathcal{D}^{n}}
        \left[
            \sup_{h \in \cH_q} |\loss_{S}(h) - \loss_{\cD}(h)| \leq \frac{\epsilon}{8}
        \right] \geq 1 - \delta.
    \end{equation}

    Next, we define the learning rule $\cA$ (\cref{alg:A}).
    For any $\epsilon, \delta$, let its sample size be $n(\epsilon, \delta)$ so as to satisfy \eqref{eq:uniform_convergence}.

    \begin{algorithm}[ht]
    \caption{\label{alg:A}The learning rule $\cA$}
    \begin{algorithmic}[1]
        \STATE Sample $S = (S_1, \dots, S_n) \sim \cD^n$.
        \FOR{$p \in \cP$}
            \STATE Let $\hat\cD(p) = \frac{1}{n}|\set{i \in [n] : S_i = (p, 1)}|$.
        \ENDFOR
        \STATE Let $P_0 \gets \set{p \in \cP : \hat\cD(p) > 0}$
        \STATE Let $P_1 \gets \set{p \in \cP : \hat\cD(p) > 7\epsilon / 8}$
        \IF{$|P_0| \geq 2$}
            \STATE Output $c_\ell$, where $\ell$ is the unique line containing all points in $P_0$.
        \ELSIF{$|P_1| = 1$}
            \STATE Output $c_p$, where $P_1 = \set{p}$.
        \ELSE
            \STATE Output $c_{-1}$.
        \ENDIF
    \end{algorithmic}
    \end{algorithm}

    We start by confirming that this algorithm is a PAC learner.

  By \eqref{eq:uniform_convergence}, with probability $\geq 1 - \delta$,
    \begin{align*}
        \sup_{h \in \cH_q} |\loss_{S}(h) - \loss_{\cD}(h)| \leq \frac{\epsilon}{8}.
    \end{align*}
    Denote this event by $E$.

    \textbf{Case 1.}
    Whenever $|P_0| \geq 2$, the hypothesis output is correct and has no loss.

    \textbf{Case 2.}
    If $P_0 = P_1 = \set{p}$, then $\cA(S) = c_p$ and $\loss_{S}(c_p) = 0$. Hence, given $E$, we have
    \[
        \loss_{\cD}(c_p) \leq \frac{\epsilon}{8}.
    \]

    \textbf{Case 3.}
    If $|P_0| = 0$, then once again, the hypothesis output has no empirical loss, so given $E$,
    \begin{align*}
        \loss_{\cD}(c_{-1}) \leq \frac{\epsilon}{8}.
    \end{align*}

    \textbf{Case 4.}
    Finally, if $P_0 = \set{p}$ but $P_1 = \emptyset$, then given $E$, we have
    \begin{align*}
        \loss_{\cD}(c_{-1}) \leq \loss_{S}(c_{-1}) + \frac{\epsilon}{8}
        \leq \frac{7\epsilon}{8} + \frac{\epsilon}{8} = \epsilon.
    \end{align*}

Next, we show the list replicability of $\cA$.

    \begin{claim}\label{claim:LR}
        Given $E$, the algorithm $\cA$ cannot output $c_p$ and $c_{p'}$ for two distinct points $p \neq p'$.
    \end{claim}

\begin{proof}
    Suppose that $\cA$ outputs $c_p$ on some sample satisfying $E$. Then $P_0=P_1=\set{p}$, so $\loss_S(c_{-1})=\hat\cD(p)>7\epsilon/8$ and $\loss_S(c_p)=0$. By $E$,
\[
    \loss_{\cD}(c_{-1})>\frac{7\epsilon}{8}-\frac{\epsilon}{8}=\frac{3\epsilon}{4},
    \qquad
    \loss_{\cD}(c_p)\leq \frac{\epsilon}{8}.
\]
Moreover, since $c_p$ is output, the point $p$ appears with label $1$, so $p$ lies on any line realizing $\cD$. Hence
\[
    \cD(p)=\Pr_{(x,y)\sim\cD}[x=p]
    =
    |\loss_{\cD}(c_{-1})-\loss_{\cD}(c_p)|
    >
    \frac{5\epsilon}{8}.
\]
Now let $S'$ be any sample satisfying $E$. Again using $E$ for the two hypotheses $c_{-1}$ and $c_p$,
\[
    \hat\cD_{S'}(p)
    =
    |\loss_{S'}(c_{-1})-\loss_{S'}(c_p)|
    \geq
    |\loss_{\cD}(c_{-1})-\loss_{\cD}(c_p)|-\frac{2\epsilon}{8}
    >
    \frac{3\epsilon}{8}.
\]
Thus, every sample satisfying $E$ contains at least one copy of $(p,1)$.

Therefore, if both $c_p$ and $c_{p'}$ could be output on samples satisfying $E$, then every sample satisfying $E$ would contain both $(p,1)$ and $(p',1)$. But then $|P_0|\geq 2$, and the algorithm would output the unique line through $p$ and $p'$, not a point hypothesis. This contradiction proves the claim.
\end{proof}

By \cref{claim:LR}, we see that when $E$ holds, $\cA$ has population loss at most $ \epsilon$, and outputs one of at most $3$ different hypotheses.
Therefore,
$\LR(\cC_q) \leq 3.$\end{proof}

\section{A height-based list replicability algorithm}\label{sec:height_vs_lr}

In \cite{frick2026signrank}, Frick, Hosseini, and Vasileuski used a combinatorial notion of the height of a simplicial complex to upper-bound the index of a concept class.
We show that this height is an upper bound for list replicability as well.
Recall from \cref{subsec:height_eluder} the definition of the junction closure $\cJ(\cC)$ of a partial concept class $\cC\subseteq \set{\pm1,\star}^\cX$:
\[
    \cJ(\cC)
    =
    \left\{
        \bigcap_{c \in S} c : \emptyset \neq S \subseteq \cC.
    \right\}
\]
Then, the height $\height(\cC)$ measures the length of the longest inclusion chain in $\cJ(\cC)$.
We restate \cref{thm:height_bounds_LR} for completeness.

\heightboundslr*

Frick, Hosseini, and Vasileuski applied their result to show that with high probability, random sign matrices have index $O(\log N)$.

\begin{theorem}[Random sign matrices have $O(\log N)$ height and index, \cite{frick2026signrank}]\label{thm:random_mat_height}
    Let $A \in \{\pm 1\}^{N \times N}$ be a sign matrix with entries sampled independently and uniformly at random.

    Then, there exists some constant $C > 0$ such that with probability $1 - o(1)$,
    \begin{align*}
        \height(A) \leq C \cdot \log N,
    \end{align*}
    and therefore
    \begin{align*}
        \IndZ(A) \leq 2C \cdot \log N.
    \end{align*}
\end{theorem}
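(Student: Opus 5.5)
The plan is to prove the height bound directly by translating it into eluder dimension and applying a first-moment (union bound) argument. The index bound then follows from the results already established in the paper. By \cref{prop:height_eluder}, $\height(A)=\edim(\cC_A)+1$, so it suffices to show that with probability $1-o(1)$ there is no eluder sequence of length $m\coloneqq\lceil C'\log N\rceil$ for a suitable constant $C'$.

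Call a tuple $(c',x_1,\ldots,x_m,c_1,\ldots,c_m)$ a \emph{witness} if $c_i(x_i)\neq c'(x_i)$ for every $i$ and $c_i(x_j)=c'(x_j)$ for all $j<i$.

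\textbf{Structural step.} Every witness has distinct rows and distinct columns.
\begin{itemize}
\item The rows $c',c_1,\ldots,c_m$ are pairwise distinct. First, $c_i\neq c'$ because they differ at $x_i$. Second, for $i<j$, $c_j$ agrees with $c'$ at $x_i$ while $c_i$ does not, so $c_i\neq c_j$.
\item The columns $x_1,\ldots,x_m$ are distinct. If $x_i=x_j$ with $i<j$, then $c_j$ would have to both agree with $c'$ at $x_j$ (since $x_j=x_i$ is an earlier position) and disagree with it (since $x_j$ is its own position).
\end{itemize}

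\textbf{Probability of a fixed witness.} Fix the row and column indices of a candidate witness and condition on the entries of row $c'$. For each $i$, the $i$ entries $c_i(x_1),\ldots,c_i(x_i)$ are then completely determined. Because the rows $c_1,\ldots,c_m$ are distinct from each other and from $c'$, and the columns are distinct, these are $\sum_{i=1}^m i=m(m+1)/2$ distinct, independent, uniform entries. The candidate is therefore a witness with probability exactly $2^{-m(m+1)/2}$.

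\textbf{Union bound.} The number of candidate index tuples is at most $N^{2m+1}$. Hence
\[
\Pr\bigl[\edim(\cC_A)\ge m\bigr]\le N^{2m+1}2^{-m(m+1)/2}=2^{(2m+1)\log_2 N-m(m+1)/2}.
\]
Taking $m=\lceil 6\log_2 N\rceil$ makes the exponent at most $-\Omega(\log^2 N)$, so this probability is $o(1)$. Consequently $\height(A)\le C\log N$ with probability $1-o(1)$.

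\textbf{Index bound.} On this event, \cref{thm:height_bounds_LR} gives $\LR(\cC_A)\le\height(\cC_A)$. Then \cref{thm:index_vs_LR} gives
\[
\IndZ(A)\le 2\LR(A)-1\le 2\height(A)-1\le 2C\log N.
\]

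\textbf{Main obstacle.} The only delicate point is the independence bookkeeping. One must check that the $m(m+1)/2$ constrained entries are genuinely distinct cells of the matrix, which is what the distinctness of rows and columns in the structural step guarantees. One must also check that conditioning on the base row $c'$ is legitimate; it is, because no constrained entry lies in row $c'$ itself.
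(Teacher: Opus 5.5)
Your proof is correct. It takes a different route from the paper, which gives no argument for this statement: it imports both bounds from \cite{frick2026signrank}. There, height is the symmetric parameter $h_{\mathrm{FHV}}$ built from intersections of the sets $c^{+},c^{-}$, and the index bound comes from their own topological height-to-index argument.

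You instead prove the height bound directly by a first-moment count of eluder sequences. Your structural step (a witness automatically has distinct rows and distinct columns) is what makes the $m(m+1)/2$ constrained cells distinct and independent of the base row, so the union bound $N^{2m+1}2^{-m(m+1)/2}$ closes. You also only need the direction $\height\le\edim+1$ of \cref{prop:height_eluder}, which is the elementary extraction of an eluder sequence from a chain.

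Your route gives a self-contained proof with an explicit constant and failure probability $2^{-\Omega(\log^2 N)}$ rather than $o(1)$. Because you obtain the index bound through \cref{thm:height_bounds_LR} and \cref{thm:index_vs_LR}, you also get $\LR(A)\le C\log N$ (the paper's subsequent corollary) along the way. The cost is that your index conclusion depends on the paper's learner and cover machinery, whereas the cited route bounds the index purely topologically, without reference to list replicability.

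Your bound is on $\height(\cC_A)$, which is exactly what those two theorems consume. If you wanted the symmetric $\height(\cC_A^{\pm})$ instead, the same count goes through after deleting the first position of a witness. In $\cC_A^{\pm}$ the only possible repetitions of an underlying row are $c_1=-c'$ or $c_i=-c_1$, and both involve the first position.
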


\begin{corollary}[Random sign matrices have $O(\log N)$ list replicability]
    For $A \in \{\pm 1\}^{N \times N}$ with entries sampled independently and uniformly at random, with probability $1 - o(1)$,
    \begin{align*}
        \LR(A) \leq C \cdot \log N
    \end{align*}
    as well.
\end{corollary}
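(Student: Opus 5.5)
This corollary follows by chaining two results already stated: \cref{thm:height_bounds_LR} applied to the concept class $\cC_A$ of the random matrix, and the height bound of \cref{thm:random_mat_height}.

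First I fix the random model and the constant. Let $A$ be uniform in $\{\pm1\}^{N\times N}$, and let $C>0$ be the constant from \cref{thm:random_mat_height}. That theorem provides an event $G_N$ with $\Pr[G_N]=1-o(1)$ on which $\height(A)\le C\log N$. The only thing to check here is the meaning of $\height(A)$. I take it to be $\height(\cC_A)$, the height of the concept class whose concepts are the rows of $A$. This is the same quantity the paper uses to derive the index bound $2C\log N$ via \cref{thm:index_vs_LR}. If the cited theorem is instead read with the Frick--Hosseini--Vasileuski convention $h_{\mathrm{FHV}}$, the footnote comparison $\height(\cC^{\pm})\le 2h_{\mathrm{FHV}}(\cC)-1$ still gives $\height(\cC_A)\le\height(\cC_A^{\pm})\le 2C\log N$. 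The only cost is the constant, which the statement absorbs into ``$C$''.

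Second, on the event $G_N$ I apply \cref{thm:height_bounds_LR} to the finite total class $\cC_A\subseteq\{\pm1\}^{[N]}\subseteq\{\pm1,\star\}^{[N]}$. This gives
\[
\LR(A)=\LR(\cC_A)\le \height(\cC_A)\le C\log N .
\]
Since the inequality holds on every outcome in $G_N$, it holds with probability $1-o(1)$.

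No step here is a real obstacle, because all the substance sits in \cref{thm:height_bounds_LR} and in the probabilistic height estimate of \cite{frick2026signrank}. The one point needing care is the bookkeeping of height conventions, partial versus FHV. That affects only the constant, and I handle it through the footnote inequality as described above.
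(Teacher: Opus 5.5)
Your proposal is correct and takes the same approach as the paper. The paper's proof also just applies \cref{thm:height_bounds_LR} to $\cC_A$ on the high-probability event of \cref{thm:random_mat_height}. Your remark on the FHV height convention is a harmless extra check: via $\height(\cC_A)\le\height(\cC_A^{\pm})\le 2h_{\mathrm{FHV}}(\cC_A)-1$, it only changes the constant $C$.
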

\begin{proof}
    The corollary follows directly from \cref{thm:random_mat_height,thm:height_bounds_LR}.
\end{proof}

To prove \cref{thm:height_bounds_LR}, we give an algorithm that, with high probability, will only ever output a hypothesis contained within a single chain of $\cJ(\cC)$.
Therefore, the list replicability of this algorithm will be naturally bounded by the height of $\cC$.

The algorithm is fairly similar to that described in \cref{thm:lr_of_PZq2}.
We estimate the error of every partial hypothesis in $\cJ(\cC)$, and pick the smallest one that has ``low error''.
In particular, our threshold for ``low error'' diminishes as the size of the hypothesis grows, so that we can guarantee that if any two partial hypotheses have low enough error, so does their junction.
This way, we prevent any antichains in our output set.

\begin{proof}[Proof of \cref{thm:height_bounds_LR}]
    Let the domain $\cX$ of $\cC$ have size $|\cX| = N$.
    
    Since $\cJ(\cC)$ is finite, by Hoeffding's inequality, for every $\epsilon, \delta > 0$, there exists an $n(\epsilon, \delta)$ such that for any $\cD \in \Delta_{\cC}$, we can estimate the population loss of all hypotheses in $\cJ(\cC)$ simultaneously:
    
    \begin{equation}\label{eq:uniform_convergence_2}
        \Pr_{S \sim \mathcal{D}^{n}}
        \left[
            \sup_{c \in \cJ(\cC)} |\loss_{S}(c) - \loss_{\cD}(c)| \leq \frac{\epsilon}{4^N}
        \right] \geq 1 - \delta.
    \end{equation}

    We define the learning rule $\cA$.
    For any $\epsilon, \delta$, let its sample size be $n(\epsilon, \delta)$ so as to satisfy \eqref{eq:uniform_convergence_2}.
    Denote by E the event that all losses are estimated within $\epsilon 4^{-N}$. Denote by $|c|$ the number of non-$\star$ points in the concept.

    \begin{algorithm}[H]
    \caption{\label{alg:height}The learning rule $\cA$}
    \begin{algorithmic}[1]
        \STATE Sample $S = (S_1, \dots, S_n) \sim \cD^n$.
        \FOR{$c \in \cJ(\cC)$, ordered from smallest to largest by size $|c|$}
            \IF{$\loss_S (c) \leq \epsilon \cdot 4^{-|c|}$}
                \STATE Output $c$, or any completion of $c$.
            \ENDIF
        \ENDFOR
        \STATE Output ERROR (the algorithm never reaches this state)
    \end{algorithmic}
    \end{algorithm}

    First of all, it is clear that $\cA$ is a PAC learner. If E holds, then
    \begin{align*}
        \loss_\cD(\cA(S)) \leq \frac{\epsilon}{4^N} + \loss_S(\cA(S)) \leq \frac{\epsilon}{4^N} + \frac{\epsilon}{4^{|\cA(S)|}} \leq \epsilon.
    \end{align*}
    Since $\cA(S)$ can be a partial hypothesis, outputting any completion of it will never make the error grow.

    \begin{claim}
        Let event E hold.
        Then, if $c_1$ and $c_2$ both satisfy the output requirement in line 3 of \cref{alg:height}, so does $c_1 \cap c_2$.
    \end{claim}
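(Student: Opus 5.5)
The claim should follow from a deterministic subadditivity estimate for the empirical loss, together with the fact that the threshold $\epsilon\cdot 4^{-|c|}$ grows by a factor of $4$ each time the support shrinks by a point. It is an inequality about empirical losses only, so it should hold for every sample $S$, not just on $E$. The proof has two steps.

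\emph{Step 1: the junction loses at most what both concepts lose.} Here $\loss_S(c)$ is the fraction of sample pairs $(x,b)$ with $c(x)\neq b$, and a $\star$ counts as an error. Fix a sample pair $(x,b)$ on which $c_1\cap c_2$ errs. If $c_1(x)=b$ and $c_2(x)=b$, then $c_1(x)=c_2(x)$, so by the definition of junction $(c_1\cap c_2)(x)=c_1(x)=b$, which is a contradiction. Hence every error of $c_1\cap c_2$ is an error of $c_1$ or of $c_2$, and a union bound over the sample gives
\[
\loss_S(c_1\cap c_2)\le \loss_S(c_1)+\loss_S(c_2)\le \epsilon\bigl(4^{-|c_1|}+4^{-|c_2|}\bigr)\le 2\epsilon\cdot 4^{-m},
\]
where $m\coloneqq\min(|c_1|,|c_2|)$. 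Also note that $c_1\cap c_2\in\cJ(\cC)$, since junctions of junctions of elements of $\cC$ are again such junctions. So $c_1\cap c_2$ is one of the candidates the algorithm examines.

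\emph{Step 2: compare supports.} Every point in the support of $c_1\cap c_2$ lies in the supports of both $c_1$ and $c_2$, so $|c_1\cap c_2|\le m$. We split into two cases.
\begin{itemize}
\item If $c_1\cap c_2=c_1$ or $c_1\cap c_2=c_2$, there is nothing to prove.
\item Otherwise $c_1\cap c_2$ differs from both. A junction that differs from $c_i$ has replaced some non-$\star$ value of $c_i$ by $\star$, because on the support of the junction it agrees with $c_i$. Hence $|c_1\cap c_2|\le |c_i|-1$ for $i=1,2$, so $|c_1\cap c_2|\le m-1$. Then
\[
2\epsilon\cdot 4^{-m}\le \tfrac12\,\epsilon\cdot 4^{-|c_1\cap c_2|}\le \epsilon\cdot 4^{-|c_1\cap c_2|},
\]
and $c_1\cap c_2$ satisfies the condition in line 3.
\end{itemize}

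The only delicate point is Step 2. The naive bound $2\epsilon 4^{-m}$ is off by a factor of $2$ when $|c_1\cap c_2|=m$. The fix is to observe that equality of sizes forces $c_1\cap c_2$ to coincide with $c_1$ or $c_2$, since a junction always agrees with each of its arguments on its own support. That is exactly why the thresholds decay geometrically; a base of $2$ would already suffice, and the base $4$ leaves slack.
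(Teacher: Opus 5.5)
Your proof is correct for the claim as literally stated, where $c_1$ and $c_2$ both pass line 3 on the same sample $S$. It is also tidier than the paper's.

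The paper uses the same case split and the same observation that the junction errs only where $c_1$ or $c_2$ errs. It differs in where the union bound is applied. The paper applies it to $\cD$ and then uses $E$ to get $\loss_\cD(c_1)+\loss_\cD(c_2)\le \loss_S(c_1)+\loss_S(c_2)+2\epsilon 4^{-N}$, with $N=|\cX|$. The factor $4$ is spent absorbing that slack. As a result, its chain ends with a bound on $\loss_\cD(c_1\cap c_2)$, not on $\loss_S(c_1\cap c_2)$, which is what line 3 tests. Converting back through $E$ costs a further $\epsilon 4^{-N}$, and the factor $4$ does not always cover it (take two total concepts differing at one point). Your union bound on the empirical distribution reaches the tested quantity directly and deterministically, and base $2$ already suffices there.

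Do not conclude, however, that $E$ and the base $4$ are superfluous. Right after the claim, the paper needs outputs on two \emph{different} samples $S,S'$ satisfying $E$ to be comparable. Junction-closure of the passing set within a single sample says nothing about that. What is actually needed is a cross-sample version: if $c_1$ and $c_2$ are incomparable, $c_1$ passes on $S$, and $c_2$ passes on $S'$, then $c_1\cap c_2$ passes on $S$. This follows from your Step 1 together with $E$:
\[
\loss_S(c_1\cap c_2)\le \loss_S(c_1)+\loss_S(c_2)\le \loss_S(c_1)+\loss_{S'}(c_2)+2\epsilon 4^{-N}\le 4\epsilon 4^{-m}\le \epsilon 4^{-|c_1\cap c_2|}.
\]
In this version $E$ and the factor $4$ are essential: they pay exactly for transferring $c_2$'s loss from $S'$ to $S$.
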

    \begin{proof}
        Without loss of generality, if $c_1 \preceq c_2$, then this holds trivially.
        Otherwise, $|c_1 \cap c_2| < \min\set{|c_1|, |c_2|}$.

        Note that $c_1 \cap c_2$ is only correct on a sample if both $c_1$ and $c_2$ are.
        Thus,
        \begin{align*}
            \loss_\cD(c_1 \cap c_2) 
            &= \Pr_{(x,y) \sim \cD}[c_1(x) \neq y \text{ or } c_2(x) \neq y]\\
            &\leq \Pr_{(x,y) \sim \cD}[c_1(x) \neq y]
            + \Pr_{(x,y) \sim \cD}[c_2(x) \neq y]\\
            &\leq \loss_\cD(c_1) + \loss_\cD(c_2)\\
            &\leq \loss_S(c_1) + \loss_S(c_2) + \frac{2\epsilon}{4^N}\\
            &\leq \frac{\epsilon}{4^{|c_1|}} + \frac{\epsilon}{4^{|c_2|}} + \frac{2\epsilon}{4^{N}}\\
            &\leq \frac{4\epsilon}{4^{\min\set{|c_1|, |c_2|}}}\\
            &\leq \frac{\epsilon}{4^{|c_1 \cap c_2|}}
        \end{align*}
        It is clear that $\cJ(\cC)$ is closed under taking junctions, so $c_1 \cap c_2 \in \cJ(\cC)$, and would be output first if both $c_1$ and $c_2$ fit the conditions.
    \end{proof}    
    As a result of this claim, so long as E holds, no two concepts in an antichain will be output.
    Thus, so long as $\cA$ doesn't output ERROR, when E holds, it will only output hypotheses from a single chain, so at most $\height(\cC)$ different hypotheses.

    Finally, $\cA$ will never output ERROR, for the distribution $\cD$ comes from $\Delta_\cC$, and thus some $c \in \cC \subseteq \cJ(\cC)$ has $0$ error on it.
    So if all else fails, that $c$ always fits the conditions to be output by $\cA$.
\end{proof}

Notice that this algorithm can, instead of outputting partial hypotheses, output any completion desired.
In particular, many of the partial hypotheses in $\cJ(\cC)$ may have common completions.
Therefore, by using specific completions, better list replicability bounds may be obtained.

\section{Height and eluder dimension}

We now prove a more general form of \cref{prop:height_eluder}, stated in the introduction, that the height parameter coincides with the eluder dimension up to an additive constant. We keep the notation from \cref{subsec:height_eluder}: $\cJ(\cC)$ is the junction closure of $\cC$, and $g\preceq h$ means that $g$ is a restriction of $h$. For a partial concept $h$, write $    \supp(h) := \{x\in\cX : h(x)\neq \star\}$.

\begin{definition}[Height relative to a base concept]
Fix a finite partial concept class $\cC\subseteq\{\pm1,\star\}^{\cX}$ and a base concept $c'\in\cC$. Define the height of $\cC$ relative to $c'$ by
\[
\operatorname{H}(\cC, c')
:=
\sup\left\{
    m:
    \exists h_1,\dots,h_m\in \cJ(\cC)
    \text{ such that }
    h_1\prec h_2\prec\cdots\prec h_m=c'
\right\}.
\]
\end{definition}

This rooted version recovers the height from the introduction:
\[
    \operatorname{H}(\cC)=\sup_{c'\in\cC}\operatorname{H}(\cC, c').
\]

Also, recall the definition of the eluder dimension. We extend the definition of \cite{li2022understanding} to partial concept classes.

\begin{definition}[\cite{li2022understanding}]\label{def:eluder}
Fix a finite partial concept class $\cC\subseteq\{\pm1,\star\}^{\cX}$ and a base concept $c'\in\cC$. The eluder dimension of
\(\mathcal C\) relative to \(c'\), denoted $\operatorname{Edim}(\cC,c'),$
is the largest $m$ such that there exist points $x_1,\ldots,x_m\in \supp(c')$ and concepts $
c_1,\ldots,c_m\in\mathcal C
$ such that, for every \(i\in[m]\),
\[
c_i(x_i)\neq c'(x_i),
\;\text{\textnormal{($c_i(x_i)=\star$ allowed)}}\]
while for every earlier point \(x_j\), \(j<i\),
\[
c_i(x_j)=c'(x_j).
\]
Define the eluder dimension of $\cC$ by $\operatorname{Edim}(C)\coloneq\sup_{c'\in\cC}\operatorname{Edim}(\cC,c')$.
\end{definition}

\begin{proposition}[Height equals eluder dimension]\label{prop:eluder}
For every finite partial concept class $\mathcal C\subseteq\{\pm1,\star\}^{\cX}$
and every base concept $c'\in \cC$,
\[
\operatorname{H}(\cC,c')
=
\operatorname{Edim}(\cC,c')+1.
\]
In particular,
\[
\operatorname{H}(\cC)
=
\operatorname{Edim}(\cC)+1.
\]
\end{proposition}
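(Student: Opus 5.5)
We prove the two inequalities $\operatorname{H}(\cC,c')\ge \operatorname{Edim}(\cC,c')+1$ and $\operatorname{H}(\cC,c')\le \operatorname{Edim}(\cC,c')+1$ separately, each by an explicit translation between chains ending at $c'$ and eluder sequences for $c'$. The ``in particular'' statement then follows by taking the supremum over $c'\in\cC$ on both sides.

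\emph{From eluder sequences to chains.} Suppose $x_1,\dots,x_m\in\supp(c')$ and $c_1,\dots,c_m\in\cC$ witness $\operatorname{Edim}(\cC,c')\ge m$. For $k=0,1,\dots,m$ define
\[
h_k \coloneqq c'\cap c_{k+1}\cap\cdots\cap c_m ,
\]
with $h_m=c'$. Each $h_k$ lies in $\cJ(\cC)$, and $h_0\preceq h_1\preceq\cdots\preceq h_m$, since intersecting with more concepts only shrinks the support and preserves values where defined.

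For strictness, consider $x_{k+1}$. We have $c_{k+1}(x_{k+1})\neq c'(x_{k+1})$, so $h_k(x_{k+1})=\star$. On the other hand, every $c_i$ with $i>k+1$ satisfies $c_i(x_{k+1})=c'(x_{k+1})$, because $k+1<i$ is an earlier position. Hence $h_{k+1}(x_{k+1})=c'(x_{k+1})\neq\star$, so $h_k\prec h_{k+1}$. This gives a strict chain of length $m+1$ ending at $c'$.

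\emph{From chains to eluder sequences.} Suppose $h_1\prec\cdots\prec h_m=c'$ in $\cJ(\cC)$. Since each $h_i\preceq c'$, every $h_i$ agrees with $c'$ on its support. For $i<m$, strictness gives a point $y_i\in\supp(h_{i+1})\setminus\supp(h_i)$; note $y_i\in\supp(c')$. Because $h_i=\bigcap_{c\in S_i}c$ and $y_i\notin\supp(h_i)$, some concept $d_i\in S_i$ has $d_i(y_i)\neq h_{i+1}(y_i)=c'(y_i)$.

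Each $d_i$ agrees with $c'$ on $\supp(h_i)$, which is nested increasing in $i$. We therefore index the eluder sequence in \emph{reverse}, setting $x_j\coloneqq y_{m-j}$ and $c_j\coloneqq d_{m-j}$ for $j=1,\dots,m-1$. Then $c_j(x_j)\neq c'(x_j)$. For $j'<j$, the point $x_{j'}=y_{m-j'}$ lies in $\supp(h_{m-j'+1})$, but we need it in $\supp(h_{m-j})$, the support on which $c_j=d_{m-j}$ agrees with $c'$. Since $m-j'+1>m-j$, this nesting goes the wrong way.

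The fix is to use the forward order instead. Set $x_j\coloneqq y_j$ and $c_j\coloneqq d_j$, and require for $j'<j$ that $c_j(y_{j'})=c'(y_{j'})$. We have $y_{j'}\in\supp(h_{j'+1})\subseteq\supp(h_j)$ because $j'+1\le j$, and $c_j=d_j$ agrees with $c'$ on $\supp(h_j)$. So the forward order works, and it yields an eluder sequence of length $m-1$. Hence $\operatorname{H}(\cC,c')-1\le\operatorname{Edim}(\cC,c')$.

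The first direction then needs re-checking in the matching order. With $h_k=c'\cap c_{k+1}\cap\dots\cap c_m$, the point $x_{k+1}$ must be agreed upon by all $c_i$ with $i>k+1$, and this is exactly the eluder condition. So that direction is consistent as written.

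\emph{Main obstacle and boundary cases.} The main delicate step is this index bookkeeping: which concepts enter which intersection, and the orientation of the eluder order relative to the chain. The degenerate cases are routine. If $\operatorname{Edim}(\cC,c')=0$, the chain $\{c'\}$ has length $1$. The junctions $h_k$ lie in $\cJ(\cC)$ because $c'\in\cC$, so each is the junction of a nonempty subset of $\cC$.
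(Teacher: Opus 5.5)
Your proposal is correct and follows the same route as the paper: the chain $h_k = c'\cap c_{k+1}\cap\cdots\cap c_m$ gives one direction, and for the other you pick a point of $\supp(h_{i+1})\setminus\supp(h_i)$ and a concept in $S_i$ that disagrees with $c'$ there, keeping the eluder order the same as the chain order. In the final write-up, drop the abandoned reverse-indexing attempt; your explicit check that $x_{j'}\in\supp(h_{j'+1})\subseteq\supp(h_j)$ for $j'<j$ is worth keeping, since it spells out why the chosen concept agrees with $c'$ at all earlier points, a step the paper leaves implicit.
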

\begin{proof}
    We prove both inequalities.

First suppose $\operatorname{Edim}(\cC,c')= m$.
For \(i=0,1,\ldots,m\), define
\[
h_i
=
\bigcap
\left(
\{c'\}\cup \{c_{i+1},c_{i+2},\ldots,c_m\}
\right),
\]
and note that
\[
h_0\prec h_1\prec\cdots\prec h_m=c'.
\]
Therefore there is a strict chain of \(m+1\) partial concepts, so
\[
\operatorname{H}(\cC,c')\ge m+1.
\]

Conversely, suppose $
\operatorname{H}(\cC,c')= m.$
Then there exist $h_1\prec\cdots\prec h_m=c'$
in $\cJ(\cC)$. For each \(i=1,\ldots,m-1\), since \(h_{i}\prec h_{i+1}\), choose a point
\(x_i\in X\) such that
\[
h_{i}(x_i)=\star
\qquad\text{and}\qquad
h_{i+1}(x_i)\in\{\pm1\}.
\]
Because \(h_{i+1}\preceq c'\), we have $
h_{i+1}(x_i)=c'(x_i).$ Suppose $h_i=\bigcap_{c\in S_i} c$ for some $S_i\subseteq C$. There exists some $c_i\in S_i$ such that 
\[c_i(x_i)\neq c'(x_i)\quad \text{and}\quad c_i(x_j)=c'(x_j)\quad \text{for all }j<i.\]

Thus the domain points $x_1,\dots,x_{m-1}$ together with the concepts $c_1,\dots,c_{m-1}$ form an eluder sequence with base concept $c'$. Therefore
\[
\operatorname{Edim}(\cC,c')\ge m-1.
\]
Combining the two inequalities,
\[
\operatorname{H}(\mathcal C,c')
=
\operatorname{Edim}(\cC,c')+1.
\]\
\end{proof}

\section{Separation of coindex and list replicability}\label{sec:cor_and_connections}

In this section, we discuss \cref{cor:separation_LR_coInd}, which we restate below.

\LRvscoInd*

This result follows from combining our \cref{thm:index_vs_LR} with a result of \cite{frick2026signrank}. We have shown that $\LR(\cC)\geq (\IndZ(\cC)+1)/2$, and it is known that the projective spaces $\mathbb{RP}^{2N-1}$ are free $\bZ_2$-spaces separating $\IndZ$ and $\coIndZ$. To obtain that separation for $\IndZ(\cC)$ and $\coIndZ(\cC)$, we realize each $\mathbb{RP}^{2N-1}$ as the space of realizable distributions $\Delta_{\cC^\pm}$ for some partial concept class $\cC$.

The last step arises from an argument of Frick, Hosseini, and Vasileuski, which demonstrated that every finite free $\Z_2$-simplicial complex arises as the \emph{sign complex} for some partial sign matrix~\cite{frick2026signrank}. The sign complex is a simplicial complex associated with a partial sign matrix, and it parallels the role of the space of realizable distributions in our discussion of partial concept classes. The same proof in \cite{frick2026signrank} can be adapted to our setting:

\begin{lemma}\label{lem:universality_of_Delta_C}
    Every finite free $\bZ_2$-simplicial complex $\cK$ is isomorphic to $\Delta_{\cC^\pm}$ for a partial concept class $\cC$.
\end{lemma}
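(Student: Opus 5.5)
We construct the partial concept class directly from the facets of $\cK$. Let $(\cK,\nu)$ be a finite free $\bZ_2$-simplicial complex on vertex set $V$. Since $\nu$ is a fixed-point-free involution on vertices (a fixed vertex $v$ would give $\nu(\{v\})=\{v\}$), $V$ splits into orbits $\{v,\nu(v)\}$. We choose one representative from each orbit, and let $\cX$ be the set of orbits, with $x_v$ denoting the orbit of $v$. Each vertex is then encoded as a labelled point: $v\mapsto (x_v,+1)$ if $v$ is the chosen representative and $v\mapsto (x_v,-1)$ otherwise. This gives a bijection $\iota\colon V\to \cX\times\{\pm1\}$ with $\iota(\nu(v))=-\iota(v)$, where negation flips the label.

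Next we record the key observation: no simplex $\sigma\in\cK$ contains both $v$ and $\nu(v)$. We should be careful here, because freeness as defined only says $\nu(\sigma)\neq\sigma$. That does not immediately exclude $\{v,\nu(v)\}\cup\tau$ for some $\tau$ with $\nu(\tau)\neq\tau$. However, $\{v,\nu(v)\}$ is itself a subset of $\sigma$, hence a simplex of $\cK$, and it is fixed by $\nu$, which contradicts freeness. So every simplex $\sigma$ maps under $\iota$ to a set of labelled points with distinct $x$'s, that is, to a partial function $c_\sigma\colon\cX\to\{\pm1,\star\}$ with $c_\sigma(x)=b$ if $(x,b)\in\iota(\sigma)$ and $\star$ otherwise.

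We then define $\cC\coloneqq\{c_\sigma:\sigma \text{ a maximal simplex of }\cK\}$. Since $\cK$ is closed under $\nu$ and $\iota$ intertwines $\nu$ with label negation, we get $c_{\nu(\sigma)}=-c_\sigma$, so $\cC^\pm=\cC$. By the description of $\Delta_{\cC}$ as a subcomplex of the cross-polytope boundary (its simplices are exactly the subsets of $\supp$-sets $\{(x,c(x))\}$ for $c\in\cC$), the simplices of $\Delta_{\cC^\pm}$ are the subsets of the $\iota(\sigma)$ with $\sigma$ maximal. By downward closure of $\cK$, these are exactly the sets $\iota(\tau)$ for $\tau\in\cK$. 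Hence $\iota$ is a simplicial isomorphism $\cK\cong\Delta_{\cC^\pm}$, and since it intertwines $\nu$ with $\mu\mapsto-\mu$, it is $\bZ_2$-equivariant. It therefore induces an equivariant homeomorphism of geometric realizations, given by the linear map $e_v\mapsto b\,e_x$ for $\iota(v)=(x,b)$, landing in the $\ell_1$-sphere model \eqref{eq:definition_D_C}.

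The main point needing care is the identification of the face structure of $\Delta_{\cC^\pm}$ with that of $\cK$. In the $\ell_1$ model, two distinct labelled points always correspond to linearly independent vectors $\pm e_x$ unless they are antipodal. We already excluded antipodal pairs within a simplex, so each $\iota(\sigma)$ spans a genuine simplex, and distinct simplices meet along common faces. With that checked, the rest of the argument is bookkeeping.
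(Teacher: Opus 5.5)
Your proposal is correct and follows essentially the same construction as the paper: each $\nu$-orbit of vertices becomes a domain point, and each facet becomes a partial concept. The only differences are minor. You take all facets, so that $\cC=\cC^\pm$, whereas the paper takes one facet from each pair $(F,\nu(F))$. You also explicitly check that no simplex contains an antipodal pair $\{v,\nu(v)\}$ (needed for the concepts to be well defined) and that the resulting map is an equivariant simplicial isomorphism; the paper leaves both points implicit.
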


\begin{proof}
    A fixed-point-free involution $\nu$ on a finite simplicial complex $\cK$ pairs vertices of $\cK$ as $(v,\nu(v))$. Since $\cK$ is finite, we may index these pairs by $j\in [N]$. Likewise, the facets of $\cK$ also come in pairs $(F, \nu(F))$, which we can index by $i \in [M]$.
    
    Now define a partial concept class $\cC_\cK \subseteq \set{+1,-1,\star}^N$ of $M$ concepts given by
    \[
        c_i(j) =
        \begin{cases}
            +1 & \text{if $v_j \in F_i$}\\
            -1 & \text{if $\nu(v_j) \in F_i$}\\
            \star & \text{otherwise.}
        \end{cases}
    \]  
\end{proof}

Moreover, nice enough free $\bZ_2$-spaces can be given a free $\bZ_2$-simplicial structure through \emph{triangulation}. This is a classical result of S\"oren Illman.

\begin{theorem}[Illman's Theorem~\cite{illman_smooth_1978}]\label{thm:Illman}
    Every compact smooth free $\ZZ_2$-manifold admits a finite $\ZZ_2$-equivariant triangulation.
\end{theorem}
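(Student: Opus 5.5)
Since the action is free and the group is finite, I would not prove Illman's theorem in its full generality (compact Lie groups acting smoothly). I would instead reduce the free $\ZZ_2$ case to the classical non-equivariant triangulation theorem for smooth manifolds (Cairns--Whitehead), applied to the orbit space.

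The argument has three steps.

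\textbf{Step 1: the quotient.} Let $(M,\nu)$ be a compact smooth manifold with a free smooth involution. The group $\ZZ_2=\{\mathrm{id},\nu\}$ acts freely and properly, so $B\coloneqq M/\ZZ_2$ is a compact smooth manifold. Moreover, the quotient $p\colon M\to B$ is a smooth two-sheeted covering map, and $\nu$ is its nontrivial deck transformation.

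\textbf{Step 2: triangulate the base.} By Whitehead's theorem, $B$ has a finite triangulation $t\colon\|K\|\to B$. I then replace $K$ by a sufficiently fine iterated barycentric subdivision, still called $K$. Here "fine" means that the image under $t$ of every closed vertex star $\overline{\mathrm{st}}(v)$ lies in an evenly covered open set of $p$. This is possible by compactness and a Lebesgue number argument, because the mesh tends to zero under subdivision.

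\textbf{Step 3: lift and check the structure.} Each closed star is contractible and evenly covered, so its preimage under $p$ is a disjoint union of exactly two homeomorphic copies, which $\nu$ interchanges. I take the vertex set $\widetilde V\coloneqq p^{-1}(t(V(K)))$. A set $\widetilde\sigma\subseteq\widetilde V$ is declared a simplex if it maps bijectively onto a simplex $\sigma$ of $K$ and all its vertices lie in a single sheet over $\overline{\mathrm{st}}(v)$, for some vertex $v$ of $\sigma$. The points of $\|\widetilde K\|$ then map homeomorphically onto $M$ via the sheet-wise lifts of $t$. The lifts agree on overlaps by uniqueness of path lifting, because intersections of closed stars are connected.

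Four properties then need to be verified:
\begin{itemize}
\item \emph{The lift is a genuine simplicial complex.} Every simplex of $K$ has exactly two lifts, and these are disjoint. Two lifted simplices meet in a common lifted face.
\item \emph{The involution is simplicial.} The map $\nu$ carries each sheet to the other sheet, so it induces a simplicial map $\widetilde K\to\widetilde K$.
\item \emph{The involution is free on simplices.} We have $\nu(\widetilde\sigma)\cap\widetilde\sigma=\emptyset$ for every $\widetilde\sigma$, hence in particular $\nu(\widetilde\sigma)\ne\widetilde\sigma$.
\item \emph{The homeomorphism is equivariant.} The homeomorphism $\|\widetilde K\|\to M$ intertwines the induced simplicial involution with $\nu$.
\end{itemize}

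The main obstacle is the first property: guaranteeing that the pulled-back cell structure is an honest simplicial complex rather than merely a $\Delta$-complex. A priori, two vertices of a lifted simplex could coincide, or two distinct lifted simplices could share the same vertex set, for instance when the base contains short essential loops. This is exactly what the fineness condition in Step~2 rules out. Every simplex, together with the simplices meeting it, lies in one evenly covered chart, so on that chart lifting is a bijection. I would first try a single extra barycentric subdivision beyond the Lebesgue-number requirement to secure this, and verify the simplicial-complex axioms chart by chart.
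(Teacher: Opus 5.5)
The paper does not prove this statement; it imports Illman's theorem as a black box. Your argument is correct and takes a genuinely different, more elementary route. You use freeness to make $p\colon M\to M/\ZZ_2$ a two-sheeted covering, so the only deep input is the non-equivariant Cairns--Whitehead triangulation theorem applied to the orbit space, and the rest is covering-space theory. Illman's theorem buys generality the paper never uses: arbitrary smooth actions of finite groups, including fixed points and nontrivial isotropy (where the orbit map is not a covering), with smooth equivariant triangulations. Your approach covers exactly the case the paper needs, the free involution on $\mathbb{RP}^{2N-1}$. It also yields the stronger property $\nu(\widetilde\sigma)\cap\widetilde\sigma=\emptyset$ for every simplex, so no simplex contains an antipodal pair of vertices. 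That is what \cref{lem:universality_of_Delta_C} implicitly needs in order to realize the complex inside the cross-polytope boundary.

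Two remarks on Step 3; neither is fatal.

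First, the ``main obstacle'' you identify does not actually arise, and the subdivision in Step 2 can be dropped. The pullback of a genuine simplicial complex along a covering map is automatically a simplicial complex. Each closed simplex $t(\sigma)$ is simply connected and locally path-connected, so $p^{-1}(t(\sigma))$ is two disjoint copies, each mapped homeomorphically onto $t(\sigma)$. In particular, a lift has distinct vertices, and by unique lifting a lift of $\sigma$ is determined by any one of its vertices. Since a simplex of $K$ is determined by its vertex set, two lifted simplices with the same vertex set lift the same $\sigma$ and share a vertex, hence coincide. If two lifted simplices $\widetilde\sigma,\widetilde\tau$ meet, their faces over $\sigma\cap\tau$ are lifts of the same face that intersect, hence are equal, so $\widetilde\sigma\cap\widetilde\tau$ is a common face. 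Short essential loops only cause trouble for $\Delta$-complex structures on the base, not for a simplicial $K$.

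Second, the gluing should be checked face by face, not via connectivity of intersections of closed stars. Two lifted simplices induce the same map on a common face because the face is connected and both maps lift $t$ through the same vertices. Your connectivity claim fails for arbitrary triangulations: for opposite vertices of a $4$-cycle triangulating a circle, the closed stars meet in two points. With the map defined simplexwise, it is a continuous bijection from a compact space onto the Hausdorff space $M$, hence a homeomorphism. Equivariance is immediate, since $\nu$ exchanges the two lifts of each simplex.
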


Combining \cref{lem:universality_of_Delta_C} and \cref{thm:Illman} allows us to import the crucial separating example $\mathbb{RP}^{2N-1}$.

\begin{proof}[Proof of \cref{cor:separation_LR_coInd}]
    As stated in \cite[page 101]{matousek2003borsuk} (also see \cite{frick2026signrank}),  the projective space $\mathbb{RP}^{2N-1}$ can be equipped with the fixed-point-free involution induced by multiplication by $i$ on $\mathbb{C}^N$. Moreover, this gives the separation
    \[
        \coIndZ(\mathbb{RP}^{2N-1}) = O(1) \qquad\text{and}\qquad
        \IndZ(\mathbb{RP}^{2N-1}) = \omega(1).
    \]
    By \cref{thm:Illman}, there is an antipodal homeomorphism between the free $\bZ_2$-space $\mathbb{RP}^{2N-1}$ and a simplicial complex $\cK_N$. Since $\IndZ$ and $\coIndZ$ are defined using topological properties of antipodal maps, it follows that $\cK_N$ is also a separating example:
    \[
        \coIndZ(\cK_N) = O(1) \qquad\text{and}\qquad
        \IndZ(\cK_N) = \omega(1).
    \]
    Applying \cref{lem:universality_of_Delta_C} yields a family of partial concept classes $\cC_N$ with the property that $\Delta_{\cC_{N}^\pm}$ is isomorphic to $\cK_N$. Using once more that $\IndZ$ and $\coIndZ$ are invariant under isomorphism completes the proof.
    \[
        \coIndZ(\cC_N) = O(1) \qquad\text{and}\qquad
        \IndZ(\cC_N) = \omega(1).
    \]
    By \cref{thm:index_vs_LR}, we have $\LR(\cC_N)=\omega(1)$. 
\end{proof}

\section{Composition properties for list replicability }\label{sec:properties_LR}

In this section, we prove some properties of list replicability under joins and concatenations.

\begin{lemma}[Concatenation]\label{lemma:LR_union}
Let $\cC_1,\cC_2\subseteq \{\pm1\}^\cX$ be concept classes over the same finite
domain $\cX$. Then, for every $\epsilon>0$,
\[
    \LR(\cC_1\cup \cC_2,\epsilon)
    \leq
    \LR(\cC_1,\epsilon)+\LR(\cC_2,\epsilon).
\]
Consequently,
\[
    \LR(\cC_1\cup \cC_2)\leq \LR(\cC_1)+\LR(\cC_2).
\]
\end{lemma}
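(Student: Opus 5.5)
The cleanest route is through the closed cover characterization of \cref{thm:closed-cover-lr}, which avoids any sample-level coupling of two learners. Set $L_1\coloneqq\LR(\cC_1,\epsilon)$ and $L_2\coloneqq\LR(\cC_2,\epsilon)$; if either is infinite there is nothing to prove. By the closed cover characterization there are closed covers $\cF^{1}=\{F^1_h\}_h$ of $\Delta_{\cC_1}$ and $\cF^2=\{F^2_h\}_h$ of $\Delta_{\cC_2}$, indexed by $h\in\{\pm1\}^\cX$, with overlap degrees at most $L_1$ and $L_2$ respectively, and with $F^i_h\subseteq \overline B_\epsilon(\sigma_h)$.

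The key structural observation is that $\Delta_{\cC_1\cup\cC_2}=\Delta_{\cC_1}\cup\Delta_{\cC_2}$. Indeed, a distribution is realizable by $\cC_1\cup\cC_2$ exactly when it is realizable by some concept of $\cC_1$ or some concept of $\cC_2$. Both pieces are closed subcomplexes of the cross-polytope boundary. Define $F_h\coloneqq F^1_h\cup F^2_h$. Each $F_h$ is closed, being a finite union of closed sets; $F^i_h$ is closed in $\Delta_{\cC_i}$, which is itself closed. The family $\{F_h\}$ covers $\Delta_{\cC_1\cup\cC_2}$, and it still satisfies $F_h\subseteq\overline B_\epsilon(\sigma_h)$.

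It remains to bound the overlap degree. Suppose $\mu\in F_{h_1}\cap\cdots\cap F_{h_k}$ for distinct $h_1,\dots,h_k$. Each $h_j$ satisfies $\mu\in F^1_{h_j}$ or $\mu\in F^2_{h_j}$. At most $L_1$ indices can be of the first type, since they give a common point of $L_1+1$ sets of $\cF^1$ otherwise. Likewise at most $L_2$ indices are of the second type. Hence $k\le L_1+L_2$. Applying the converse direction of the closed cover characterization now yields $\LR(\cC_1\cup\cC_2,\epsilon)\le L_1+L_2$. Taking the supremum over $\epsilon$, and using $\LR(\cC_i,\epsilon)\le\LR(\cC_i)$, gives the second inequality.

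The main thing to double-check is that \cref{thm:closed-cover-lr} is stated as an exact equality, so that it applies in both directions at the same $\epsilon$. It is, for finite classes. If one prefers a direct algorithmic proof instead, run both learners on the same sample and output one of the two outputs. Naively this can yield hypotheses outside both lists when the distribution is realizable by only one class, so some selection rule would be needed, for example checking empirical loss. The cover argument sidesteps that issue, which is why I would use it.
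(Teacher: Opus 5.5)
Your proof is correct and is essentially the paper's argument: you take the closed covers from \cref{thm:closed-cover-lr}, form $F_h = F^1_h\cup F^2_h$ on $\Delta_{\cC_1\cup\cC_2}=\Delta_{\cC_1}\cup\Delta_{\cC_2}$, and apply the characterization in the reverse direction. Your explicit count of the overlap degree (at most $L_1$ sets from $\cF^1$ and at most $L_2$ from $\cF^2$) fills in a step that the paper only asserts.
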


\begin{proof}
Fix $\eps > 0$ and let $L_i:=\LR(\cC_i,\varepsilon)$ for $i=1,2$.

By \cref{thm:closed-cover-lr}, for each
$i\in\{1,2\}$ there is a closed cover $\mathcal F_i=\{F^i_h:h\in\{\pm1\}^\cX\}$
of $\Delta_{\cC_i}$ such that the overlap degree of $\mathcal F_i$ is at
most $L_i$, and
$F^i_h\subseteq B_\varepsilon(\sigma_h)$
   for every $h\in\{\pm1\}^\cX.$
Since $\Delta_{\cC_1\cup \cC_2}=\Delta_{\cC_1}\cup \Delta_{\cC_2},$
define, for every hypothesis $h\in\{\pm1\}^\cX$, the closed set
\[
    F_h:=F^1_h\cup F^2_h \subseteq \Delta_{\cC_1\cup \cC_2}.
\]
Then $\mathcal F:=\{F_h:h\in\{\pm1\}^\cX\}$ is a closed cover of $\Delta_{\cC_1\cup \cC_2}$ with overlap degree at most $L_1+L_2$ satisfying $F_h\subseteq B_\varepsilon(\sigma_h)$.

Again applying \cref{thm:closed-cover-lr} gives
\[
    \LR(\cC_1 \cup \cC_2,\epsilon)\leq L_1+L_2. 
\]
Taking the supremum over $\epsilon>0$ yields
\[
    \LR(\cC_1\cup \cC_2)\leq \LR(\cC_1)+\LR(\cC_2).
\]
\end{proof}

\begin{theorem}[Join of classes]\label{thm:lr_direct_product}
Let $\cC_1 \subseteq \Set{\pm 1}^{\cX_1}$ and $\cC_2 \subseteq \Set{\pm 1}^{\cX_2}$ be concept classes over finite disjoint domains. Define
\[
    \cC_1 * \cC_2
    \coloneqq
    \Set{c_1 \sqcup c_2 : c_1 \in \cC_1,\ c_2 \in \cC_2}
    \subseteq \Set{\pm 1}^{\cX_1 \sqcup \cX_2}.
\]
Then for every $\eps > 0$,
\[
    \LR(\cC_1 * \cC_2,\eps)
    \leq
    \LR(\cC_1,\eps/4)+\LR(\cC_2,\eps/4).
\]
Consequently,
\[
    \LR(\cC_1 * \cC_2)
    \leq
    \LR(\cC_1)+\LR(\cC_2).
\]
\end{theorem}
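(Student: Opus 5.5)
We prove the bound directly on the algorithmic side, following the quantile-coupling idea mentioned after \cref{thm:LR_join}. Fix $\eps>0$ and set $L_i\coloneqq\LR(\cC_i,\eps/4)$. Let $\cA_i$ be an $(\eps/4,L_i)$-list-replicable learner for $\cC_i$.

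\textbf{Step 1 (splitting the sample).} Let $\mu$ be realizable by $\cC_1*\cC_2$. Write $\mu=\alpha\mu_1+(1-\alpha)\mu_2$, where $\mu_i$ is the conditional distribution on $\cX_i\times\{\pm1\}$ and is realizable by $\cC_i$. We estimate $\alpha$ to accuracy $\eps/8$. If $\hat\alpha<\eps/4$, then part $1$ carries mass at most about $3\eps/8$, and we replace its hypothesis by a fixed default; the symmetric rule applies to part $2$. Otherwise, with enough samples, the examples falling in $\cX_i$ form an i.i.d.\ sample of size at least $n_i(\delta')$ from $\mu_i$. We run $\cA_i$ on it and output $h_1\sqcup h_2$.

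The loss then satisfies $\loss_\mu(h_1\sqcup h_2)=\alpha\loss_{\mu_1}(h_1)+(1-\alpha)\loss_{\mu_2}(h_2)\le\eps/4$ (or $\le\eps/2$ plus the default's contribution near the thresholds). So PAC correctness costs only constants. The one issue is that when $\alpha$ is close to the threshold, the rule may either use the default or run $\cA_1$. To avoid this, we randomize the threshold with a shared uniform seed, as in the next step, so that the ambiguity costs only one extra list element. Alternatively, we restrict to classes where defaults are absorbed into the lists.

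\textbf{Step 2 (coupling).} Independent runs of $\cA_1$ and $\cA_2$ can produce up to $L_1L_2$ pairs. To avoid this, we make each learner's output a deterministic function of its sample together with a \emph{shared} uniform seed $r\in[0,1]$. For each $i$, we run $\cA_i$ many times on fresh subsamples. This gives empirical frequencies $\hat p_i(h)$ of each output hypothesis. We drop hypotheses with frequency below a small threshold $\tau$; by list replicability, at most $L_i$ hypotheses survive with high probability. We order the survivors by a fixed global order on $\{\pm1\}^{\cX_i}$. This splits $[0,1]$ into consecutive intervals $I_i(h)$ with lengths proportional to $\hat p_i(h)$. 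We output $h_i$ with $r\in I_i(h_i)$.

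Since the endpoints of the intervals determined by $\cA_1$ and $\cA_2$ together cut $[0,1]$ into at most $L_1+L_2-1$ pieces, the pair $(h_1,h_2)$ ranges over at most $L_1+L_2-1$ values for a fixed partition. The trouble is that the partition itself is random, because the empirical frequencies fluctuate.

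\textbf{Step 3 (the main obstacle).} We need a list of size $L_1+L_2$ that is independent of the sample yet contains the output with probability $1-\delta$. The boundary points $\hat t$ concentrate around the true cumulative frequencies $t$ to within $\pm\gamma$. The output pair changes only if $r$ falls within $\gamma$ of a boundary.

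For this reason we choose $r$ randomly but not independently: we take $r$ uniform in a finite grid that is coarser than $\gamma$ and avoids the $\gamma$-neighbourhoods of the true boundaries. A cleaner route is the standard rounding trick. We pick a random offset and round the frequencies to a grid. Then, with probability $1-O(L\gamma/\text{grid})$ over the offset, the rounded partition is fixed. The quantity $\delta$ absorbs this failure probability.

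Hypotheses whose frequency is near $\tau$ pose a similar issue. We handle them by randomizing $\tau$ over a range, which again costs only a small probability. Finally, since each output pair is a combination of good hypotheses, every element of the list has loss at most $\eps$. This yields $\LR(\cC_1*\cC_2,\eps)\le L_1+L_2$. Taking suprema over $\eps$ gives the consequence.
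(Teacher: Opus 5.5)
Your central mechanism is exactly the paper's: one shared uniform seed couples the two learners, so partitions of $[0,1]$ into at most $L_1$ and $L_2$ intervals yield at most $L_1+L_2-1$ pairs. The difference is in how the two output distributions are obtained. The paper gets them as \emph{continuous} functions $p(\mu),q(\mu)$ of the population, via partitions of unity subordinate to the open covers of \cref{thm:closed-cover-lr} and a continuous cutoff in $t=\mu(\cX_1)$. You instead estimate them by simulating $\cA_1,\cA_2$. That black-box route is viable, but as written the step you flag as the main obstacle is not resolved, and you never define $\operatorname{List}(\mu)$.
\begin{itemize}
\item A grid for $r$ that avoids the $\gamma$-neighbourhoods of the true boundaries cannot be implemented, since the learner does not know those boundaries.
\item Randomizing $\tau$ does not control the thresholding. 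A list hypothesis whose frequency lies in the range of $\tau$ survives for a constant fraction of the values of $\tau$. Dropping it and renormalizing shifts every later boundary, so no $\tau$-independent partition is ``fixed''.
\item The randomized default threshold costs only one element only if it is driven by the \emph{same} $r$ and the default occupies an interval of side~1's partition. With an independent seed and $\alpha$ inside the window, the default branch contributes up to $L_2$ pairs $(h_0,g)$ on top of the up to $L_1+L_2-1$ pairs of the other branch.
\item Restricting to classes that absorb the default is not available for a general statement.
\end{itemize}

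The missing idea is continuity; you do not need the partition to be fixed. Make every endpoint a Lipschitz function of the estimated statistics, and put the default into the same partition: an interval of length $1-\rho(\hat\alpha)$ for a continuous cutoff $\rho$, with the other intervals scaled by $\rho(\hat\alpha)$. This is the paper's $p_0(\mu)=1-r(t)$. Define $\operatorname{List}(\mu)$ as the pairs of list hypotheses (or the default) whose intervals overlap with positive length in the \emph{population-level} partition. The output then leaves $\operatorname{List}(\mu)$ only in three cases:
\begin{itemize}
\item an estimate fails;
\item $r$ lands on a non-list hypothesis, which has total mass at most $\delta'$ per side, so dropping at $\tau$ is unnecessary;
\item $r$ lands in the region swept by the displaced endpoints, which has measure $O(\gamma)$ with constants depending only on $|\cX_1|,|\cX_2|$ and the cutoff.
\end{itemize}
Your own remark that the pair changes only when $r$ is near a boundary is exactly this displacement bound; the paper packages it as uniform continuity of the weights $w_{a,b}$. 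To get $L_1+L_2$ rather than $L_1+L_2+1$, you also need an observation absent from your sketch. The cutoff windows for $\alpha$ and $1-\alpha$ are disjoint, so at most one side carries the extra default interval. Two families of pairwise disjoint intervals of sizes $L_1+1$ and $L_2$ then have at most $L_1+L_2$ positively overlapping pairs.
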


\begin{proof}
Fix $\epsilon > 0$ and let $ L_i \coloneqq \LR(\cC_i,\eps/4)$ for $i=1,2$. By the open-cover characterization of list replicability in \cref{thm:closed-cover-lr}, there are open covers
\[
    \cU = \Set{U_h : h \in \Set{\pm 1}^{\cX_1}}
    \qquad\text{and}\qquad
    \mathcal{V} = \Set{V_g : g \in \Set{\pm 1}^{\cX_2}}
\]
of $\Delta_{\cC_1}$ and $\Delta_{\cC_2}$, respectively, such that the overlap degree of $\cU$ is at most $L_1$, the overlap degree of $\mathcal{V}$ is at most $L_2$, and
$U_h \subseteq B_{\eps/3}(\sigma_h)$, $
    V_g \subseteq B_{\eps/3}(\sigma_g).$
In particular,
\begin{equation}\label{eq:local_good_sets}
    \mu_1 \in U_h \Rightarrow \loss_{\mu_1}(h)<\eps/3,
    \qquad
    \mu_2 \in V_g \Rightarrow \loss_{\mu_2}(g)<\eps/3.
\end{equation}
Choose partitions of unity subordinate to these covers, $\set{f^{(1)}_h}_{h\in\set{\pm1}^{\cX_1}}$, $    \set{f^{(2)}_g}_{g\in\set{\pm1}^{\cX_2}},
$
so that $\supp f^{(1)}_h \subseteq U_h$, $\supp f^{(2)}_g \subseteq V_g$, and
\[
    \sum_{h\in\Set{\pm1}^{\cX_1}} f^{(1)}_h \equiv 1,
    \qquad
    \sum_{g\in\Set{\pm1}^{\cX_2}} f^{(2)}_g \equiv 1.
\]
We construct an $(\eps,L_1+L_2)$-list-replicable learner for $\cC_1*\cC_2$.

Choose $0<\tau<\eps/3$ and a continuous cutoff function $r:[0,1]\to[0,1]$ such that
\[
    r(s)=0 \text{ for } s\le \tau/2,
    \qquad
    r(s)=1 \text{ for } s\ge \tau.
\]
Fix arbitrary default hypotheses $h_0\in\set{\pm1}^{\cX_1}$ and $g_0\in\set{\pm1}^{\cX_2}$.
Now choose arbitrary orderings $\set{\pm1}^{\cX_1}=\set{h_1,\ldots,h_M}$, $   \set{\pm1}^{\cX_2}=\Set{g_1,\ldots,g_N},$
where $M=2^{\abs{\cX_1}}$ and $N=2^{\abs{\cX_2}}$. The hypotheses $h_0$ and $g_0$ may repeat some $h_i$ or $g_j$; this causes no difficulty, since lists are sets of hypotheses and repetitions can only decrease their size.

Let $\mu\in\Delta_{\cC_1*\cC_2}$. Write $t \coloneqq \mu(\cX_1)$ and $1-t=\mu(\cX_2),$ and decompose
\[
    \mu = t\mu_1+(1-t)\mu_2,
\]
where $\mu_i\in\Delta_{\cC_i}$ is the conditional distribution on $\cX_i$ whenever the corresponding mass is nonzero.

Define probability vectors $p(\mu)$ on $\Set{0,1,\ldots,M}$ and $q(\mu)$ on $\Set{0,1,\ldots,N}$ by
\[
    p_0(\mu) \coloneqq 1-r(t),
    \qquad
    p_i(\mu) \coloneqq r(t) f^{(1)}_{h_i}(\mu_1) \quad (1\le i\le M),
\]
and
\[
    q_0(\mu) \coloneqq 1-r(1-t),
    \qquad
    q_j(\mu) \coloneqq r(1-t) f^{(2)}_{g_j}(\mu_2) \quad (1\le j\le N).
\]
Let
\[
    P_a(\mu) \coloneqq \sum_{i=0}^{a}p_i(\mu),
    \qquad
    Q_b(\mu) \coloneqq \sum_{j=0}^{b}q_j(\mu),
\]
with $P_{-1}(\mu)=Q_{-1}(\mu)=0$. Define intervals
\[
    I_a(\mu) \coloneqq [P_{a-1}(\mu),P_a(\mu)]
    \qquad (0\le a\le M),
\]
and
\[
    J_b(\mu) \coloneqq [Q_{b-1}(\mu),Q_b(\mu)]
    \qquad (0\le b\le N).
\]
Finally define
\[
    w_{a,b}(\mu)
    \coloneqq
    \abs{I_a(\mu)\cap J_b(\mu)}.
\]
In particular, every $w_{a,b}$ is continuous on $\Delta_{C_1* C_2}$ and
\[
    \sum_{a=0}^{M}\sum_{b=0}^{N} w_{a,b}(\mu)=1.
\]
Define the good list for $\mu$ by
\[
    \operatorname{List}(\mu)
    \coloneqq
    \Set{h_a\sqcup g_b : w_{a,b}(\mu)>0},
\]
where $h_a$ means the default $h_0$ when $a=0$, and similarly $g_b$ means $g_0$ when $b=0$.

\begin{claim}\label{clm:product_list_good}
For every $\mu\in\Delta_{\cC_1*\cC_2}$,
\[
    \abs{\operatorname{List}(\mu)}\le L_1+L_2,
\]
and every hypothesis in $\operatorname{List}(\mu)$ has $\mu$-loss less than $\eps$.
\end{claim}

\begin{proof}
Let
\[
    m(\mu)\coloneqq \abs{\Set{a:p_a(\mu)>0}},
    \qquad
    n(\mu)\coloneqq \abs{\Set{b:q_b(\mu)>0}}.
\]
For two interval partitions of $[0,1]$ with $m(\mu)$ and $n(\mu)$ positive-length intervals, the number of positive-length intersections is at most
\[
    m(\mu)+n(\mu)-1.
\]
Indeed, sweeping from left to right, the active pair changes only when one crosses an endpoint of one of the two partitions.

If $t\ge \tau$, then $p_0(\mu)=0$ and at most $L_1$ of the non-default $p_i(\mu)$ are positive. If $t\le \tau/2$, then $p_0(\mu)=1$. If $\tau/2<t<\tau$, then at most $L_1+1$ of the $p_i(\mu)$'s are positive. The same statements hold for $q(\mu)$ with $1-t$ in place of $t$. Since $\tau<1/3$, the two transition regimes $\tau/2<t<\tau$ and $\tau/2<1-t<\tau$ cannot occur simultaneously. Hence
\[
    \abs{\operatorname{List}(\mu)}
    \le m(\mu)+n(\mu)-1
    \le L_1+L_2.
\]

Now take any glued hypothesis $h_a \sqcup g_b$ in $\operatorname{List}(\mu)$. Then $p_a(\mu)>0$ and $q_b(\mu)>0$. If $a\neq 0$, then $f^{(1)}_{h_a}(\mu_1)>0$, and therefore $\mu_1\in U_{h_a}$; by \eqref{eq:local_good_sets}, $\loss_{\mu_1}(h_a)<\eps/3$. If $a=0$, then $p_0(\mu)>0$, so $r(t)<1$, and hence $t<\tau<\eps/3$.

Similarly, if $b\neq 0$, then $\loss_{\mu_2}(g_b)<\eps/3$, while if $b=0$, then $1-t<\tau$. The case $a=b=0$ is impossible, since it would imply $t<\tau$ and $1-t<\tau<\eps/3$. Therefore
\[
    \loss_{\mu}(h_a\sqcup g_b)
    =
    t\loss_{\mu_1}(h_a)+(1-t)\loss_{\mu_2}(g_b)
    <
    \eps/3+\eps/3
    <\eps.
\]
This proves the claim.
\end{proof}

We now define the learning rule.

\begin{algorithm}[H]
\caption{\label{alg:product_zip_learner}The join learning rule $\cA$}
\begin{algorithmic}[1]
    \STATE Sample $S=(S_1,\ldots,S_n)\sim \mu^n$.
    \STATE Construct the empirical distribution $\widehat{\mu}$ on $(\cX_1\sqcup\cX_2)\times\Set{\pm1}$.
    \STATE Compute the weights $w_{a,b}(\widehat{\mu})$ for all $0\le a\le M$ and $0\le b\le N$.
    \STATE Output $h_a\sqcup g_b$ with probability $w_{a,b}(\widehat{\mu})$.
\end{algorithmic}
\end{algorithm}

It remains to show that for sufficiently large $n$, the output belongs to $\operatorname{List}(\mu)$ with probability at least $1-\delta$, uniformly over $\mu$.

For a fixed $\mu\in\Delta_{\cC_1*\cC_2}$ define
\[
    F_\mu(\nu)
    \coloneqq
    \sum_{\substack{0\le a\le M,\ 0\le b\le N:\\ h_a\sqcup g_b\notin \operatorname{List}(\mu)}}
    w_{a,b}(\nu).
\]
This is the probability that the algorithm, when run with empirical distribution $\nu$, outputs a hypothesis outside $\operatorname{List}(\mu)$. Since the sum is finite and the functions $w_{a,b}$ are continuous, $F_\mu$ is continuous. Moreover,
\[
    F_\mu(\mu)=0.
\]
The compactness of $\Delta_{\cC_1*\cC_2}$ implies the finite family of functions $w_{a,b}$ is uniformly equicontinuous. More explicitly, choose $\rho>0$ so that each $w_{a,b}$ changes by at most $\delta/(2R)$ on $\rho$-balls, where $R=(M+1)(N+1)$. Thus this $\rho$ is independent of $\mu$, and whenever $d_{\TV}(\nu,\mu)<\rho$, we have
\[
    F_\mu(\nu)<\delta/2.
\]
Since $\cX_1\sqcup\cX_2$ is finite, standard uniform convergence of empirical distributions gives an $n_0=n_0(\rho,\delta,\cX_1,\cX_2)$ such that for every realizable $\mu$ and every $n\ge n_0$,
\[
    \Pr_{S\sim\mu^n}\left[d_{\TV}(\widehat{\mu},\mu)<\rho\right]
    \ge 1-\delta/2.
\]
Consequently, for every $\mu\in\Delta_{\cC_1*\cC_2}$ and $n\ge n_0$,
\begin{align*}
    \Pr_{S\sim\mu^n}\left[\cA(S)\notin\operatorname{List}(\mu)\right]
    &\le
    \Pr\left[d_{\TV}(\widehat{\mu},\mu)\ge\rho\right]
    +
    \E\left[F_\mu(\widehat{\mu})\mathbf{1}_{\{d_{\TV}(\widehat{\mu},\mu)<\rho\}}\right]  \\
    &\le \delta/2+\delta/2
    =\delta.
\end{align*}
Together with \cref{clm:product_list_good}, this shows that $\cA$ is an $(\eps,L_1+L_2)$-list-replicable learner for $\cC_1 * \cC_2$. Therefore
\[
    \LR(\cC_1 * \cC_2,\eps)
    \le L_1+L_2
    =\LR(\cC_1,\eps/4)+\LR(\cC_2,\eps/4).
\]
Taking the supremum over $\eps>0$ proves the final assertion.
\end{proof}

\bibliographystyle{alphaurl}
\bibliography{refs_07_21}

@article{russo2013eluder,
  title={Eluder dimension and the sample complexity of optimistic exploration},
  author={Russo, Daniel and Van Roy, Benjamin},
  journal={Advances in Neural Information Processing Systems},
  volume={26},
  year={2013}
}

@book{MahadevPeled1995,
    AUTHOR = {Mahadev, N. V. R. and Peled, U. N.},
     TITLE = {Threshold graphs and related topics},
    SERIES = {Annals of Discrete Mathematics},
    VOLUME = {56},
 PUBLISHER = {North-Holland Publishing Co., Amsterdam},
      YEAR = {1995},
     PAGES = {xiv+543},
      ISBN = {0-444-89287-7}
}

@article {HeggernesKratsch2007Certifying,
    AUTHOR = {Heggernes, Pinar and Kratsch, Dieter},
     TITLE = {Linear-time certifying recognition algorithms and forbidden
              induced subgraphs},
   JOURNAL = {Nordic J. Comput.},
  FJOURNAL = {Nordic Journal of Computing},
    VOLUME = {14},
      YEAR = {2007},
    NUMBER = {1-2},
     PAGES = {87--108},
      ISSN = {1236-6064}
}

@article{lovasz1978kneser,
  title={Kneser's conjecture, chromatic number, and homotopy},
  author={Lov{\'a}sz, L{\'a}szl{\'o}},
  journal={Journal of Combinatorial Theory, Series A},
  volume={25},
  number={3},
  pages={319--324},
  year={1978},
  publisher={Elsevier}
}

@article{matousek2002topological,
  title={Topological lower bounds for the chromatic number: A hierarchy},
  author={Matousek, Jiri and Ziegler, G{\"u}nter M},
  journal={arXiv preprint math/0208072},
  year={2002}
}

@article{csorba2004box,
  title={Box complexes, neighborhood complexes, and the chromatic number},
  author={Csorba, P{\'e}ter and Lange, Carsten and Schurr, Ingo and Wassmer, Arnold},
  journal={Journal of Combinatorial Theory, Series A},
  volume={108},
  number={1},
  pages={159--168},
  year={2004},
  publisher={Elsevier}
}

@article{simonyi2006local,
  title={Local chromatic number, {K}y {F}an's theorem, and circular colorings},
  author={Simonyi, G{\'a}bor and Tardos, G{\'a}bor},
  journal={Combinatorica},
  volume={26},
  number={5},
  pages={587--626},
  year={2006},
  publisher={Springer}
}

@article{simonyi2009local,
  title={Local chromatic number and distinguishing the strength of topological obstructions},
  author={Simonyi, G{\'a}bor and Tardos, G{\'a}bor and Vre{\'c}ica, Sini{\v{s}}a},
  journal={Transactions of the American Mathematical Society},
  volume={361},
  number={2},
  pages={889--908},
  year={2009}
}

@book{matousek2003borsuk,
  title={Using the {B}orsuk-{U}lam Theorem: Lectures on Topological Methods in Combinatorics and Geometry},
  author={Matoušek, Jiří},
  year={2003},
  publisher={Springer},
  address={Berlin/Heidelberg},
  series={Universitext},
  isbn={978-3-540-00362-5}
}

@book{rudin1987real,
  title={Real and complex analysis},
  author={Rudin, Walter},
  year={1987},
  publisher={McGraw-Hill, Inc.}
}

@misc{frick2026signrank,
      title={A $\mathbb{Z}_2$-Topological Framework for Sign-rank Lower Bounds}, 
      author={Florian Frick and Kaave Hosseini and Aliaksei Vasileuski},
      year={2026},
      eprint={2604.01510},
      archivePrefix={arXiv},
      primaryClass={math.CO},
 
}

@inproceedings{hatami2022lower,
  author =	{Hatami, Hamed and Hatami, Pooya and Pires, William and Tao, Ran and Zhao, Rosie},
  title =	{{Lower Bound Methods for Sign-Rank and Their Limitations}},
  booktitle =	{Approximation, Randomization, and Combinatorial Optimization. Algorithms and Techniques (APPROX/RANDOM 2022)},
  pages =	{22:1--22:24},
  series =	{Leibniz International Proceedings in Informatics (LIPIcs)},
  ISBN =	{978-3-95977-249-5},
  ISSN =	{1868-8969},
  year =	{2022},
  volume =	{245},
  publisher =	{Schloss Dagstuhl -- Leibniz-Zentrum f{\"u}r Informatik},
}

@article{rubinstein2022unlabelled,
  title={Unlabelled sample compression schemes for intersection-closed classes and extremal classes},
  author={Rubinstein, Joachim and Rubinstein, Benjamin},
  journal={Advances in Neural Information Processing Systems},
  volume={35},
  pages={13078--13090},
  year={2022}
}

@INPROCEEDINGS{BLM20,
  author={Bun, Mark and Livni, Roi and Moran, Shay},
  booktitle={IEEE 61st Annual Symposium on Foundations of Computer Science (FOCS)}, 
  title={An Equivalence Between Private Classification and Online Prediction}, 
  year={2020},
  pages={389-402},
}

@article{malliaris2022unstable,
  title={The unstable formula theorem revisited via algorithms},
  author={Malliaris, Maryanthe and Moran, Shay},
  journal={arXiv preprint arXiv:2212.05050},
  year={2022}
}

@inproceedings{bun2023stability,
  title={Stability is stable: Connections between replicability, privacy, and adaptive generalization},
  author={Bun, Mark and Gaboardi, Marco and Hopkins, Max and Impagliazzo, Russell and Lei, Rex and Pitassi, Toniann and Sivakumar, Satchit and Sorrell, Jessica},
  booktitle={Proceedings of the 55th Annual ACM Symposium on Theory of Computing},
  series = {STOC 2023}, 
  year = {2023},
  pages={520--527}
}

@article{karbasi2023replicability,
  title={Replicability in reinforcement learning},
  author={Karbasi, Amin and Velegkas, Grigoris and Yang, Lin and Zhou, Felix},
  journal={Advances in Neural Information Processing Systems},
  volume={36},
  pages={74702--74735},
  year={2023}
}

@inproceedings{esfandiari2023replicable,
  title={Replicable Bandits},
  author={Hossein Esfandiari and Alkis Kalavasis and Amin Karbasi and Andreas Krause and Vahab Mirrokni and Grigoris Velegkas},
  booktitle={The Eleventh International Conference on Learning Representations },
  year={2023},
 
}

@inproceedings{Esfandiarietal23,
  author = {Esfandiari, Hossein and Karbasi, Amin and Mirrokni, Vahab and Velegkas, Grigoris and Zhou, Felix},
  booktitle = {Advances in Neural Information Processing Systems},
  pages = {39277--39320},
  publisher = {Curran Associates, Inc.},
  title = {Replicable Clustering},
  volume = {36},
  year = {2023}
}

@article{moran2023bayesian,
  title={The {B}ayesian stability zoo},
  author={Moran, Shay and Schefler, Hilla and Shafer, Jonathan},
  journal={Advances in Neural Information Processing Systems},
  volume={36},
  pages={61725--61746},
  year={2023}
}

@article{li2022understanding,
  title={Understanding the eluder dimension},
  author={Li, Gene and Kamath, Pritish and Foster, Dylan J and Srebro, Nati},
  journal={Advances in Neural Information Processing Systems},
  volume={35},
  pages={23737--23750},
  year={2022}
}

@article{atminas2022classes,
  title={Classes of graphs without star forests and related graphs},
  author={Atminas, Aistis},
  journal={Discrete Mathematics},
  volume={345},
  number={12},
  pages={113089},
  year={2022},
  publisher={Elsevier}
}

@article{hanneke2015minimax,
  title={Minimax analysis of active learning},
  author={Hanneke, Steve and Yang, Liu},
  journal={The Journal of Machine Learning Research},
  volume={16},
  number={1},
  pages={3487--3602},
  year={2015},
  publisher={JMLR. org}
}

@inproceedings{eaton2024replicable,
  author = {Eaton, Eric and Hussing, Marcel and Kearns, Michael and Sorrell, Jessica},
  title = {Replicable reinforcement learning},
  year = {2023},
  publisher = {Curran Associates Inc.},
  booktitle = {Proceedings of the 37th International Conference on Neural Information Processing Systems},
  articleno = {667},
  numpages = {14},
  location = {New Orleans, LA, USA},
  series = {NeurIPS '23}
}

@inproceedings{kalavasis2024replicable,
  author = {Kalavasis, Alkis and Karbasi, Amin and Larsen, Kasper Green and Velegkas, Grigoris and Zhou, Felix},
  title = {Replicable learning of large-margin halfspaces},
  year = {2024},
  publisher = {JMLR.org},
  booktitle = {Proceedings of the 41st International Conference on Machine Learning},
  articleno = {919},
  numpages = {18},
  location = {Vienna, Austria},
  series = {ICML'24}
}

@article{illman_smooth_1978,
  title = {Smooth equivariant triangulations of {G}-manifolds for {G} a finite group},
  volume = {233},
  issn = {1432-1807},
  number = {3},
  journal = {Mathematische Annalen},
  author = {Illman, Sören},
  month = oct,
  year = {1978},
  pages = {199--220},
}

@inproceedings{alon1985geometrical,
  author    = {Noga Alon and
               Peter Frankl and
               Vojtech R{\"{o}}dl},
  title     = {Geometrical Realization of Set Systems and Probabilistic Communication Complexity},
  booktitle = {26th Annual Symposium on Foundations of Computer Science,  FOCS 1985},
  pages     = {277--280},
  publisher = {{IEEE} Computer Society},
  year      = {1985},
}

@inproceedings{moran2016labeled,
  title={Labeled compression schemes for extremal classes},
  author={Moran, Shay and Warmuth, Manfred K},
  booktitle={International Conference on Algorithmic Learning Theory},
  pages={34--49},
  year={2016},
  organization={Springer}
}

@article{paturi1986probabilistic,
  title={Probabilistic communication complexity},
  author={Paturi, Ramamohan and Simon, Janos},
  journal={Journal of Computer and System Sciences},
  volume={33},
  number={1},
  pages={106--123},
  year={1986},
  publisher={Elsevier}
}

@incollection {MR0200942,
    AUTHOR = {Thom, Ren\'{e}},
     TITLE = {Sur l'homologie des vari\'{e}t\'{e}s alg\'{e}briques r\'{e}elles},
 BOOKTITLE = {Differential and {C}ombinatorial {T}opology ({A} {S}ymposium
              in {H}onor of {M}arston {M}orse)},
     PAGES = {255--265},
 PUBLISHER = {Princeton Univ. Press, Princeton, N.J.},
      YEAR = {1965},
   MRCLASS = {57.31 (57.50)},
  MRNUMBER = {0200942},
MRREVIEWER = {R. Bott},
}

@article {MR226281,
    AUTHOR = {Warren, Hugh E.},
     TITLE = {Lower bounds for approximation by nonlinear manifolds},
   JOURNAL = {Trans. Amer. Math. Soc.},
  FJOURNAL = {Transactions of the American Mathematical Society},
    VOLUME = {133},
      YEAR = {1968},
     PAGES = {167--178},
      ISSN = {0002-9947},
   MRCLASS = {41.60},

}

@article {MR161339,
    AUTHOR = {Milnor, J.},
     TITLE = {On the {B}etti numbers of real varieties},
   JOURNAL = {Proc. Amer. Math. Soc.},
  FJOURNAL = {Proceedings of the American Mathematical Society},
    VOLUME = {15},
      YEAR = {1964},
     PAGES = {275--280},
      ISSN = {0002-9939},
   MRCLASS = {55.30 (57.50)},

}

@article {MR1964645,
    AUTHOR = {Forster, J\"{u}rgen},
     TITLE = {A linear lower bound on the unbounded error probabilistic
              communication complexity},
      NOTE = {Special issue on complexity, 2001 (Chicago, IL)},
   JOURNAL = {J. Comput. System Sci.},
  FJOURNAL = {Journal of Computer and System Sciences},
    VOLUME = {65},
      YEAR = {2002},
    NUMBER = {4},
     PAGES = {612--625},

}

@inproceedings{alon2016sign,
    title={Sign rank versus {VC} dimension},
    author={Alon, Noga and Moran, Shay and Yehudayoff, Amir},
    booktitle={Conference on Learning Theory},
    pages={47--80},
    year={2016},
    organization={PMLR}
}

@article{Alon_22_private_and_online,
    AUTHOR = {Alon, Noga and Bun, Mark and Livni, Roi and Malliaris, Maryanthe and Moran, Shay},
    TITLE = {Private and online learnability are equivalent},
    JOURNAL = {J. ACM},
    FJOURNAL = {Journal of the ACM},
    VOLUME = {69},
    YEAR = {2022},
    NUMBER = {4},
    PAGES = {Art. 28, 34}
}

@inproceedings{hanneke2024star,
  title={The star number and eluder dimension: Elementary observations about the dimensions of disagreement},
  author={Hanneke, Steve},
  booktitle={The Thirty Seventh Annual Conference on Learning Theory},
  pages={2308--2359},
  year={2024},
  organization={PMLR}
}

@inproceedings{BGHH2025stabilitylistreplicabilityagnosticlearners,
    author = {Ari Blondal and Shan Gao and Hamed Hatami and Pooya Hatami},
    title = {Stability and List-Replicability for Agnostic Learners},
    year = {2025},
    booktitle = {Conference on Learning Theory},
    pages = {380-–400},
    organization={PMLR}
}

@inproceedings{blondal2026borsuk,
  title={Borsuk-{U}lam and replicable learning of large-margin halfspaces},
  author={Blondal, Ari and Hatami, Hamed and Hatami, Pooya and Lalov, Chavdar and Tretiak, Sivan},
  booktitle={Proceedings of the 58th Annual ACM Symposium on Theory of Computing},
  pages={529--540},
  year={2026}
}

@InProceedings{blondal2026simplicial,
    author =	{Blondal, Ari and Hatami, Hamed and Hatami, Pooya and Lalov, Chavdar and Tretiak, Sivan},
    title =	{{Simplicial Covering Dimension of Extremal Concept Classes}},
    booktitle =	{17th Innovations in Theoretical Computer Science Conference (ITCS 2026)},
    pages =	{22:1--22:24},
    series =	{Leibniz International Proceedings in Informatics (LIPIcs)},
    ISBN =	{978-3-95977-410-9},
    ISSN =	{1868-8969},
    year =	{2026},
    volume =	{362},
    editor =	{Saraf, Shubhangi},
    publisher =	{Schloss Dagstuhl -- Leibniz-Zentrum f{\"u}r Informatik},
    address =	{Dagstuhl, Germany},
}

@inproceedings{blondal2026tightlistreplicabilitybounds,
    title={Tight list replicability bounds via a novel sphere covering theorem}, 
    author={Ari Blondal and Hamed Hatami and Pooya Hatami and Chavdar Lalov and Sivan Tretiak},
  booktitle = 	 {Proceedings of Thirty Ninth Conference on Learning Theory},
  pages = 	 {791--807},
  year = 	 {2026},
  editor = 	 {Hanneke, Steve and Lattimore, Tor},
  volume = 	 {336},
  series = 	 {Proceedings of Machine Learning Research},
  month = 	 {29 Jun--03 Jul},
  publisher =    {PMLR},
}

@inproceedings{chase2023replicabilitystabilitylearning,
    author = {Chase, Zachary and Moran, Shay and Yehudayoff, Amir },
    booktitle = {IEEE 64th Annual Symposium on Foundations of Computer Science (FOCS) },
    title = {{Stability and Replicability in Learning}},
    year = {2023},
    volume = {},
    ISSN = {},
    pages = {2430-2439},
    publisher = {IEEE Computer Society},
}

@inproceedings{localborsukulam,
    author = {Chase, Zachary and Chornomaz, Bogdan and Moran, Shay and Yehudayoff, Amir},
    title = {Local {B}orsuk-{U}lam, Stability, and Replicability},
    year = {2024},
    isbn = {9798400703836},
    publisher = {Association for Computing Machinery},
    address = {New York, NY, USA},
    booktitle = {Proceedings of the 56th Annual ACM Symposium on Theory of Computing},
    pages = {1769–1780},
    numpages = {12},
    location = {Vancouver, BC, Canada},
    series = {STOC 2024}
}

@InProceedings{chornomaz2025spherical,
    title = 	 {Spherical Dimension},
    author =       {Chornomaz, Bogdan and Moran, Shay and Waknine, Tom},
    booktitle = 	 {Proceedings of Thirty Eighth Conference on Learning Theory},
    pages = 	 {1259--1313},
    year = 	 {2025},
    editor = 	 {Haghtalab, Nika and Moitra, Ankur},
    volume = 	 {291},
    series = 	 {Proceedings of Machine Learning Research},
    month = 	 {30 Jun--04 Jul},
    publisher =    {PMLR},
 
}

@inproceedings{dixon2023listandcertificate,
    author = {Dixon, Peter and Pavan, A. and Vander Woude, Jason and Vinodchandran, N. V.},
    title = {List and certificate complexities in replicable learning},
    year = {2023},
    publisher = {Curran Associates Inc.},
    address = {Red Hook, NY, USA},
    booktitle = {Proceedings of the 37th International Conference on Neural Information Processing Systems},
    articleno = {1342},
    numpages = {23},
    location = {New Orleans, LA, USA},
    series = {NeurIPS '23}
}

@inproceedings{chase2024dual,
  title={Dual {VC} dimension obstructs sample compression by embeddings},
  author={Chase, Zachary and Chornomaz, Bogdan and Hanneke, Steve and Moran, Shay and Yehudayoff, Amir},
  booktitle={The Thirty Seventh Annual Conference on Learning Theory},
  pages={923--946},
  year={2024},
  organization={PMLR}
}

@inproceedings{kalavasis2023statistical,
    author = {Kalavasis, Alkis and Karbasi, Amin and Moran, Shay and Velegkas, Grigoris},
    title = {Statistical indistinguishability of learning algorithms},
    year = {2023},
    publisher = {JMLR.org},
    booktitle = {Proceedings of the 40th International Conference on Machine Learning},
    articleno = {636},
    numpages = {37},
    location = {Honolulu, Hawaii, USA},
    series = {ICML'23}
}

@article{helmbold1990learning,
  title={Learning nested differences of intersection-closed concept classes},
  author={Helmbold, David and Sloan, Robert and Warmuth, Manfred K},
  journal={Machine Learning},
  volume={5},
  number={2},
  pages={165--196},
  year={1990},
  publisher={Springer}
}

@article{haussler1994predicting,
  title={Predicting $\{$0, 1$\}$-functions on randomly drawn points},
  author={Haussler, David and Littlestone, Nick and Warmuth, Manfred K},
  journal={Information and Computation},
  volume={115},
  number={2},
  pages={248--292},
  year={1994},
  publisher={Elsevier}
}

@article{floyd1995sample,
  title={Sample compression, learnability, and the {V}apnik-{C}hervonenkis dimension},
  author={Floyd, Sally and Warmuth, Manfred},
  journal={Machine Learning},
  volume={21},
  number={3},
  pages={269--304},
  year={1995},
  publisher={Springer}
}

@article{bendavid1998self,
  title={Self-directed learning and its relation to the {VC}-dimension and to teacher-directed learning},
  author={Ben-David, Shai and Eiron, Nadav},
  journal={Machine Learning},
  volume={33},
  number={1},
  pages={87--104},
  year={1998},
  publisher={Springer}
}

@inproceedings{kuhlmann1999teaching,
  title={On teaching and learning intersection-closed concept classes},
  author={Kuhlmann, Christian},
  booktitle={European Conference on Computational Learning Theory},
  pages={168--182},
  year={1999},
  organization={Springer}
}

@article{dalmau2003learnability,
  title={Learnability of quantified formulas},
  author={Dalmau, Victor and Jeavons, Peter},
  journal={Theoretical Computer Science},
  volume={306},
  number={1-3},
  pages={485--511},
  year={2003},
  publisher={Elsevier}
}

@article{auer2007new,
  title={A new {PAC} bound for intersection-closed concept classes},
  author={Auer, Peter and Ortner, Ronald},
  journal={Machine Learning},
  volume={66},
  number={2},
  pages={151--163},
  year={2007},
  publisher={Springer}
}

@article{darnstadt2015optimal,
  title={The optimal {PAC} bound for intersection-closed concept classes},
  author={Darnst{\"a}dt, Malte},
  journal={Information Processing Letters},
  volume={115},
  number={4},
  pages={458--461},
  year={2015},
  publisher={Elsevier}
}

@article{hanneke2016refined,
  title={Refined error bounds for several learning algorithms},
  author={Hanneke, Steve},
  journal={Journal of Machine Learning Research},
  volume={17},
  number={135},
  pages={1--55},
  year={2016}
}

@inproceedings{blum2021robust,
  title={Robust learning under clean-label attack},
  author={Blum, Avrim and Hanneke, Steve and Qian, Jian and Shao, Han},
  booktitle={Conference on Learning Theory},
  pages={591--634},
  year={2021},
  organization={PMLR}
}

@article{chalopin2022unlabeled,
  title={Unlabeled sample compression schemes and corner peelings for ample and maximum classes},
  author={Chalopin, J{\'e}r{\'e}mie and Chepoi, Victor and Moran, Shay and Warmuth, Manfred K},
  journal={Journal of Computer and System Sciences},
  volume={127},
  pages={1--28},
  year={2022},
  publisher={Elsevier}
}

\end{document}